\documentclass{article}


\usepackage[final,nonatbib]{neurips_2024}




\usepackage[utf8]{inputenc} 
\usepackage[T1]{fontenc}    
\usepackage{hyperref}       
\usepackage{url}            
\usepackage{booktabs}       
\usepackage{amsfonts}       
\usepackage{nicefrac}       
\usepackage{microtype}      
\usepackage{xcolor}         

\usepackage{amsmath}
\usepackage{amsthm}
\usepackage{bm}
\usepackage{bbm}
\usepackage{natbib}
\usepackage{graphicx}
\usepackage{subfigure}
\usepackage{threeparttable}

\bibliographystyle{plainnat}
\bibpunct{(}{)}{;}{a}{,}{,}

\newcommand{\nn}{\textsc{nn}}

\DeclareMathOperator*{\argmin}{arg\,min}
 
\theoremstyle{plain}
\newtheorem{theorem}{Theorem}

\title{LoRANN: Low-Rank Matrix Factorization for Approximate Nearest Neighbor Search}

%

\author{%
  Elias J\"a\"asaari\,\,\footnotemark[2] \quad Ville Hyv\"onen\,\footnotemark[3] \quad Teemu Roos\,\footnotemark[2] \\
   \textnormal{\footnotemark[2]\,\,Department of Computer Science, University of Helsinki} \\ \textnormal{\footnotemark[3]\,\,Department of Computer Science, Aalto University} \\
  \texttt{\{elias.jaasaari,teemu.roos\}@helsinki.fi; ville.2.hyvonen@aalto.fi}
}

\begin{document}

\maketitle

\begin{abstract}
Approximate nearest neighbor (ANN) search is a key component in many modern machine learning pipelines; recent use cases include retrieval-augmented generation (RAG) and vector databases. Clustering-based ANN algorithms, that use score computation methods based on product quantization (PQ), are often used in industrial-scale applications due to their scalability and suitability for distributed and disk-based implementations. However, they have slower query times than the leading graph-based ANN algorithms. In this work, we propose a new supervised score computation method based on the observation that inner product approximation is a multivariate (multi-output) regression problem that can be solved efficiently by reduced-rank regression. Our experiments show that on modern high-dimensional data sets, the proposed reduced-rank regression (RRR) method is superior to PQ in both query latency and memory usage. We also introduce \texttt{LoRANN}\footnote{\url{https://github.com/ejaasaari/lorann}}, a clustering-based ANN library that leverages the proposed score computation method. \texttt{LoRANN} is competitive with the leading graph-based algorithms and outperforms the state-of-the-art GPU ANN methods on high-dimensional data sets.

\end{abstract}

\section{Introduction}
\label{sec:intro}

In modern machine learning applications, data is often stored as embeddings, i.e., as vectors in a high-dimensional vector space where representations of semantically similar items are close to each other. Consequently, similarity search in high-dimensional vector spaces is a key algorithmic primitive used in many pipelines, such as semantic search engines and recommendation systems. Since the data sets are usually both large and high-dimensional, \textit{approximate} nearest neighbor (ANN) search is deployed to speed up similarity search in many applications~\citep{li2019approximate}. 

Recent use cases of ANN search include retrieval-augmented generation (RAG)~\citep{lewis2020retrieval,guu2020retrieval,borgeaud2022improving,shi2024replug} and approximate attention computation in Transformer-based architectures~\citep{kitaev2020reformer,vyas2020fast,roy2021efficient}. 
ANN search is also a key operation in vector databases that are used to store embeddings for industrial-scale applications~\citep[see, e.g.,][]{wang2021milvus,guo2022manu,pan2024vector}. 

The state-of-the-art methods for ANN search can be classified into clustering-based and graph-based algorithms~\citep[for a recent survey, see][]{bruch2024foundations}. In the comprehensive ANN benchmark~\citep{aumuller2020ann}, the leading graph algorithms HNSW \citep{malkov2018efficient} and NGT~\citep{iwasaki2018optimization} have faster query times than clustering-based algorithms. However, clustering-based algorithms are often used in industrial-scale applications~\citep[see, e.g.,][]{chen2021spann,douze2024faiss} due to their smaller memory footprints and faster index construction times. They are also suitable for distributed implementations and hybrid solutions that use persistent storage such as SSDs or blob storage~\citep[]{chen2021spann,gottesburen2024unleashing}. 

The key components of clustering-based algorithms are \textit{clustering} and \textit{score computation}. In the indexing phase, the corpus, i.e., the data set from which the nearest neighbors are searched, is partitioned via clustering. In the query phase, $w$ clusters are selected and the corpus points that belong to the selected clusters are scored. The points with the highest scores are selected into a candidate set that can be further re-ranked. The state-of-the-art clustering-based algorithms~\citep{jegou2011product,guo2020accelerating,sun2023soar} use variants of \textit{product quantization} (PQ)~\citep{jegou2011product} with highly optimized implementations~\citep[see, e.g.,][]{andre2015cache} for score computation.

In this article, we propose a supervised score computation method that improves the query latency of clustering-based ANN methods, making them competitive with the leading graph algorithms. Our key observation is that estimating the dissimilarities between the query point and the cluster points is a multivariate (multi-output) regression problem. For the most common dissimilarity measures, it is sufficient to estimate the inner products between the query point and the cluster points, and the ordinary least squares (OLS) estimate is the exact solution for this regression problem. The proposed method approximates the OLS solution by reduced-rank regression~\citep{izenman1975reduced}. We further approximate the reduced-rank regression estimates using 8-bit integer computations, improving query latency and memory consumption. Reduced-rank regression (RRR) is a simpler method than PQ, and our experimental results show that it is faster at any given recall level and memory usage. 

To make our work available for practical applications of ANN search, we introduce \texttt{LoRANN}, a clustering-based ANN library that leverages the proposed score computation method. Since the low memory usage and the simple computational structure of reduced-rank regression make it well-suited for GPUs, \texttt{LoRANN} also includes a GPU implementation of the proposed method.

In summary, our contributions are:
\begin{itemize}
    \item We propose reduced-rank regression (RRR) as a new supervised score computation method for clustering-based ANN search (see Section~\ref{sec:regression_formulation}). 
    \item We verify experimentally that RRR outperforms PQ both at the optimal hyperparameters (Section~\ref{sec:same_clustering}) and at fixed memory consumption (Section~\ref{sec:memory_usage}), and that it naturally adapts to the query distribution in the out-of-distribution (OOD) setting (Section~\ref{sec:ood_queries}).
    \item We introduce \texttt{LoRANN}, a clustering-based ANN library that contains efficient CPU and GPU implementations of the proposed score computation method (Section~\ref{sec:implementation}).
    \item We show that \texttt{LoRANN} outperforms the leading clustering-based libraries Faiss \citep{douze2024faiss} and ScaNN \citep{guo2020accelerating}, and has faster query times than the leading graph-based library GLASS at recall levels under 90\% on most data sets (Section~\ref{sec:experiments-cpu}). \texttt{LoRANN} outperforms the SOTA GPU methods on high-dimensional data (Section~\ref{sec:experiments-gpu}).
\end{itemize}

\section{Background}
\label{sec:background}

In this section, we first review the notation of approximate nearest neighbor (ANN) search and then describe the standard structure of clustering-based ANN algorithms.

\subsection{ANN search}
\label{sec:ann_search}

Let $\mathbf{x} \in \mathbb{R}^d$ be a query point, and let $\{\mathbf{c}_j\}_{j=1}^m \subset \mathbb{R}^d$ be the \textit{corpus}, i.e., the set of points from which the nearest neighbors are retrieved. The $k$ nearest neighbors of $\mathbf{x}$ are defined as 
\begin{equation}
\label{eq:nn_definition}
\nn_k(\mathbf{x};\{\mathbf{c}_j\}_{j=1}^m, d) := \{j \in [m] \,:\, d(\mathbf{x},\mathbf{c}_j) \leq d(\mathbf{x}, \mathbf{c}_{(k)})\},
\end{equation}
where $d: \mathbb{R}^d \rightarrow \mathbb{R}$ is a dissimilarity measure, $\mathbf{c}_{(1)}, \dots, \mathbf{c}_{(m)}$ denote the corpus points that are ordered in ascending order w.r.t. their dissimilarity to the query point $\mathbf{x}$, and $[m] := \{1, \dots, m\}$.

Commonly used dissimilarity measures are the Euclidean distance $d(\mathbf{a},\mathbf{b}) = \|\mathbf{a} - \mathbf{b}\|_2$, the (negative) inner product $d(\mathbf{a},\mathbf{b}) = -\langle \mathbf{a},\mathbf{b} \rangle$, and the cosine (angular) distance $d(\mathbf{a},\mathbf{b}) = 1 -\langle \mathbf{a} / \|\mathbf{a}\|_2, \mathbf{b} / \|\mathbf{b}\|_2 \rangle$. The special case where the dissimilarity measure is the negative inner product is often called maximum inner product search (MIPS) \citep[see, e.g.,][]{guo2020accelerating,Lu2023knowledge,zhao2023fargo}. In ANN search, the exact solution $\nn_k(\mathbf{x};\{\mathbf{c}_j\}_{j=1}^m, d)$ is approximated.

The effectiveness of ANN algorithms is typically measured by \textit{recall}, i.e., the fraction of the true $k$ nearest neighbors returned by the algorithm. The efficiency is measured by query latency or, equivalently, by queries per second (QPS)~\citep[see, e.g.,][]{li2019approximate,aumuller2020ann}. 

\subsection{Clustering-based ANN search}
\label{sec:clustering_based}

Partitioning-based ANN algorithms, such as tree-based \citep[e.g.,][]{muja2014scalable,dasgupta2015randomized,jaasaari2019efficient} and hashing-based algorithms \citep[e.g.,][]{charikar2002similarity,weiss2008spectral,liu2011hashing}, build an index by partitioning the corpus into $L$ elements. In the query phase, they use a routing function $\tau:\mathbb{R}^d \rightarrow [L]^w$ to assign the query point into $w$ partition elements. 

Clustering-based ANN algorithms \citep[see, e.g.,][Chapter~7]{bruch2024foundations} are partitioning-based ANN algorithms that partition the corpus via clustering.  ANN indexes based on clustering are also often called \textit{inverted file} (IVF) indexes \citep{jegou2011product}. The most commonly used clustering method is $k$-means clustering, specifically standard $k$-means when the dissimilarity measure $d$ is the Euclidean distance, and spherical $k$-means~\citep{dhillon2001concept} when $d$ is the inner product or the cosine distance. While there exists recent exploratory work on alternative query routing methods \citep{gottesburen2024unleashing,vecchiato2024learning,bruch2024optimistic}, the most common method is centroid-based routing where, for the set $\{\bm{\mu}_l\}_{l = 1}^L$ of cluster centroids, 
\[
\tau(\mathbf{x}) = \nn_w(\mathbf{x};\{\bm{\mu}_l\}_{l=1}^L, d),
\]
i.e., the $w$ clusters whose centroids are closest to the query point are selected. We follow this standard practice by using $k$-means clustering with centroid-based routing. However, the proposed score computation method can be combined with any partitioning and routing method.

After routing, the corpus points that belong to the selected $w$ clusters are scored (see Section~\ref{sec:related_work} for a discussion of score computation methods), and the $t$ highest scoring points are selected into the candidate set. This candidate set can be re-ranked by evaluating the true dissimilarities between $\mathbf{x}$ and the candidate set points. Finally, the $k$ most similar points are returned as the approximate $k$-nn. 

\section{Reduced-rank regression}
\label{sec:regression_formulation}

In this section, we derive the proposed supervised score computation method. First, we formulate dissimilarity approximation as a multivariate regression problem. We then show how the exact OLS solution to this problem can be approximated by reduced-rank regression (RRR). Finally, we show how RRR can be implemented efficiently using 8-bit integer vector-matrix multiplications.

\subsection{Dissimilarity approximation as a multivariate regression problem}
\label{sec:multivariate}

We consider the task of approximating the dissimilarities $d(\mathbf{x}, \mathbf{c}_j)$ between the query point $\mathbf{x}$ and the corpus points $\mathbf{c}_j$ that belong to the $l$th cluster. Denote the set of the indices of these corpus points by $I_l$, their number by $m_l := |I_l|$, and the matrix containing them as rows by $\mathbf{C}_l \in \mathbb{R}^{m_l \times d}$. In what follows, to avoid cluttering the notation, we drop the subscript $l$ denoting the cluster from matrices, e.g., we denote $\mathbf{C}_l$ by $\mathbf{C}$. We also assume, w.l.o.g., that the corpus is indexed so that $I_l = \{1, \dots, m_l\}$.
This task can now be formulated as a multivariate regression problem where the output is defined as a $1 \times m_l$ matrix $
\mathbf{y} = \begin{bmatrix}
    y_1 \dots y_{m_l}
\end{bmatrix}$, where $y_j = d(\mathbf{x},\mathbf{c}_j)$ for each $j=1, \dots, m_l$. 

We consider the cases where $d$ is the (negative) inner product, the Euclidean distance, or the cosine (angular) distance. In all three cases, it is sufficient to estimate the inner products. For Euclidean distance, $\argmin_{j \in I_l} \|\mathbf{x} - \mathbf{c}_j\|_2 = \argmin_{j \in I_l} (-2\mathbf{x}^T\mathbf{c}_j + \|\mathbf{c}_j\|_2^2)$, where the norms $\|\mathbf{c}_j\|_2$ can be precomputed. For cosine distance, $\argmin_{j \in I_l} (1 - \cos(\mathbf{x}, \mathbf{c}_j)) = \argmin_{j \in I_l} (-\mathbf{x}^T{\bf c}_j)$ if the corpus points are normalized to have unit norm. 

\subsection{Reduced-rank regression solution}
\label{sec:RRR}
 
We approximate the exact solution $\mathbf{y} = \mathbf{x}^T \mathbf{C}^T$ of the regression problem defined in the previous section by a low-rank approximation. We assume the standard supervised learning setting, i.e., that we have a sample $\{\mathbf{x}_i\}_{i=1}^n$ from the query distribution $\mathcal{Q}$ (the corpus can also be used as the training set if no separate training set is available). To train the $l$th model, we use all the training set points that are routed into the $l$th cluster. When the standard centroid-based routing is used, these are the training set points that have $\bm{\mu}_l$, i.e., the centroid of the $l$th cluster, among their $w$ closest centroids. Denote the set of indices of these training set points by $J_l := \{i \in [n] \,:\, l \in \tau(\mathbf{x}_i)\}$, and their number by $n_l := |J_l|$. The output values of the training set of the $l$th model are given by $\mathbf{Y}: = \mathbf{X}\mathbf{C}^T \in  \mathbb{R}^{n_l\times m_l}$, where we denote by $\mathbf{X} \in \mathbb{R}^{n_l \times d}$ the matrix containing the training set points $\{\mathbf{x}_i\}_{i \in J_l}$ as rows. 

To approximate the dissimilarities between the query point $\mathbf{x}$ and the cluster points $\{\mathbf{c}_j\}_{j \in I_l}$, we consider the linear model $\mathbf{x}^T \bm{\beta}$, where $\bm{\beta} \in \mathbb{R}^{d \times m_l}$ is a matrix containing the parameters of the model, and minimize the mean squared error $\mathbb{E}_{\mathbf{x} \sim \mathcal{Q}} [\|\mathbf{y} - \mathbf{x}^T \bm{\beta}\|_2^2 \, \mathbb{I}\{{l \in \tau(\mathbf{x})}\}]$ (the indicator function selects the query points that routed into the $l$th cluster). The unconstrained least squares solution $\bm{\hat{\beta}}_{\text{OLS}} = \mathbf{C}^T$ reproduces the exact inner products $\mathbf{y} = \mathbf{x}^T\mathbf{C}^T$. In order to reduce the computational complexity of evaluating the model predictions, we constrain the rank of the parameter matrix:  $\text{rank}(\bm{\beta}) \leq r < \min (d,m_l)$. Under this constraint, the parameter matrix can be written using a low-rank matrix factorization $\bm{\beta} = \mathbf{A}\mathbf{B}$, where $\mathbf{A} \in \mathbb{R}^{d \times r}$ and $\mathbf{B} \in \mathbb{R}^{r \times m_l}$, and, consequently, the model predictions $\mathbf{\hat{y}} = (\mathbf{x}^T\mathbf{A})\mathbf{B}$ can be computed with $\Theta(r(d + m_l))$ operations. When the rank $r$ is sufficiently low, this is significantly faster than computing the exact inner products which requires $\Theta(dm_l)$ operations. Our experiments (Section~\ref{sec:experiments}) indicate that fixing this hyperparameter to $r = 32$ works well with a wide range of data sets encompassing dimensionalities between 128 and 1536.

The optimal low-rank solution can be found by minimizing the training loss
\[
\bm{\hat{\beta}}_{\text{RRR}} 
= \underset{\bm{\beta}\ :\ \text{rank}(\bm{\beta})\leq r}{\argmin} \,\,  \|\mathbf{Y} - \mathbf{X} \bm{\beta}\|_F^2,
\]
where $\|\cdot\|_F$ is the Frobenius norm. This is the well-known \emph{reduced-rank regression} problem \citep{izenman1975reduced}. Denote the singular value decomposition (SVD) of $\mathbf{Y}$ as $\mathbf{Y} = \mathbf{U}\bm{\Sigma}\mathbf{V}^T$,
where $\mathbf{\Sigma}$ is a non-negative diagonal matrix and $\mathbf{U}$ and $\mathbf{V}$ are orthonormal matrices. The standard reduced-rank regression solution is  $\bm{\hat{\beta}}_{\text{RRR}} = \bm{\hat{\beta}}_{\text{OLS}} \mathbf{V}_r \mathbf{V}_r^T = \mathbf{C}^T \mathbf{V}_r \mathbf{V}_r^T = \mathbf{A}\mathbf{B}$,
where $\mathbf{V}_r \in \mathbb{R}^{m_l \times r}$ denotes the matrix that contains the first $r$ columns of $\mathbf{V}$ (i.e., the first $r$ right singular vectors of the least squares fit $\mathbf{Y} = \mathbf{X}\mathbf{C}^T$)
, $\mathbf{A} := \mathbf{C}^T \mathbf{V}_r$, and $\mathbf{B} := \mathbf{V}_r^T$. In practice, we use a fast randomized algorithm~\citep{halko2011finding} to compute only $\mathbf{V}_r$ instead of the full SVD. Observe that the reduced-rank regression solution is different from the most obvious low-rank matrix factorization of the OLS solution computed via an SVD of the matrix $\mathbf{C}$ (see Appendix \ref{sec:geometric_intuition}).

\subsection{8-bit quantization}
\label{sec:quantization}

The simple computational structure of the reduced-rank regression solution enables us to further improve its query latency and memory consumption by using integer quantization. For each cluster, we quantize the matrices $\mathbf{A}$ and $\mathbf{B}$ to 8-bit integer precision. By also quantizing the query vector $\mathbf{x}$, the model prediction $\mathbf{\hat{y}} = \mathbf{x}^T \bm{\hat{\beta}}_{\text{RRR}} = (\mathbf{x}^T\mathbf{A})\mathbf{B}$ can be computed efficiently in two 8-bit integer vector-matrix products. We use \emph{absmax quantization}~\citep[e.g.,][]{dettmers2022gpt3}, where the elements of a vector $\mathbf{x}$ are scaled to the range $[-127, 127]$ by multiplying with a constant $c_{\mathbf{x}}$ such that $\mathbf{x}_{i8} = \left\lfloor (127 / \|\mathbf{x}_{f32}\|_{\infty}) \cdot \mathbf{x}_{f32}{} \right\rceil = \lfloor c_{\mathbf{x}} \mathbf{x}_{f32} \rceil$,
where $\lfloor \cdot \rceil$ denotes rounding to the nearest integer.

We quantize the matrices ${\bf A}$ and ${\bf B}$ by applying absmax quantization to each column of the given matrix, resulting in vectors ${\bf c}_{\bf A}$ and ${\bf c}_{\bf B}$ of scaling constants. We can then recover a 32-bit floating-point approximation to the vector-matrix product ${\bf r} = \mathbf{x}^T{\bf A}$ with
\[ {\bf r}_{f32} \approx \frac{1}{c_{\mathbf{x}}} {\bf r}_{i32} \oslash {\bf c}_{{\bf A}} =: {\bf s} \odot {\bf r}_{i32} = {\bf s} \odot  \mathbf{x}_{i8}^T{\bf A}_{i8} = {\bf s} \odot  Q(\mathbf{x})^TQ({\bf A}_{f32}), \]
where $Q(\cdot)$ denotes absmax quantization, and $\oslash$ and $\odot$ denote element-wise division and multiplication, respectively.  To compute $\mathbf{\hat{y}} = \mathbf{r}^T{\bf B}$ in the same fashion, we can first re-quantize ${\bf r}$.

To ensure minimal loss of precision from the quantization, we rotate $\mathbf{x}$ before quantization by multiplying $\mathbf{x}$ with a random rotation matrix; this spreads the variance among the dimensions of $\mathbf{x}$. Similarly, we rotate the vector ${\bf r}$ resulting from the first product ${\bf r} = \mathbf{x}^T{\bf A}$ before re-quantization. Since $\mathbf{A} = \mathbf{C}^T \mathbf{V}_r$ and $\mathbf{B} = \mathbf{V}_r^T$, we can rotate ${\bf r}$ by rotating ${\bf V}_{r}$ beforehand at no extra cost.

\paragraph{Memory usage} Storing each matrix $\mathbf{A}_{i8} \in [\mathbb{Z}]_{256}^{d \times r}$ takes $dr$ bytes and each matrix $\mathbf{B}_{i8} \in [\mathbb{Z}]_{256}^{r \times {m_l}}$ takes $rm_l$ bytes. Thus in a clustering-based ANN index with $L$ clusters and $m$ corpus points, the total memory consumption of RRR is of order $Ldr + rm$ bytes. In our experiments (Section~\ref{sec:experiments}), we use $r = 32$ for all data sets, while $L$ is typically of order $\sqrt{m}$. 

\section{LoRANN}
\label{sec:implementation}

In this section, we describe the additional implementation details of \texttt{LoRANN}, an open-source library that combines the standard template of clustering-based ANN search described in Section~\ref{sec:clustering_based} with the score computation method described in Section~\ref{sec:regression_formulation}. Using dimensionality reduction is particularly efficient for RRR (Section~\ref{sec:dimensionality reduction}) and works well with 8-bit integer quantization (Section~\ref{sec:quant_implementation}). Finally, we describe the GPU implementation of \texttt{LoRANN} (Section~\ref{sec:GPU}).

\subsection{Dimensionality reduction}
\label{sec:dimensionality reduction}

With a moderate-sized corpus and high-dimensional data, computing the first vector-matrix product of the model prediction $\mathbf{\hat{y}} = (\mathbf{x}^T\mathbf{A})\mathbf{B}$ can be more expensive than the second. In \texttt{LoRANN}, we further approximate the product by first projecting the query into a lower-dimensional space. We use the projection matrix $\mathbf{W}_s \in \mathbb{R}^{d\times s}$ whose columns are the first $s$ eigenvectors of $\mathbf{X}_{\text{global}}^T\mathbf{X}_{\text{global}}$, where $\mathbf{X}_{\text{global}} \in \mathbb{R}^{n \times d}$ is the matrix containing the training set points $\{\mathbf{x}_i\}_{i=1}^n$. To estimate the reduced-rank regression models, we use the $s$-dimensional approximations $\mathbf{\tilde{x}}_i = \mathbf{W}_s^T\mathbf{x}_i$ as inputs, but the true inner products $\mathbf{x}_i^T \mathbf{c}_j$ as outputs. In this case, the reduced-rank regression estimate of the $l$th model is $\bm{\hat{\beta}}_{\text{RRR}} = \bm{\hat{\beta}}_{\text{OLS}} \mathbf{V}_r \mathbf{V}_r^T =(\mathbf{X}\mathbf{W}_s)^\dagger \mathbf{Y} \mathbf{V}_r \mathbf{V}_r^T \in \mathbb{R}^{s \times m_l}$, where $\bm{\hat{\beta}}_{\text{OLS}} := (\mathbf{X}\mathbf{W}_s)^\dagger \mathbf{Y}$ is the full-rank solution, and $\mathbf{V}_r \in \mathbb{R}^{m_l \times r}$ is the matrix whose columns are the first $r$ right singular vectors of $\mathbf{Y}$. Thus, now $\mathbf{A} := (\mathbf{X}\mathbf{W}_s)^\dagger \mathbf{Y} \mathbf{V}_r \in \mathbb{R}^{s \times r}$ and $\mathbf{B} := \mathbf{V}_r^T \in \mathbb{R}^{r \times m_l}$.

We observe that computing query-to-centroid distances in the $s$-dimensional space yields a minor performance improvement. In the indexing phase, we perform $k$-means clustering using $\mathbf{\tilde{c}}_j = \mathbf{W}_s^T\mathbf{c}_j$. In the query phase, the $s$-dimensional approximation of the query point, 
$\mathbf{\tilde{x}} = \mathbf{W}_s^T \mathbf{x}$, is used to compute the distances to the cluster centroids and the predictions $\mathbf{\hat{y}} = \mathbf{\tilde{x}}^T \bm{\hat{\beta}}_{\text{RRR}}$. The original $d$-dimensional query point $\mathbf{x}$ is used for the dissimilarity evaluations in the final re-ranking step.

\subsection{Quantization implementation}
\label{sec:quant_implementation}

The dimensionality reduction works particularly well with the 8-bit integer quantization described in Section~\ref{sec:quantization}. After dimensionality reduction, the first component of $\mathbf{\tilde{x}}$ corresponds to the principal axis. Thus, to further reduce the precision lost by quantization, we employ a mixed-precision decomposition by not quantizing the first component of $\mathbf{\tilde{x}}$  and the first row of both ${\bf A}$ and ${\bf B}$. Moreover, by pre-multiplying the projection matrix $\mathbf{W}_s$ with a random rotation matrix, we rotate the query point $\mathbf{\tilde{x}}$ at no extra cost before quantization. For re-quantizing $\mathbf{r} = \mathbf{\tilde{x}}^T\mathbf{A}$, since $\mathbf{A} = (\mathbf{X}\mathbf{W}_s)^\dagger \mathbf{Y} \mathbf{V}_r$ and $\mathbf{B} = \mathbf{V}_r^T$, we can again randomly rotate $\mathbf{V}_r$ beforehand at no extra cost.

We compute the 8-bit vector-matrix products ${\bf r}_{i32} = \mathbf{\tilde{x}}_{i8}^T{\bf A}_{i8}$ and $\mathbf{\hat{y}}_{i32} = \mathbf{r}_{i8}^T{\bf B}_{i8}$ efficiently on modern CPUs using VPDPBUSD instructions in the AVX-512 VNNI instruction set.\footnote{\url{https://en.wikichip.org/wiki/x86/avx512_vnni}} Since a VPDPBUSD instruction computes dot products of signed 8-bit integer vectors and unsigned 8-bit integer vectors, we store $\mathbf{A}_{i8}$ and $\mathbf{B}_{i8}$ as unsigned 8-bit integer matrices $\mathbf{A}'_{i8} = \mathbf{A}_{i8} + 128 \cdot \mathbf{1}_{s \times r} \in [\mathbb{Z}]_{256}^{d \times r}$ and $\mathbf{B}'_{i8} = \mathbf{B}_{i8} + 128 \cdot \mathbf{1}_{r \times m} \in [\mathbb{Z}]_{256}^{r \times m}$, and compute
$ {\bf r}_{i32} = \mathbf{\tilde{x}}_{i8}^T\mathbf{A}_{i8} = \mathbf{\tilde{x}}_{i8}^T\mathbf{A}'_{i8} - 128 \cdot \mathbf{\tilde{x}}_{i8}^T\mathbf{1}_{s \times r}$. The dimensionality reduction lowers the memory usage of \texttt{LoRANN} from $Ldr + rm$ to $Lsr + rm$ bytes.

\subsection{GPU implementation}
\label{sec:GPU}

Hardware accelerators such as GPUs and TPUs can be used to speed up ANN search for queries that arrive in batches~\citep{johnson2019billion,zhao2020song,groh2022ggnn,ootomo2023cagra}. The computational structure of RRR, consisting of vector-matrix multiplications, makes it easy to implement \texttt{LoRANN} for accelerators, and the low memory usage of RRR (see Section~\ref{sec:memory_usage}) makes it ideal for accelerators that typically have a limited amount of memory.

Given a query matrix $\mathbf{Q} \in \mathbb{R}^{|Q| \times d}$ with $|Q|$ queries, we need to compute the products ${\bf q}_i^T{\bf A}_{il}{\bf B}_{il}$ for all $i = 1, \dots, |Q|$ and $l = 1, \dots, w$. Here we denote by $\mathbf{A}_{il}$ and $\mathbf{B}_{il}$ the matrices $\mathbf{A}$ and $\mathbf{B}$ of the $l$th cluster in the set of $w$ clusters the $i$th query point is routed into. We compute the required products efficiently using one batched matrix multiplication ${\bf Q_TA_TB_T}$ by representing $\mathbf{Q}$ as a $|Q| \times 1 \times 1 \times d$ tensor ${\bf Q_T}$, the matrices ${\bf A}_{il}$ as one $|Q| \times w \times s \times r$ tensor ${\bf A_T}$, and the matrices ${\bf B}_{il}$ as one $|Q| \times w \times r \times M$ tensor ${\bf B_T}$, where $M$ is the maximum number of points in a single cluster. To avoid inefficiencies due to padding clusters with fewer than $M$ points, we use an efficient balanced $k$-means algorithm \citep{mayer2023} to ensure that the clusters are balanced such that the maximum difference in cluster sizes is $\Delta = 16$.

On GPUs, 8-bit integer multiplication is presently both less efficient and less supported than 16-bit floating-point multiplication. Therefore, we use 16-bit floating-point numbers to perform all computations on a GPU. However, the 8-bit quantization scheme can still be useful on accelerators by allowing bigger data sets to be indexed with limited memory. 

The simple structure of our method allows it to be easily implemented using frameworks such as PyTorch, TensorFlow, and JAX. This enables \texttt{LoRANN} to support different hardware platforms with minimal differences between implementations. We write our GPU implementation in Python using JAX which uses XLA to compile and run the algorithm on a GPU or a TPU~\citep{frostig2018compiling}.

\section{Out-of-distribution queries}
\label{sec:ood_queries}

In the standard benchmark set-up \citep{li2019approximate,aumuller2020ann}, a data set is randomly split into the corpus and a test set, i.e., the corpus and the queries are drawn from the same distribution. However, this assumption often does not hold in practice, for example in cross-modal search. Thus, there is recent interest~\citep{simhadri2022results,jaiswal2022ood,hyvonen2022multilabel} in the out-of-distribution (OOD) setting. 
For instance, in the Yandex-text-to-image data set, the corpus consists of images, and the queries are text; even though both the corpus and the queries are embedded in the same vector space, their distributions differ \citep[see Figure 2 in ][]{jaiswal2022ood}. 

Due to the regression formulation, the proposed method handles OOD queries by design. To verify this, we construct an index for a sample of 400K points of the Yandex OOD data set in four different scenarios: (1) the default version (\texttt{LoRANN-query}) uses $\{\mathbf{x}_i\}_{i=1}^n$, i.e., an $n = $ 400K sample from the query distribution, as a global training set and selects the local training set $\{\mathbf{x}_i\}_{i \in J_l}$ as the global training set points that are routed into the $l$th cluster; (2) \texttt{LoRANN-query-big} is like \texttt{LoRANN-query}, but with $n =$ 1.2M; (3) \texttt{LoRANN-corpus} uses the corpus $\{\mathbf{c}_j\}_{j=1}^m$ as a global training set, and selects the local training sets as $\{\mathbf{c}_j\}_{j \in J_l}$ like the default version; (4) \texttt{LoRANN-corpus-local} uses $\{\mathbf{c}_i\}_{j=1}^m$ as a global training set, but selects the local training sets as $\{\mathbf{c}_j\}_{j \in I_l}$, i.e., uses only the corpus points of the $l$th cluster to train the $l$th model. We can thus disentangle the effect of the choice of the global training set from the effect of using the points in the nearby clusters to train the RRR models.

The results are shown in Figure~\ref{fig:OOD}. The version trained on queries outperforms the version trained only on the corpus, especially in the case of no re-ranking. Furthermore, we can use larger training sets to increase the performance of \texttt{LoRANN}. Both \texttt{LoRANN-query} and \texttt{LoRANN-corpus} outperform \texttt{LoRANN-corpus-local}, indicating that selecting the local training sets as described in Section~\ref{sec:RRR} improves the accuracy of the regression models. We assume that this is because of the larger and more representative training sets, even though they are not from the actual query distribution. 

\begin{figure*}[!bt]
\centering
\begin{minipage}[b]{0.49\textwidth}
    \centering
    \includegraphics[width=\textwidth]{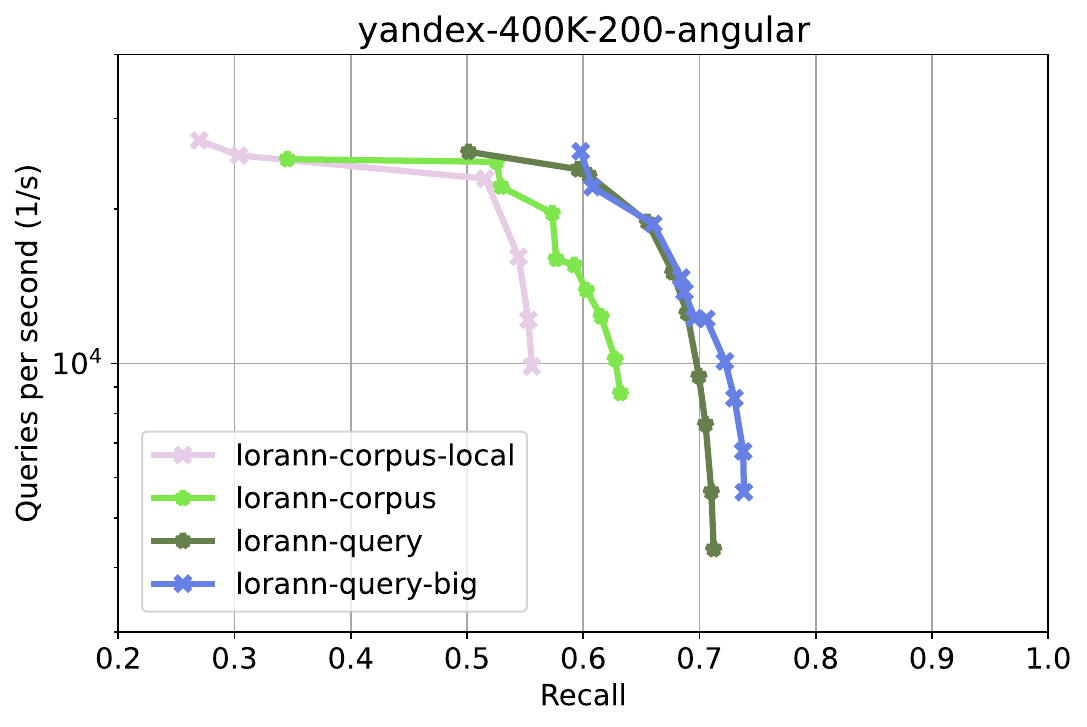}
\end{minipage}
\hfill
\begin{minipage}[b]{0.49\textwidth}
    \centering
    \includegraphics[width=\textwidth]{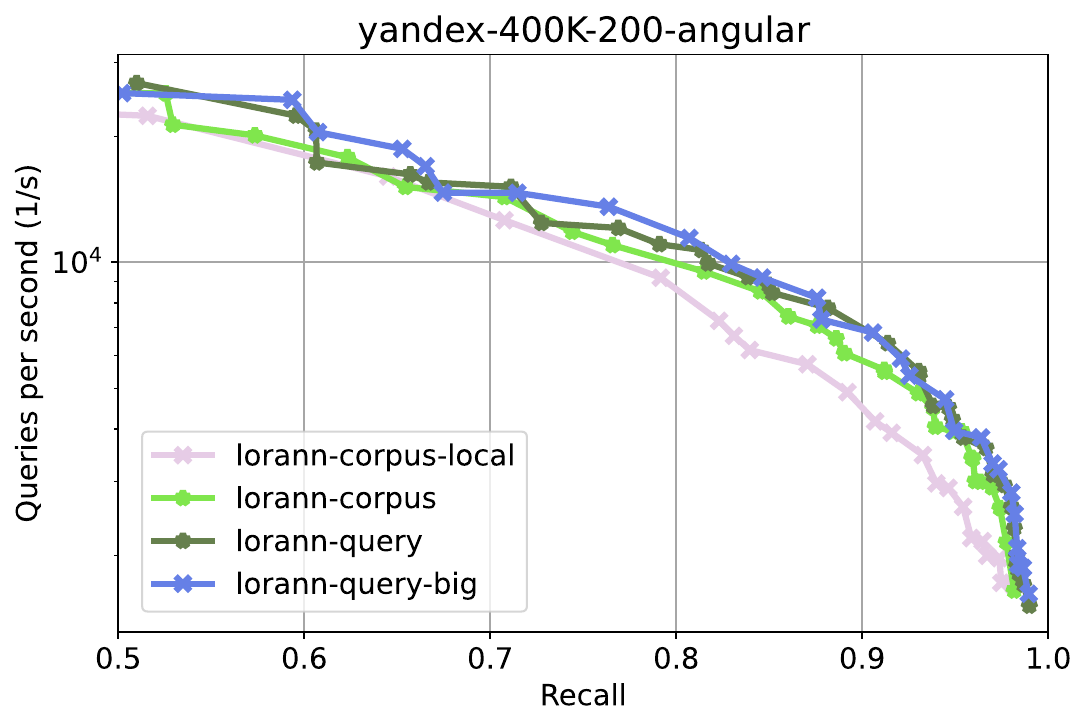}
\end{minipage}
\caption{Recall vs.\ QPS on the Yandex T2I OOD data set (400K sampled corpus points) without (left) and with (right) the final re-ranking step. \texttt{LoRANN-query} is trained using a sample of 400K points from the query distribution as a training set, while \texttt{LoRANN-query-big} uses a sample of 1.2M points. \texttt{LoRANN-corpus} is trained using the corpus as a training set. \texttt{LoRANN-corpus-local} is trained using the corpus as a training set with only the cluster points as the local training sets of the reduced-rank regression models. It is beneficial to (1) use a sample from the actual query distribution as a training set and to (2) select the local training set by using also the points outside of the cluster as described in Section~\ref{sec:RRR}. The performance difference decreases when the final re-ranking step is introduced (requiring the original data set to be kept in memory).}
\label{fig:OOD}
\end{figure*}

\section{Related work}
\label{sec:related_work}

\paragraph{Supervised ANN algorithms} Learning-to-hash methods \citep{weiss2008spectral,norouzi2011minimal,liu2012supervised} optimize partitions in a supervised fashion using data-dependent hash functions. Other supervised methods include learning optimal partitions by approximating a balanced graph partitioning \citep{dong2020learning,gupta2022bliss,gottesburen2024unleashing} and interpreting partitions as multilabel classifiers \citep{hyvonen2022multilabel}. These supervised methods are orthogonal to our approach since they define the learning problem as selecting a subset of corpus points via partitioning. In contrast, we propose a supervised score computation method for clustering-based or, more generally, for partition-based ANN algorithms.

\paragraph{Product quantization} The state-of-the-art clustering-based algorithms IVF-PQ \citep{jegou2011product} and ScaNN \citep{guo2020accelerating} use quantization for data compression and score computation. They use a \emph{quantizer} $q: \mathbb{R}^d \mapsto \mathcal{A}$ to map a point of the feature space to a value in a \emph{codebook} $\mathcal{A}$. Given $\mathcal{A}$, they approximate the dissimilarity between the query point $\mathbf{x}$ and the corpus point $\mathbf{c}_j$ by $d(\mathbf{x}, q(\mathbf{c}_j))$, i.e., the dissimilarity between the query point and the codebook value corresponding to $\mathbf{c}_j$. Further, IVF-PQ and ScaNN quantize the residuals, i.e., the distances between corpus points and the cluster centroids, and use product quantization that decomposes the feature space into lower-dimensional subspaces and learns subquantizers in these subspaces. The code size, and thus the memory consumption, of PQ is directly proportional to the number of subquantizers. 

\section{Experiments}
\label{sec:experiments}

We use the ANN-benchmarks project~\citep{aumuller2020ann} to run our experiments\footnote{\url{https://github.com/ejaasaari/lorann-experiments}} and replicate its experimental set-up as closely as possible (see Appendix~\ref{sec:experimental setup} for the description of the experimental set-up). We use $k = 100$ for all experiments and measure recall (the proportion of true $k$-nn found) versus queries per second (QPS). Additionally, due to the lack of modern high-dimensional embedding data sets in ANN-benchmarks, we include multiple new high-dimensional embedding data sets in our experiments; for a description of all the data sets, see Appendix~\ref{sec:datasets}.

Note that, even though we demonstrated in Section~\ref{sec:ood_queries} that \texttt{LoRANN} can adapt to the query distribution, there are no samples from the actual query distribution available for the benchmark data sets of this section. Thus, we follow the standard approach by using only the corpus $\{\mathbf{c}_j\}_{j=1}^m$ to train \texttt{LoRANN}.

\subsection{Reduced-rank regression}
\label{sec:rrr_experiments}

We first compare the proposed score computation method, reduced-rank regression (RRR), against product quantization (PQ) for clustering-based ANN search.\footnote{Since ScaNN does not outperform Faiss-IVFPQ in our end-to-end-evaluation (Section \ref{sec:experiments-cpu}) and both methods employ $k$-means for partitioning, the anisotropic quantization \citep{guo2020accelerating} used by ScaNN is unlikely to outperform the original PQ \citep{jegou2011product}.} We use RRR and PQ as scoring methods for an IVF index that partitions the corpus using $k$-means (see Section~\ref{sec:clustering_based}). We implement RRR using 8-bit integer quantization as described in Section~\ref{sec:quantization} and compare against product quantization implemented in Faiss~\citep{douze2024faiss} with 4-bit integer quantization and fast scan~\citep{andre2015cache}. First, we compare the score computation methods at the optimal hyperparameters while keeping the clustering fixed, and then compare them at a fixed memory budget.

\subsubsection{Fixed clustering}
\label{sec:same_clustering}

To directly compare the score computation methods, in Figure~\ref{fig:same_clustering} we present results where the IVF index (the partition defined by $k$-means clustering) is the same for both methods. For RRR, we use $r = 32$, while for PQ each vector is encoded with $d/2$ subquantizers for optimal performance. RRR outperforms PQ on seven out of the eight data sets; for complete results, see Appendix~\ref{sec:full_same_clustering}. 

\begin{figure}[htbp]
    \centering
    \subfigure{
        \includegraphics[width=0.49\textwidth]{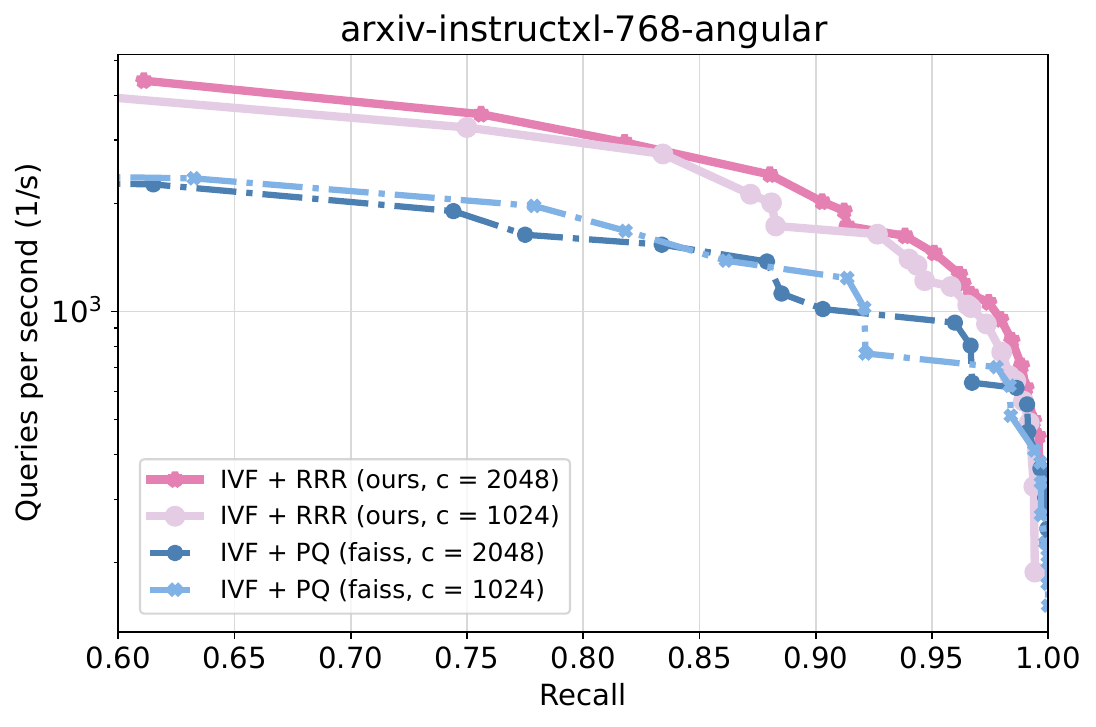}
    }
    \hspace{-0.4cm}
    \subfigure{
        \includegraphics[width=0.49\textwidth]{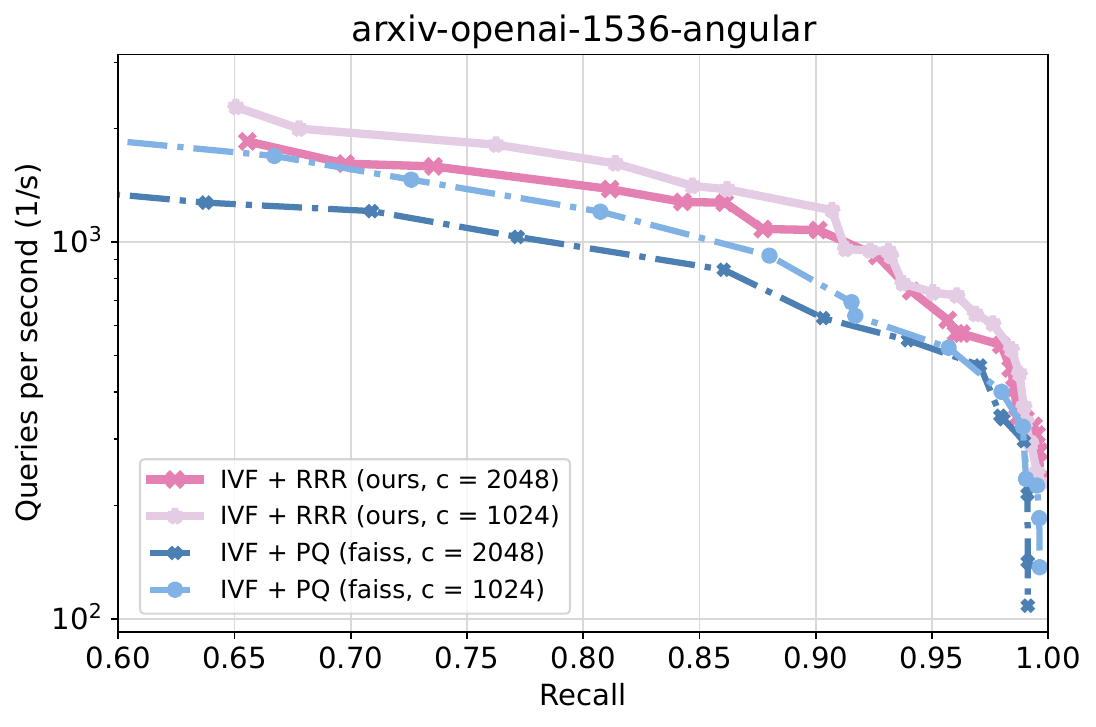}
    }
    \caption{Performance comparison between RRR and PQ. The $k$-means clustering (IVF index) is kept constant to directly compare the effect of the score computation method (here $c$ denotes the number of clusters). The proposed score computation method outperforms the baseline method (PQ).}
    \label{fig:same_clustering}
\end{figure}

\subsubsection{Memory usage}
\label{sec:memory_usage}

In Figure~\ref{fig:memory_usage}, we compare the performance of RRR (IVF+RRR) and PQ (IVF+PQ) by varying the rank parameter $r$ for RRR and the code size for PQ such that $b$, bytes per vector, is similar for both. For all values of $b$, RRR outperforms PQ which is a typical choice in memory-limited use cases. Note that RRR with $b \approx 16$ outperforms PQ even with $b \approx 64$ on all data sets; for the full results, see Appendix~\ref{sec:full_memory_usage}. The results are similar when no final re-ranking step is used (Appendix~\ref{sec:full_memory_usage_norerank}).

\begin{figure}[htbp]
    \centering
    \subfigure{
        \includegraphics[width=0.49\textwidth]{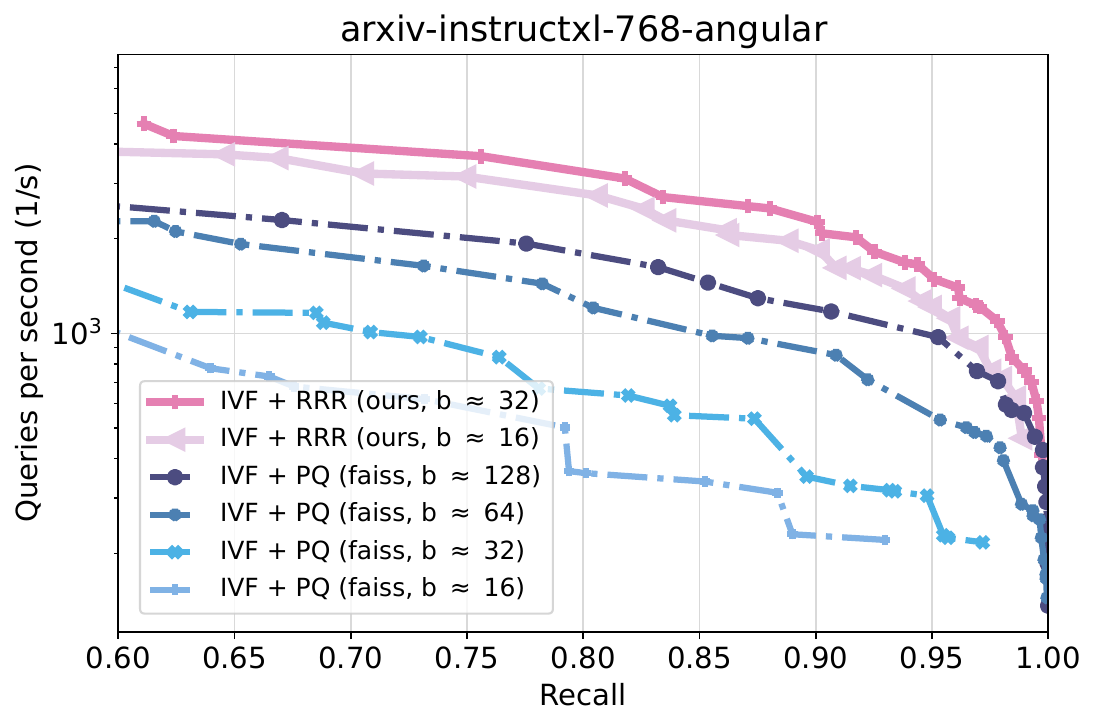}
    }
    \hspace{-0.4cm}
    \subfigure{
        \includegraphics[width=0.49\textwidth]{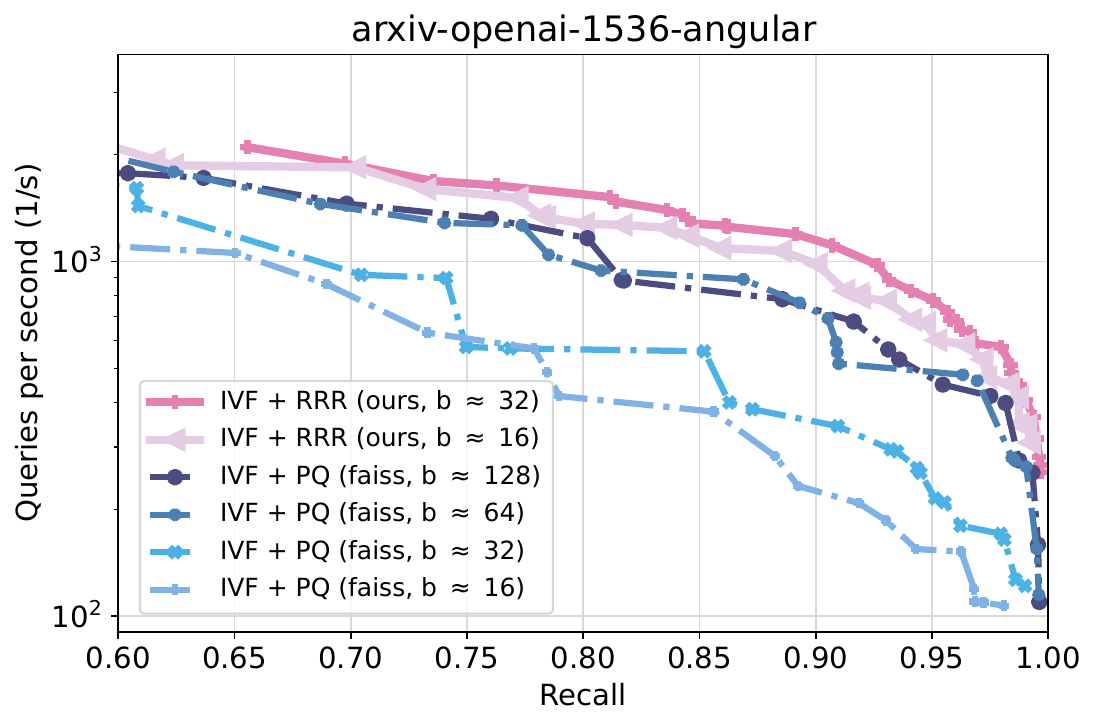}
    }
    \caption{Performance comparison of RRR and PQ at different levels of memory usage. We vary the rank parameter $r$ for RRR and the code size for PQ such that $b$, bytes per vector, is similar for both. RRR@($b\approx 16$) outperforms even PQ@($b\approx 128)$ which uses eight times as much memory.
    }
    \label{fig:memory_usage}
\end{figure}

\subsection{LoRANN}
\label{sec:lorann_experiments}

In this section, we measure the end-to-end performance of \texttt{LoRANN} (see Section~\ref{sec:implementation}). We first perform an ablation study on the components of \texttt{LoRANN} (Section~\ref{sec:ablation}), and then perform an end-to-end evaluation of \texttt{LoRANN} against the state-of-the-art ANN libraries in both the CPU setting (Section~\ref{sec:experiments-cpu}) and the GPU setting (Section~\ref{sec:experiments-gpu}).

\subsubsection{Ablation study}
\label{sec:ablation}

In Figure~\ref{fig:ablation}, we study the effect of the different components of \texttt{LoRANN} on its performance (for the full results, see Appendix~\ref{sec:components}). As the baseline, we use an IVF index. Adding the score computation step via RRR significantly improves performance on all the data sets.

Dimensionality reduction (DR) is beneficial on higher-dimensional data sets with moderate-sized corpora: if the number of points in a cluster is lower than the dimension, then the first vector-matrix product in the computation $(\mathbf{x}^T\mathbf{A})\mathbf{B}$ of the local reduced-rank regression models will be more expensive. For large data sets with high-dimensional data, this effect decreases. Dimensionality reduction also does not improve performance on the lower-dimensional data sets ($d \leq 300$). Incorporating 8-bit quantization improves not only the memory usage but also the query latency.

In Appendix~\ref{sec:r-ablation}, we study the effect of varying the rank $r$ of the parameter matrices. We find that increasing $r$ from 32 to 64 has little effect, while $r = 16$ performs worse for high-dimensional data (but can be used to further decrease memory usage). For all of our other experiments, we use $r = 32$.

\begin{figure}[tb!]
    \centering
    \subfigure{
        \includegraphics[width=0.49\textwidth]{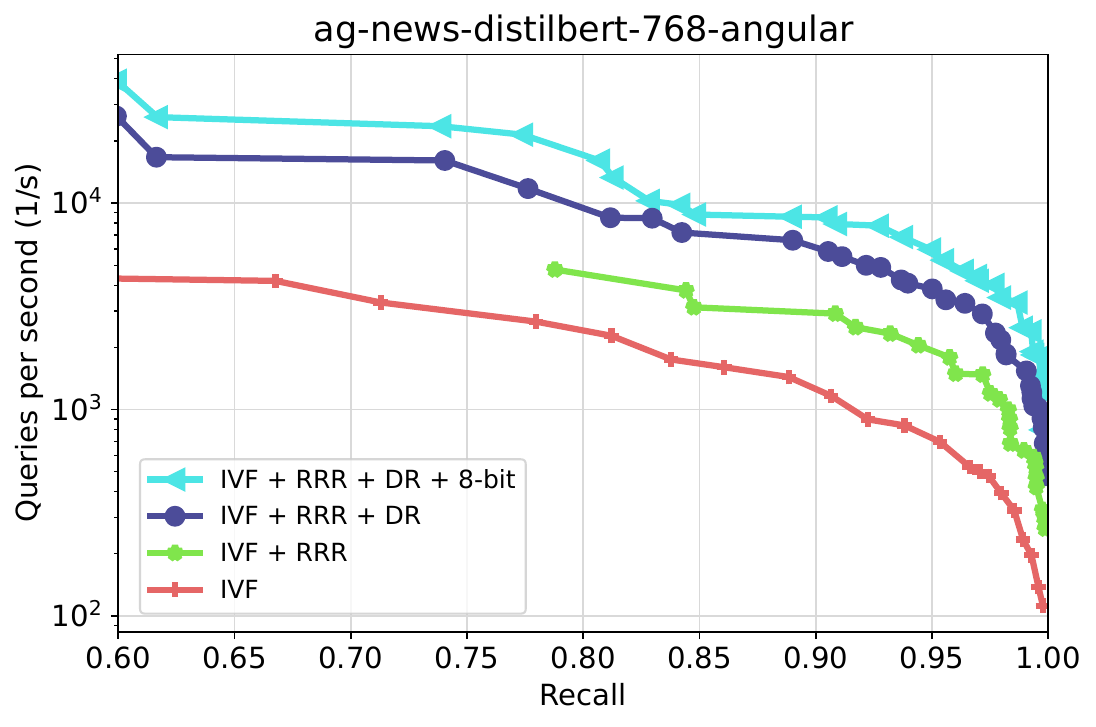}
    }
    \hspace{-0.4cm}
    \subfigure{
        \includegraphics[width=0.49\textwidth]{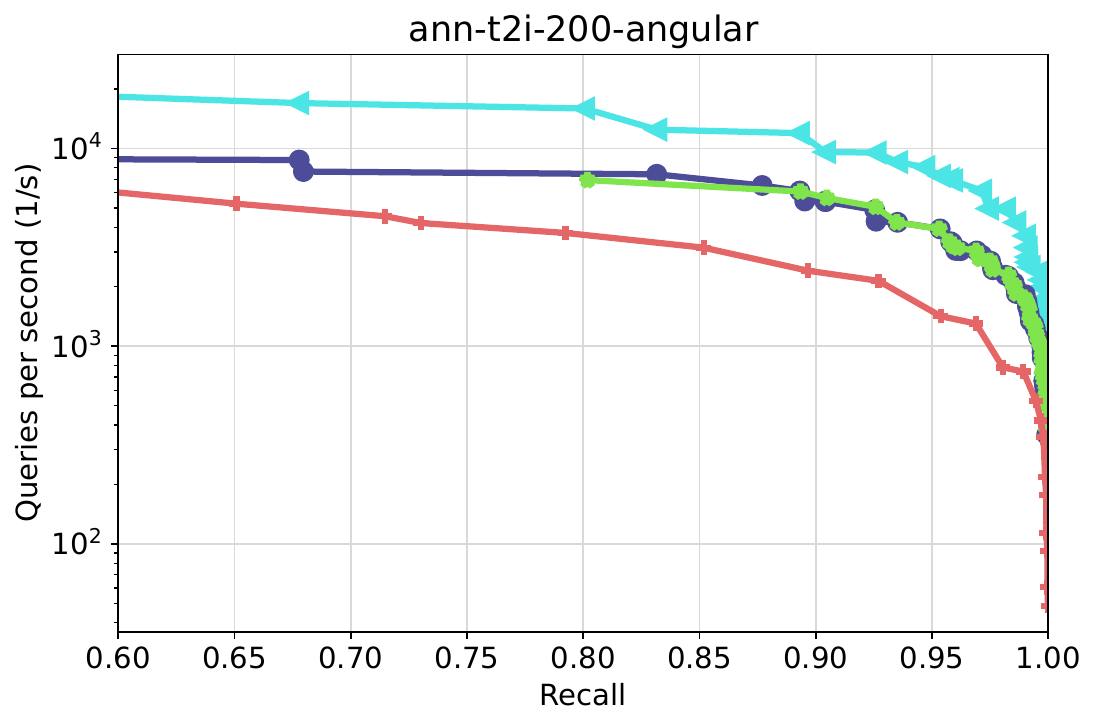}
    }
    \caption{\texttt{LoRANN} ablation study. 
    On the high-dimensional ($d=768$) data set (left), all the components improve the performance of \texttt{LoRANN}. On the lower-dimensional ($d=200$) data set (right), all the components except dimensionality reduction (DR) improve performance.}
    \label{fig:ablation}
\end{figure}

\subsubsection{CPU evaluation}
\label{sec:experiments-cpu}

As the baseline methods, we choose four leading graph implementations, HNSW, GLASS\footnote{An efficient HNSW implementation with quantization: \url{https://github.com/zilliztech/pyglass}}, QSG-NGT, and PyNNDescent \citep{dong2011efficient}, two leading product quantization implementations, Faiss-IVFPQ (fast scan) and ScaNN, and the leading tree implementation MRPT \citep{hyvonen2016fast}. See Figure~\ref{fig:cpu} for the results and Appendix~\ref{sec:full-cpu} for the results on all 16 data sets. Three trends emerge: (1) \texttt{LoRANN} outperforms the product quantization methods on all data sets except glove-200-angular. (2) \texttt{LoRANN} performs better in the high-dimensional regime: it outperforms all the other methods except GLASS and QSG-NGT on all but the lower-dimensional ($d \leq 200$) data sets. (3) Compared to graph methods, \texttt{LoRANN} performs better at the lower recall levels: QPS-recall curves of \texttt{LoRANN} and GLASS cross between 80\% and 99\% on most of the data sets. On 8 of the 16 data sets, \texttt{LoRANN} has better or similar performance as QSG-NGT at all recall levels.

\begin{figure}[!hbtp]
    \centering
    \subfigure{
        \includegraphics[width=0.49\textwidth]{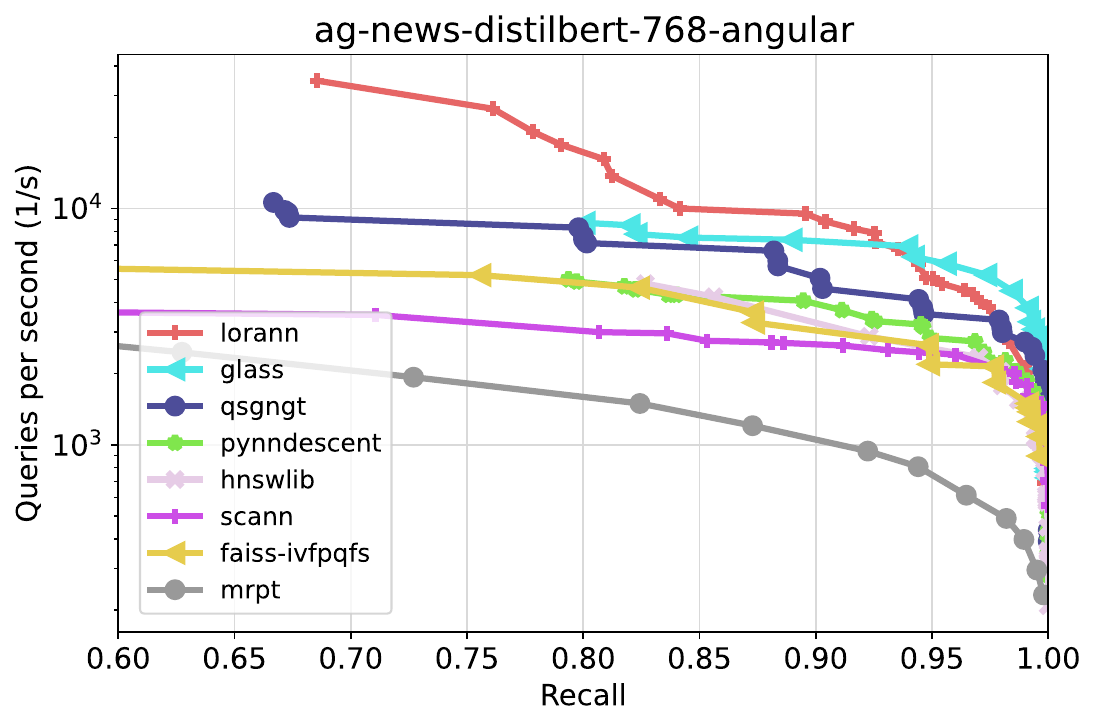}
    }
    \hspace{-0.4cm}
    \subfigure{
        \includegraphics[width=0.49\textwidth]{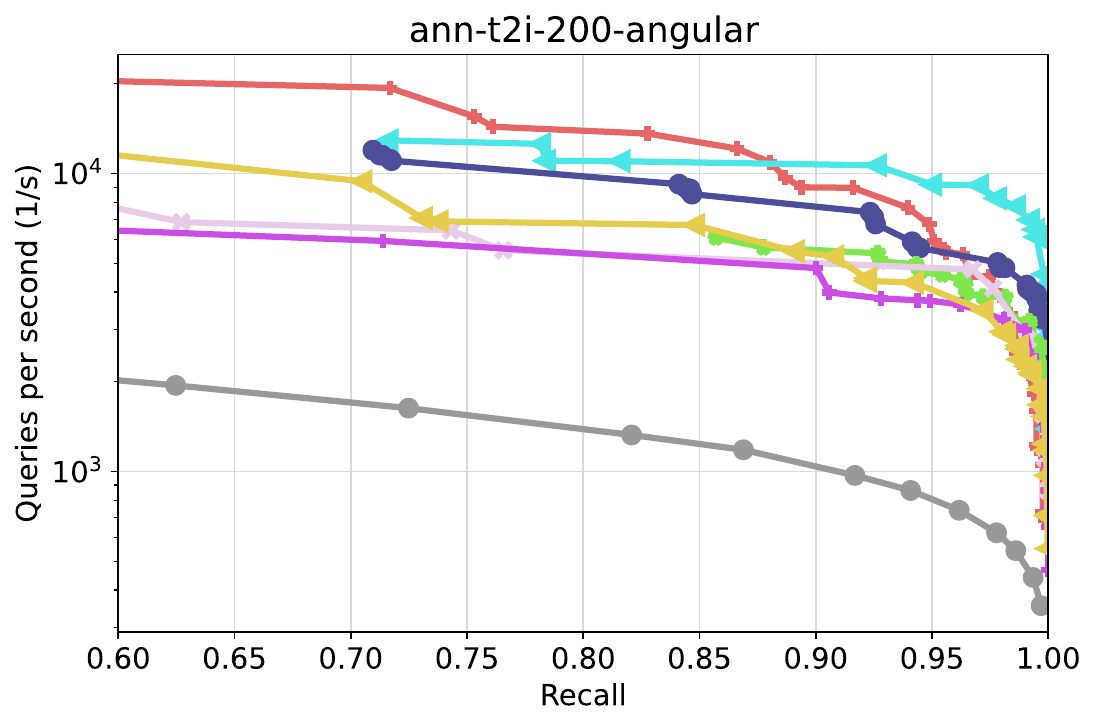}
    }
    \caption{CPU comparison. The QPS-recall curves of \texttt{LoRANN} and the leading graph library GLASS cross at the 95\% (left) and at the 90\% recall level (right), indicating that \texttt{LoRANN} is the fastest method at the lower recall levels, and GLASS at the higher recall levels.}
    \label{fig:cpu}
\end{figure}

Furthermore, in Appendix~\ref{sec:construction_time}, we demonstrate that in general \texttt{LoRANN} has faster index construction times than the graph-based methods.

\subsubsection{GPU evaluation}
\label{sec:experiments-gpu}

We compare \texttt{LoRANN} against GPU implementations of IVF and IVF-PQ in both Faiss~\citep{douze2024faiss} and the NVIDIA RAFT library\footnote{\url{https://github.com/rapidsai/raft}}. In addition, we compare against a state-of-the-art GPU graph algorithm CAGRA~\citep{ootomo2023cagra} implemented in RAFT. All algorithms receive all test queries as one batch of size 1000. See Figure~\ref{fig:gpu} for representative results, and Appendix~\ref{sec:nvidia-gpu} for the complete results. \texttt{LoRANN} outperforms the other methods on seven out of the nine high-dimensional ($d > 300$) data sets. For $d \leq 300$, CAGRA has the best performance.

\begin{figure}[!hbtp]
    \centering
    \subfigure{
        \includegraphics[width=0.49\textwidth]{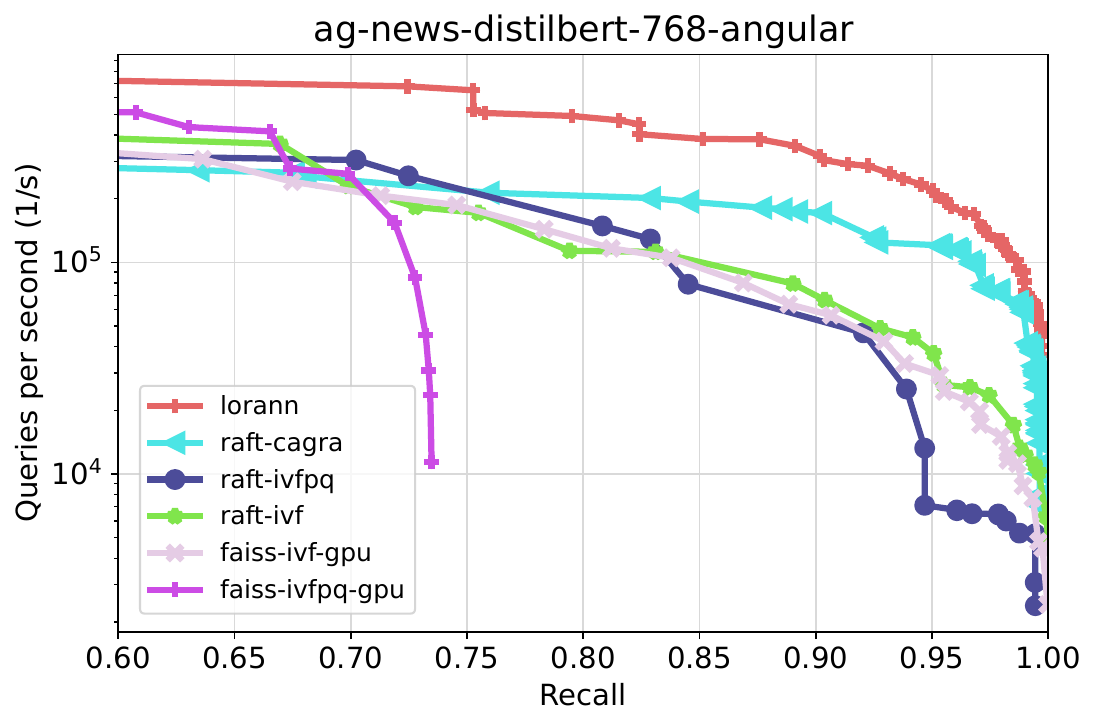}
    }
    \hspace{-0.4cm}
    \subfigure{
        \includegraphics[width=0.49\textwidth]{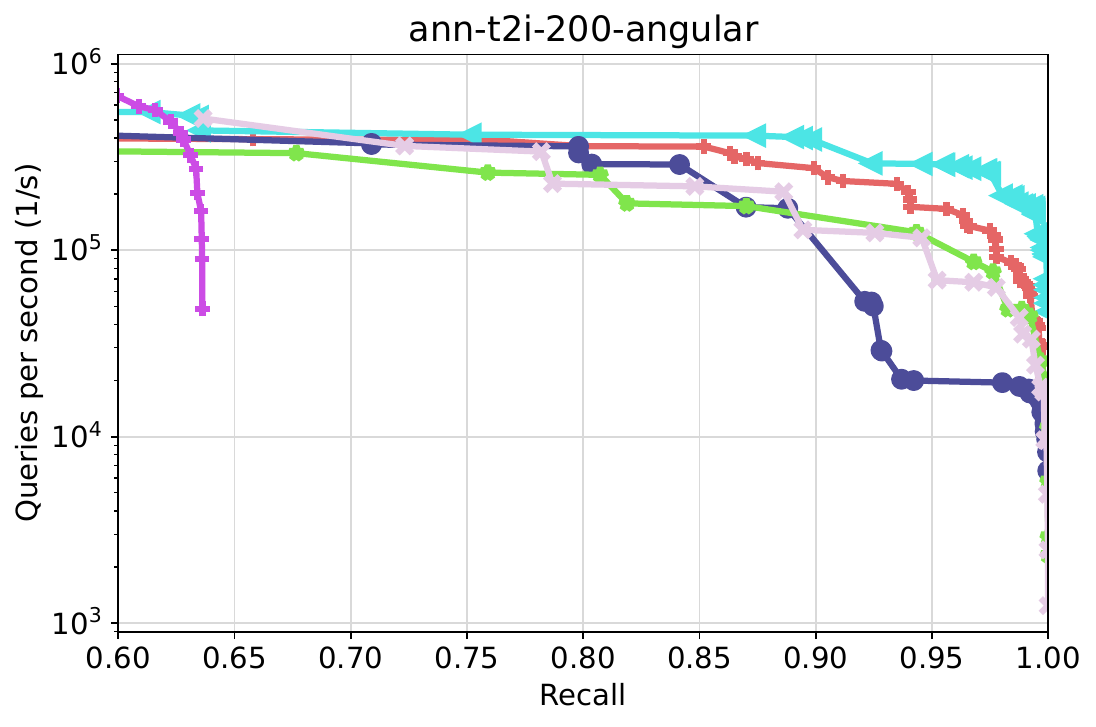}
    }
    \caption{GPU comparison. On the high-dimensional $(d = 768)$ data set (left), \texttt{LoRANN} is the fastest, and on the lower-dimensional $(d = 200)$ data set (right), the graph method CAGRA is the fastest.}
    \label{fig:gpu}
\end{figure}

To demonstrate the ease of implementation of our algorithm for new hardware platforms, in Appendix~\ref{sec:mac-gpu} we implement our method for Apple silicon. We show using the M2 Pro SoC that \texttt{LoRANN} can take advantage of the M2 GPU and its unified memory architecture to achieve faster queries.

\section{Discussion}
\label{sec:conclusion}

In this article, we show that an elementary statistical method, reduced-rank regression (RRR), is surprisingly efficient for score computation in clustering-based ANN search. Since RRR outperforms product quantization (PQ) at fixed memory consumption while being simpler to implement efficiently, we recommend using it in regimes where PQ is traditionally used, e.g., when the memory usage is a limiting factor~\citep{douze2024faiss}. While the experiments of the article are performed in the standard in-memory setting, the simple structure and the small memory footprint of the proposed method suggest scalability for larger data sets that do not fit into the main memory. In particular, hybrid solutions that store the corpus on an SSD 
\citep{jayaram2019diskann,ren2020hm,chen2021spann} and the distributed setting where the corpus and the index are distributed over multiple machines \citep{deng2019pyramid,gottesburen2024unleashing} are promising research directions. 

\paragraph{Limitations} Since reaching the highest recall levels ($> 90\%$ for $k = 100$) requires exploring many clusters, graph methods are usually more efficient than clustering-based methods in this regime. Furthermore, low-dimensional data sets (e.g., $d < 100$) also require that the rank $r$ of the parameter matrix is reasonably high. Thus, reduced-rank regression is not efficient for low-dimensional data sets for which the proportion $r/d$ is too large. The proposed score computation method is also only applicable to inner product-based dissimilarity measures. However, the multivariate regression formulation of Section~\ref{sec:multivariate} can be extended for other dissimilarity measures.

\section*{Acknowledgments}

This work has been supported by the Research Council of Finland (grant \#345635 and the Flagship programme: Finnish Center for Artificial Intelligence FCAI) and the Jane and Aatos Erkko Foundation (BioDesign project, grant \#7001702). We acknowledge the computational resources provided by the Aalto Science-IT Project from Computer Science IT. We thank the anonymous reviewers for their valuable feedback.

\bibliography{neurips_2024}


\clearpage
\appendix

\section{Geometric intuition}
\label{sec:geometric_intuition}

Observe that the reduced-rank regression solution is different from the most obvious low-rank approximation of the OLS solution computed via an SVD of the matrix $\mathbf{C}$. In Section~\ref{sec:ood_queries}, we verify experimentally using a real-world data set that a more accurate model can indeed be trained by factorizing $\mathbf{X}\mathbf{C}^T$ instead of $\mathbf{C}$. These two solutions are equivalent only in the special case where $\mathbf{X} = \mathbf{C}$. In this special case, computing the predictions of the reduced-rank regression model, denoted by $\mathbf{\hat{y}}_{\text{RRR}} := \mathbf{x}^T \bm{\hat{\beta}}_{\text{RRR}}$, is equivalent to projecting both the query point $\mathbf{x}$ and the cluster points $\{\mathbf{c}_j\}_{j \in I_j}$ onto the subspace spanned by the first $r$ eigenvectors of $\mathbf{K} := \mathbf{C}^T\mathbf{C}$, and computing the inner products in this $r$-dimensional subspace. If the cluster points are centered, the eigenvectors of $\mathbf{K}$ are the principal axes of the cluster points, and $\frac{1}{m_l-1}\mathbf{K}$ can be interpreted as the sample covariance matrix. However, unlike in principal component analysis (PCA), data should not be centered when estimating the inner products, since the inner products are not invariant with respect to translation. 

Let $\mathbf{W}_r \in \mathbb{R}^{d \times r}$ be a matrix containing the first $r$ eigenvectors of $\mathbf{K}$ as columns, and denote by $\mathbf{\tilde{x}} := \mathbf{W}_r^T \mathbf{x}$ the $r$-dimensional projection of the query point, by $\mathbf{\tilde{C}} := \mathbf{C} \mathbf{W}_r$ an $m_l \times r$-matrix containing the projected cluster points, and by $\mathbf{\tilde{y}} := \mathbf{\tilde{x}}^T\mathbf{\tilde{C}}^T$ an $1 \times m_l$-matrix containing the inner products between the projected query and the projected cluster points. Using this notation, we have the following result:

\begin{theorem}
\label{thrm:SVD}
    If $\mathbf{X} = \mathbf{C}$, then  $\mathbf{\tilde{y}} = \mathbf{\hat{y}}_{\text{RRR}}$.
\end{theorem}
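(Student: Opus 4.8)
The plan is to reduce both sides to a single (thin) singular value decomposition of $\mathbf{C}$ and observe that each equals the same rank-$r$ truncation of $\mathbf{C}^T$. Write $\mathbf{C} = \mathbf{P}\mathbf{D}\mathbf{Q}^T$ with singular values $d_1 \geq d_2 \geq \cdots > 0$ on the diagonal of $\mathbf{D}$ and columns $\mathbf{p}_i$, $\mathbf{q}_i$ of $\mathbf{P}$, $\mathbf{Q}$, and denote by $\mathbf{P}_r$, $\mathbf{Q}_r$, $\mathbf{D}_r$ the truncations to the first $r$ components. Since $\mathbf{X} = \mathbf{C}$, the least-squares fit is $\mathbf{Y} = \mathbf{X}\mathbf{C}^T = \mathbf{C}\mathbf{C}^T = \mathbf{P}\mathbf{D}^2\mathbf{P}^T$, so the first $r$ right singular vectors of $\mathbf{Y}$ that define $\mathbf{V}_r$ are exactly the first $r$ columns of $\mathbf{P}$, i.e.\ $\mathbf{V}_r = \mathbf{P}_r$. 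Likewise $\mathbf{K} = \mathbf{C}^T\mathbf{C} = \mathbf{Q}\mathbf{D}^2\mathbf{Q}^T$, so the first $r$ eigenvectors of $\mathbf{K}$ that define $\mathbf{W}_r$ are the first $r$ columns of $\mathbf{Q}$, i.e.\ $\mathbf{W}_r = \mathbf{Q}_r$.

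Next I would compute the two sides directly. Using the SVD identity $\mathbf{C}^T\mathbf{p}_i = d_i \mathbf{q}_i$ we get $\mathbf{C}^T\mathbf{V}_r = \mathbf{C}^T\mathbf{P}_r = \mathbf{Q}_r\mathbf{D}_r$, hence $\bm{\hat\beta}_{\text{RRR}} = \mathbf{C}^T\mathbf{V}_r\mathbf{V}_r^T = \mathbf{Q}_r\mathbf{D}_r\mathbf{P}_r^T$ and $\mathbf{\hat y}_{\text{RRR}} = \mathbf{x}^T\mathbf{C}^T\mathbf{V}_r\mathbf{V}_r^T = \mathbf{x}^T\mathbf{Q}_r\mathbf{D}_r\mathbf{P}_r^T$. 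On the other hand, writing $\mathbf{C}^T = \sum_i d_i\mathbf{q}_i\mathbf{p}_i^T$ and $\mathbf{W}_r\mathbf{W}_r^T = \mathbf{Q}_r\mathbf{Q}_r^T = \sum_{j \leq r}\mathbf{q}_j\mathbf{q}_j^T$, orthonormality of the $\mathbf{q}_i$ gives $\mathbf{W}_r\mathbf{W}_r^T\mathbf{C}^T = \sum_{j\leq r} d_j\mathbf{q}_j\mathbf{p}_j^T = \mathbf{Q}_r\mathbf{D}_r\mathbf{P}_r^T$, so that $\mathbf{\tilde y} = \mathbf{\tilde x}^T\mathbf{\tilde C}^T = \mathbf{x}^T\mathbf{W}_r\mathbf{W}_r^T\mathbf{C}^T = \mathbf{x}^T\mathbf{Q}_r\mathbf{D}_r\mathbf{P}_r^T$. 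The two expressions agree for every $\mathbf{x}$, which is the claim. Equivalently, one can phrase the whole argument as: both $\mathbf{W}_r\mathbf{W}_r^T\mathbf{C}^T$ and $\mathbf{C}^T\mathbf{V}_r\mathbf{V}_r^T$ equal the best rank-$r$ Frobenius approximation of $\mathbf{C}^T$, namely its truncated SVD $\sum_{i\leq r}d_i\mathbf{q}_i\mathbf{p}_i^T$, hence they coincide.

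The one real subtlety — and the step I would be most careful about — is the non-uniqueness of the SVD: if $d_r = d_{r+1}$, the individual vectors, and therefore $\mathbf{V}_r$ and $\mathbf{W}_r$, are not uniquely determined. I would handle this by arguing at the level of the orthogonal projectors $\mathbf{V}_r\mathbf{V}_r^T$ and $\mathbf{W}_r\mathbf{W}_r^T$ rather than the factor matrices: whenever there is a spectral gap these projectors are canonical, and the identity $\mathbf{W}_r\mathbf{W}_r^T\mathbf{C}^T = \mathbf{C}^T\mathbf{V}_r\mathbf{V}_r^T$ says exactly that $\mathbf{C}^T$ intertwines the top-$r$ eigenprojector of $\mathbf{C}^T\mathbf{C}$ with the top-$r$ eigenprojector of $\mathbf{C}\mathbf{C}^T$ — which follows from $\mathbf{C}^T(\mathbf{C}\mathbf{C}^T) = (\mathbf{C}^T\mathbf{C})\mathbf{C}^T$ and the functional calculus (applying the indicator of the top $r$ eigenvalues to both symmetric matrices). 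In the degenerate case without a gap the statement is understood for a compatible choice of truncations, or, more simply, one fixes a single SVD of $\mathbf{C}$ and derives both $\mathbf{V}_r$ and $\mathbf{W}_r$ from it, exactly as in the first paragraph. A minor bookkeeping point is that the computation uses $\mathrm{rank}(\mathbf{C}) \geq r$, consistent with the standing assumption $r < \min(d, m_l)$ in the generic case; otherwise one truncates at $\mathrm{rank}(\mathbf{C})$ and the same identity goes through verbatim.
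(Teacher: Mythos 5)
Your proof is correct and follows essentially the same route as the paper's: fix a single SVD of $\mathbf{C}$ (the paper equivalently writes $\mathbf{C}^T = \mathbf{U}\bm{\Sigma}\mathbf{V}^T$), identify $\mathbf{V}_r$ with the top-$r$ eigenvectors of $\mathbf{C}\mathbf{C}^T$ and $\mathbf{W}_r$ with the top-$r$ eigenvectors of $\mathbf{C}^T\mathbf{C}$, and reduce both $\bm{\hat{\beta}}_{\text{RRR}} = \mathbf{C}^T\mathbf{V}_r\mathbf{V}_r^T$ and $\mathbf{W}_r\mathbf{W}_r^T\mathbf{C}^T$ to the same truncated-SVD factor $\mathbf{Q}_r\mathbf{D}_r\mathbf{P}_r^T$. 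Your extra care about non-uniqueness when $d_r = d_{r+1}$ and about the case $\text{rank}(\mathbf{C}) < r$ is a small refinement the paper leaves implicit, but it does not change the substance of the argument.
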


\begin{proof}
    Denote $\bm{\tilde{\beta}} := \mathbf{W}_r \mathbf{W}_r^T \mathbf{C}$. Now $\mathbf{\tilde{y}}$ can be written as $\mathbf{\tilde{y}} = \mathbf{x}^T \bm{\tilde{\beta}}$. To prove the result, it suffices to show that $\bm{\tilde{\beta}} = \bm{\hat{\beta}}_{\text{RRR}}$. Denote the singular value decomposition of $\mathbf{C}^T$ by $\mathbf{C}^T = \mathbf{U}\bm{\Sigma}\mathbf{V}^T$. Since $\mathbf{X} = \mathbf{C}$, $\mathbf{Y} = \mathbf{C}\mathbf{C}^T$, and, consequently, the right singular vectors of $\mathbf{C}^T$ are the eigenvectors of $\mathbf{Y}$. Thus, the reduced-rank regression solution is 
    \[
     \bm{\hat{\beta}}_{\text{RRR}} = \mathbf{C}^T \mathbf{V}_r \mathbf{V}_r^T,
    \]
    where $\mathbf{V}_r$ is  $m \times r$-matrix containing the first $r$ columns of $\mathbf{V}$. 
    
    The columns of the matrix $\mathbf{U}$---i.e., the left singular vectors of $\mathbf{C}^T$---are the eigenvectors of $\mathbf{K} = \mathbf{C}^T\mathbf{C}$. Thus, $\mathbf{W}_r = \mathbf{U}_r$, and $\bm{\tilde{\beta}}$ can be written as 
    \[
    \bm{\tilde{\beta}} = \mathbf{U}_r \mathbf{U}_r^T \mathbf{C},
    \]
    where we denote by $\mathbf{U}_r$ a $d\times r$-matrix containing the first $r$ columns of the matrix $\mathbf{U}$. Since $\mathbf{U}^T\mathbf{C}^T = \bm{\Sigma} \mathbf{V}^T$, also $\mathbf{U}_r^T\mathbf{C}^T = \bm{\Sigma}_r \mathbf{V}_r^T$, and since $\mathbf{C}^T\mathbf{V} = \mathbf{U} \bm{\Sigma}$, also $\mathbf{C}^T\mathbf{V}_r = \mathbf{U}_r \bm{\Sigma}_r$. Using these identities, we have
    \[
    \bm{\tilde{\beta}}
    = \mathbf{U}_r \mathbf{U}_r^T\mathbf{C}^T 
    = \mathbf{U}_r \bm{\Sigma}_r \mathbf{V}_r^T 
    = \mathbf{C}^T \mathbf{V}_r \mathbf{V}_r^T = \bm{\hat{\beta}}_{\text{RRR}},
    \]
    which completes the proof.
\end{proof}

However, in the general case where $\mathbf{X} \neq \mathbf{C}$, i.e., the training set of the $l$th model is selected as the training queries that are routed into the $l$th cluster (for which $l \in \tau(\mathbf{x}_i)$), the matrix $\mathbf{X}$ also affects the right singular vectors of the output matrix $\mathbf{Y} = \mathbf{X} \mathbf{C}^T$. Hence, there is no simple geometric interpretation for the reduced-rank regression solution $\bm{\hat{\beta}}_{\text{RRR}}$ in the $d$-dimensional feature space. 

\clearpage
\section{Experimental set-up}
\label{sec:experimental setup}

We use the ANN-benchmarks project\footnote{\url{https://github.com/erikbern/ann-benchmarks}}~\cite{aumuller2020ann} to run our experiments and replicate its set-up as close as possible: our experiments are performed on  AWS \texttt{r6i.4xlarge} instances with Intel Xeon 8375C (Ice Lake) processors and hyperthreading disabled, and all algorithms are run using a single core. For our GPU experiments, we use an AWS \texttt{g5.2xlarge} instance with an NVIDIA A10G GPU (24 GB VRAM) and a \texttt{mac2-m2pro.metal} instance with an Apple M2 Pro SoC.

We use $k = 100$ for all experiments and measure the recall (the proportion
of true $k$-nn found) versus queries per second (QPS). For all data sets, we use a separate test set of 1000 queries, and each hyperparameter combination is benchmarked 5 times, from which the lowest achieved query latency is recorded. Additionally, due to the lack of modern high-dimensional embedding data sets in ANN-benchmarks, we include multiple new high-dimensional data sets in our experiments; for a list of all the data sets, see Appendix~\ref{sec:datasets}. To account for the new data sets, we increase the sizes of the considered hyperparameter grids in applicable cases.

For \texttt{LoRANN} experiments (Section~\ref{sec:lorann_experiments}), we use the following hyperparameter grid:
\begin{itemize}
    \item Number of clusters $L$: 1024, 2048, 4096
    \item Reduced dimension $s$: 64, 128, 192
    \item Rank $r$: 32 (for GPU, $r$: 24)
    \item Clusters to search $w$: 8, 16, 24, 32, 48, 64, 128, 256
    \item Points to re-rank $t$: 100, 200, 400, 800, 1200, 1600, 2400, 3200
\end{itemize}

For experiments on just reduced-rank regression (Section~\ref{sec:rrr_experiments}), we use the same hyperparameter grid for $L, w$, and $t$ as above and do not use dimensionality reduction.

For all hyperparameters, see \url{https://github.com/ejaasaari/lorann-experiments}.

\clearpage
\section{Data sets}
\label{sec:datasets}

In addition to data sets from ANN-benchmarks (fashion-mnist, gist, glove, mnist, sift), we include high-dimensional neural network embedding data sets that are more representative of those encountered in modern machine learning applications and encompass dimensionalities ranging from 200 to 1536. The full list of data sets along with their sizes and the distance metrics used is given in Table~\ref{table:datasets}.

\begin{table}[!hbtp]
\begin{threeparttable}
\begin{center}
\caption{Data sets used in the experiments.}
\label{table:datasets}
\begin{tabular}{l l l l l l}
\toprule
Data set (model) & type & corpus size & dim & distance & license \\
\midrule
ag-news (DistilBERT)\tnote{1} & text embedding & 120 000 & 768 & angular & CC BY 4.0 \\
ag-news (MiniLM)\tnote{1} & text embedding & 120 000 & 384 & angular & CC BY 4.0 \\
ann-arxiv (E5-base)\tnote{2} & text embedding & 2 288 300 & 768 & angular & Apache 2.0 \\
ann-t2i (ResNext)\tnote{3} & image embedding & 1 000 000 & 200 & angular & Apache 2.0 \\
arxiv (OpenAI Ada)\tnote{4} & text embedding & 319 224 & 1536 & angular & CC0 1.0 \\
arxiv (InstructXL)\tnote{5} & text embedding & 2 254 000 & 768 & angular & N/A \\
fashion-mnist\tnote{6} & raw image & 60 000 & 784 & euclidean & MIT \\
fasttext-wiki\tnote{7} & word embedding & 1 000 000 & 300 & euclidean & CC BY-SA 3.0 \\
gist\tnote{8} & image descriptor & 1 000 000 & 960 & euclidean & CC0 1.0 \\
glove\tnote{9} & word embedding & 1 193 514 & 200 & angular & Apache 2.0 \\
mnist\tnote{10} & raw image & 60 000 & 784 & euclidean & CC BY-SA 3.0 \\
sift\tnote{8} & image descriptor & 1 000 000 & 128 & euclidean & CC0 1.0  \\
wiki (GTE-small)\tnote{11} & text embedding & 224 482 & 384 & angular & MIT \\
wiki (OpenAI Ada)\tnote{11} & text embedding & 224 482 & 1536 & angular & MIT \\
wolt (clip-ViT)\tnote{12} & image embedding & 1 720 611 & 512 & angular & N/A \\
yandex T2I 5M\tnote{13} & text/image & 5 000 000 & 200 & angular & CC BY 4.0 \\
\bottomrule 
\end{tabular}
\begin{tablenotes}
    \item [1] \url{https://data.dtu.dk/articles/dataset/Pretrained_sentence_BERT_models_AG_News_embeddings/21286923}
    \item [2] \url{https://huggingface.co/datasets/unum-cloud/ann-arxiv-2m}
    \item [3] \url{https://huggingface.co/datasets/unum-cloud/ann-t2i-1m}
    \item [4] \url{https://www.kaggle.com/datasets/awester/arxiv-embeddings}
    \item [5] \url{https://huggingface.co/datasets/Qdrant/arxiv-abstracts-instructorxl-embeddings}
    \item [6] \url{https://github.com/zalandoresearch/fashion-mnist}
    \item [7]
    \url{https://fasttext.cc/docs/en/english-vectors.html}
    \item [8] \url{http://corpus-texmex.irisa.fr/}
    \item [9] \url{https://nlp.stanford.edu/projects/glove/}
    \item [10] \url{https://yann.lecun.com/exdb/mnist/}
    \item [11] \url{https://huggingface.co/datasets/Supabase/wikipedia-en-embeddings}
    \item [12] \url{https://huggingface.co/datasets/Qdrant/wolt-food-clip-ViT-B-32-embeddings}
    \item [13] \url{https://big-ann-benchmarks.com/neurips23.html}
\end{tablenotes}
\end{center}
\end{threeparttable}
\end{table}

\clearpage
\section{Reduced-rank regression experiments}

\subsection{Fixed clustering}
\label{sec:full_same_clustering}

In this section, we present the complete evaluation of reduced-rank regression in comparison to product quantization when the $k$-means clustering is kept fixed. For discussion, refer to Section~\ref{sec:same_clustering}.

\begin{figure}[htbp]
    \centering
    \subfigure{
        \includegraphics[width=0.49\textwidth]{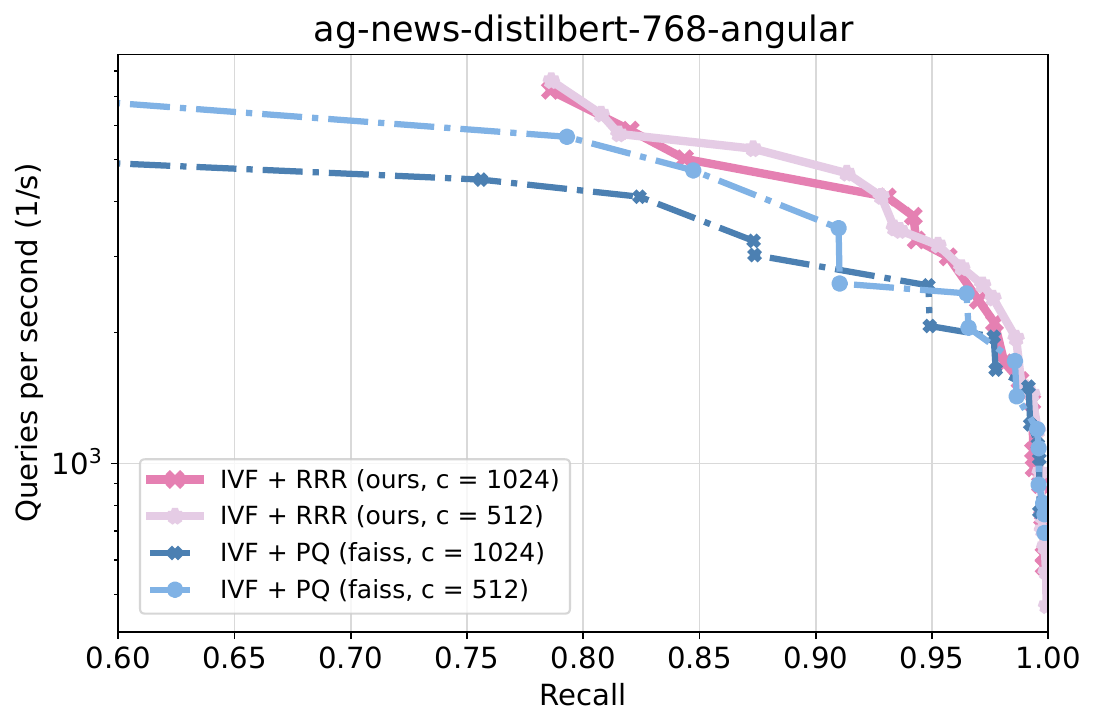}
    }
    \hspace{-0.4cm}
    \subfigure{
        \includegraphics[width=0.49\textwidth]{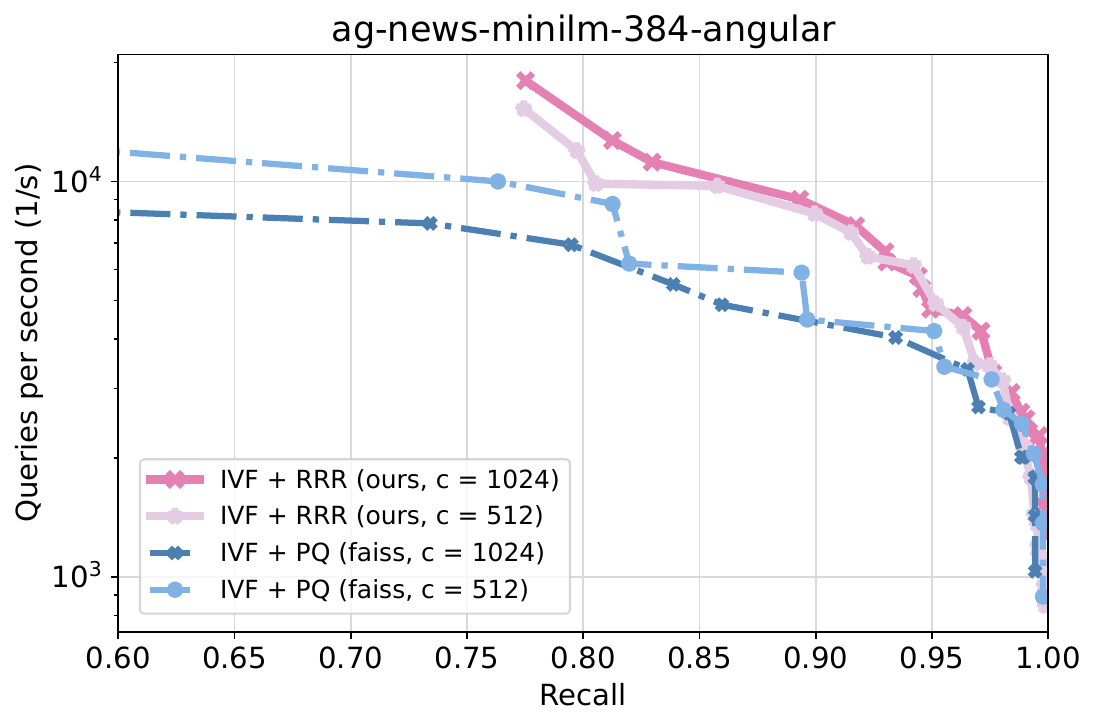}
    }

    \vspace{-0.4cm}

    \subfigure{
        \includegraphics[width=0.49\textwidth]{fig/vspq/arxiv-instructxl-768-angular.pdf}
    }
    \hspace{-0.4cm}
    \subfigure{
        \includegraphics[width=0.49\textwidth]{fig/vspq/arxiv-openai-1536-angular.pdf}
    }

    \vspace{-0.4cm}

    \subfigure{
        \includegraphics[width=0.49\textwidth]{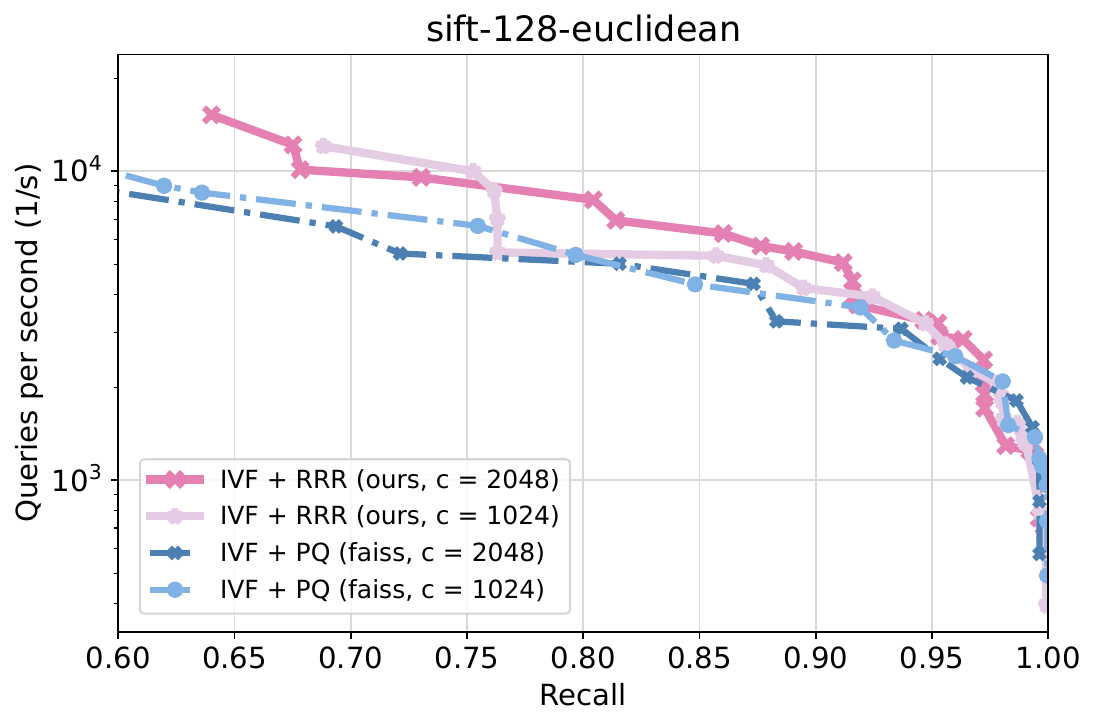}
    }
    \hspace{-0.4cm}
    \subfigure{
        \includegraphics[width=0.49\textwidth]{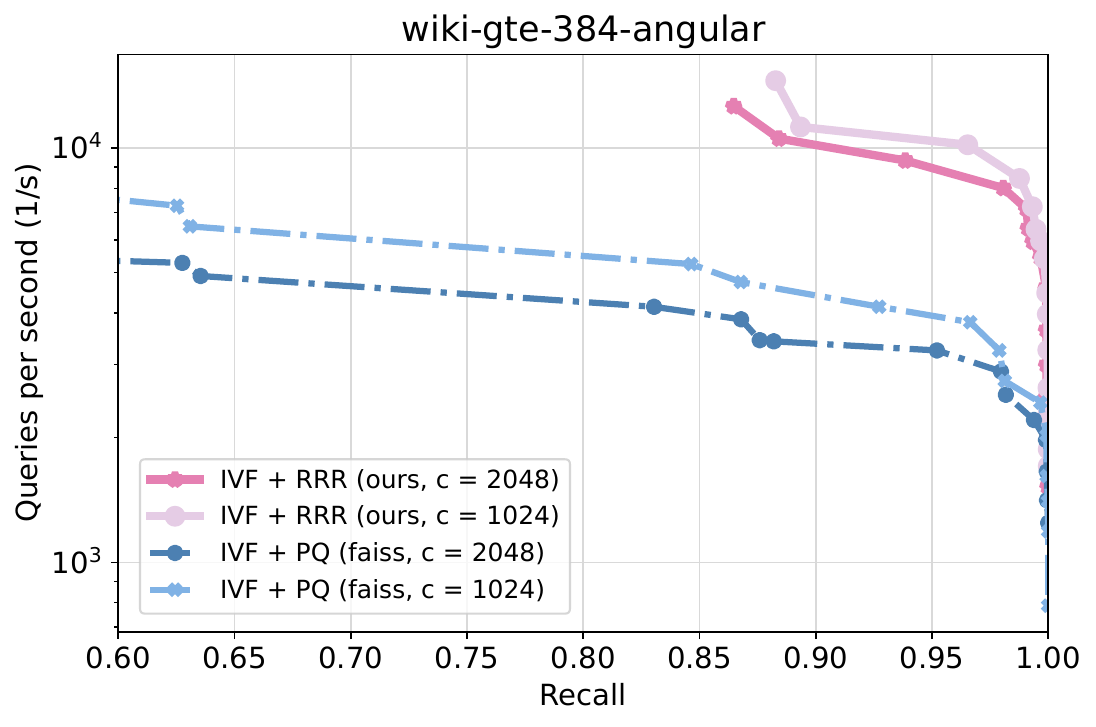}
    }

    \vspace{-0.4cm}

    \subfigure{
        \includegraphics[width=0.49\textwidth]{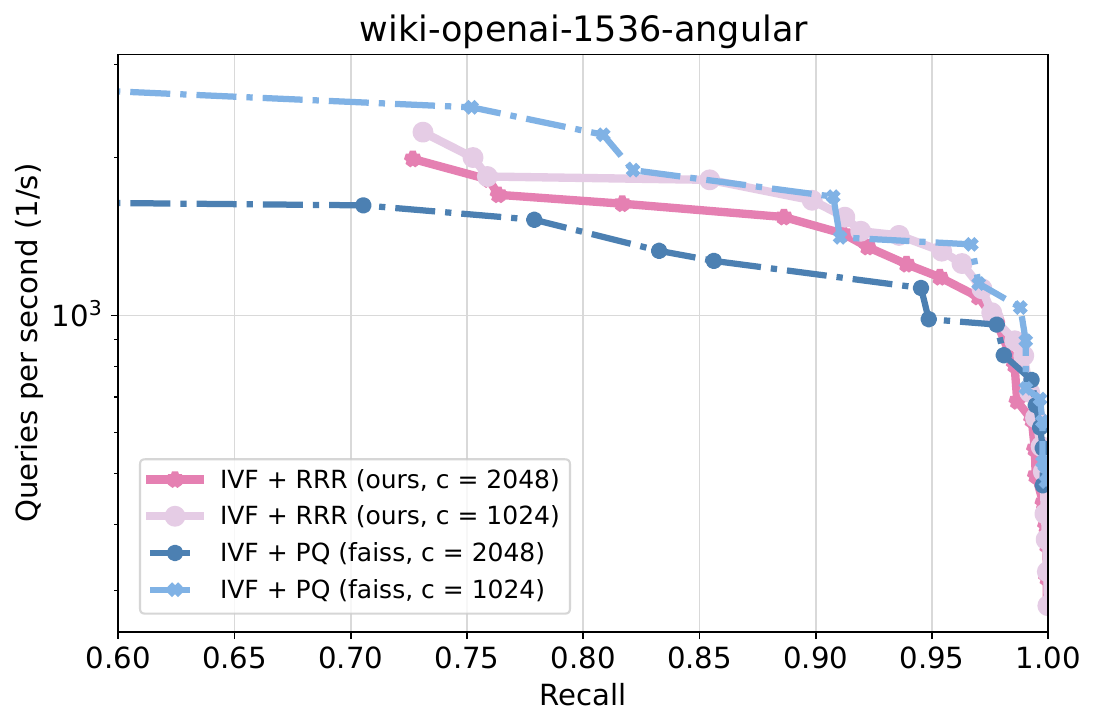}
    }
    \hspace{-0.4cm}
    \subfigure{
        \includegraphics[width=0.49\textwidth]{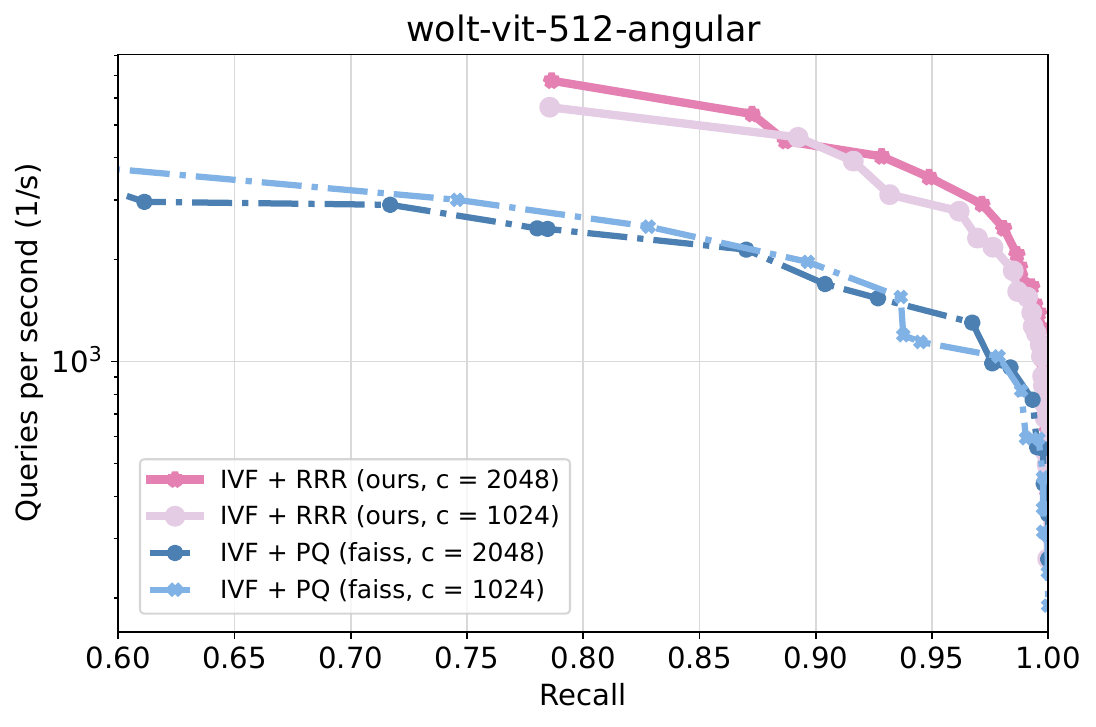}
    }
\end{figure}

\clearpage
\subsection{Memory usage}
\label{sec:full_memory_usage}

In this section, we present the complete evaluation of reduced-rank regression in comparison to product quantization for different memory consumptions. For discussion, refer to Section~\ref{sec:memory_usage}.

\begin{figure}[htbp]
    \centering
    \subfigure{
        \includegraphics[width=0.49\textwidth]{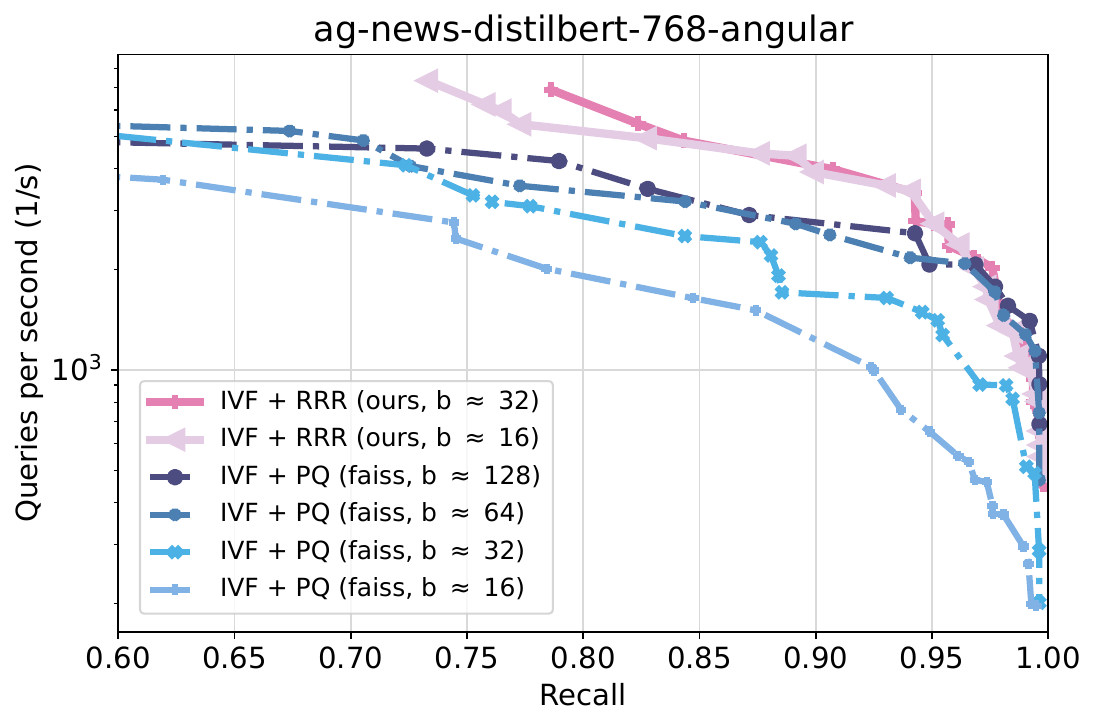}
    }
    \hspace{-0.4cm}
    \subfigure{
        \includegraphics[width=0.49\textwidth]{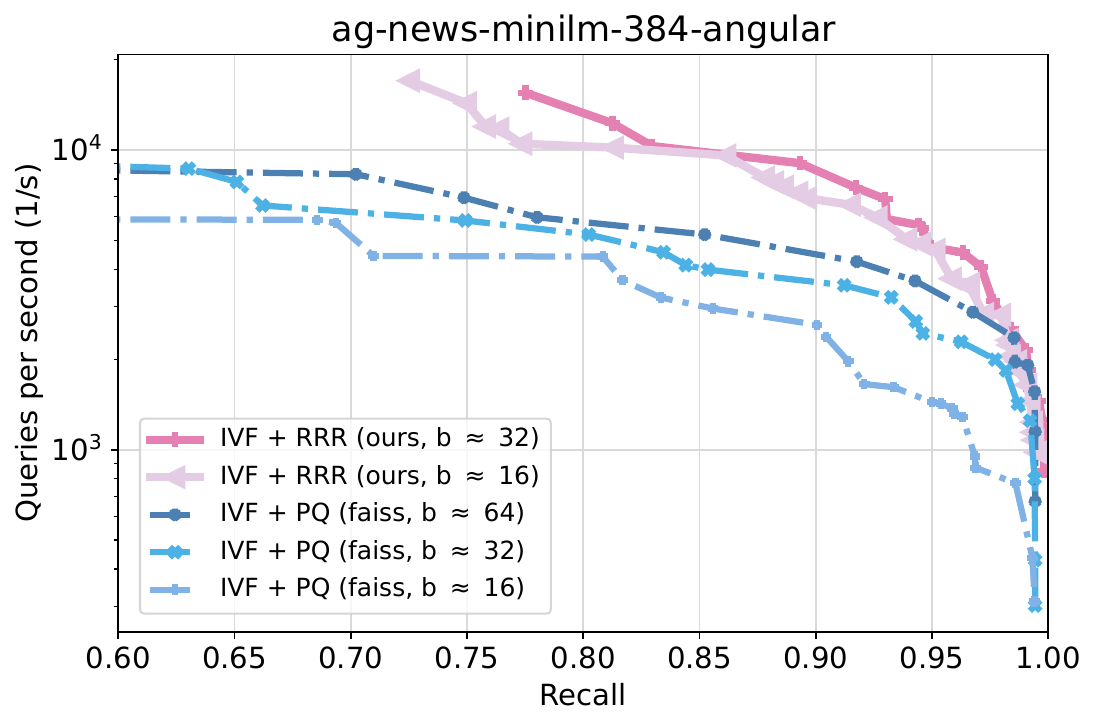}
    }

    \vspace{-0.4cm}

    \subfigure{
        \includegraphics[width=0.49\textwidth]{fig/memory/arxiv-instructxl-768-angular.pdf}
    }
    \hspace{-0.4cm}
    \subfigure{
        \includegraphics[width=0.49\textwidth]{fig/memory/arxiv-openai-1536-angular.pdf}
    }

    \vspace{-0.4cm}

    \subfigure{
        \includegraphics[width=0.49\textwidth]{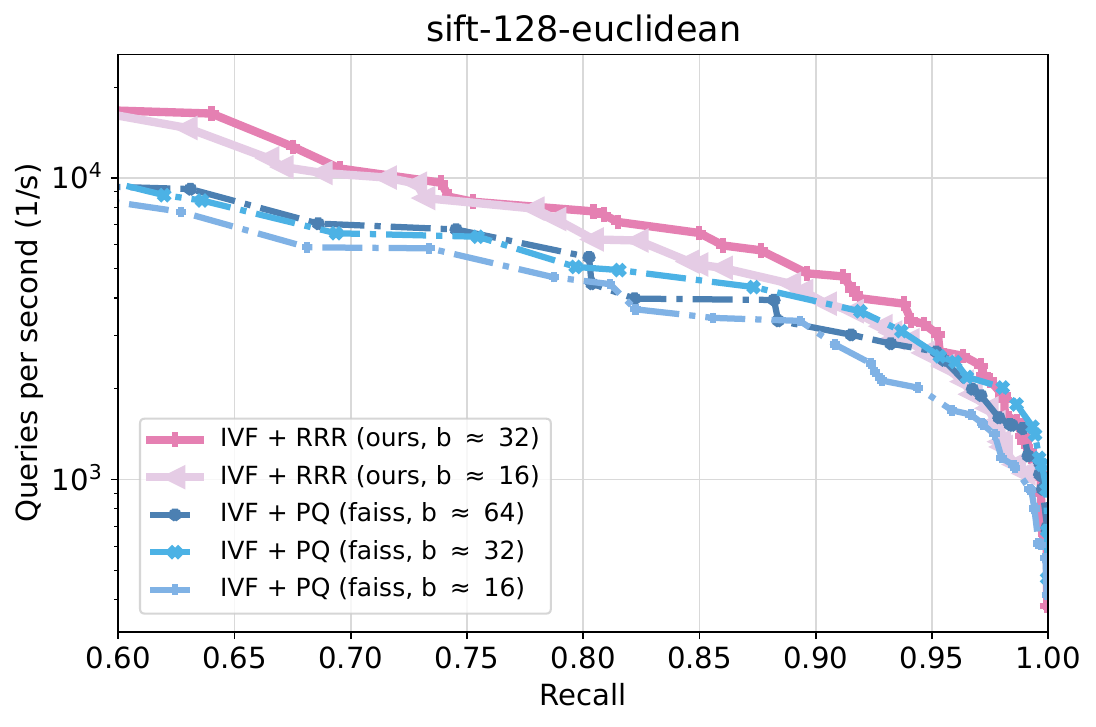}
    }
    \hspace{-0.4cm}
    \subfigure{
        \includegraphics[width=0.49\textwidth]{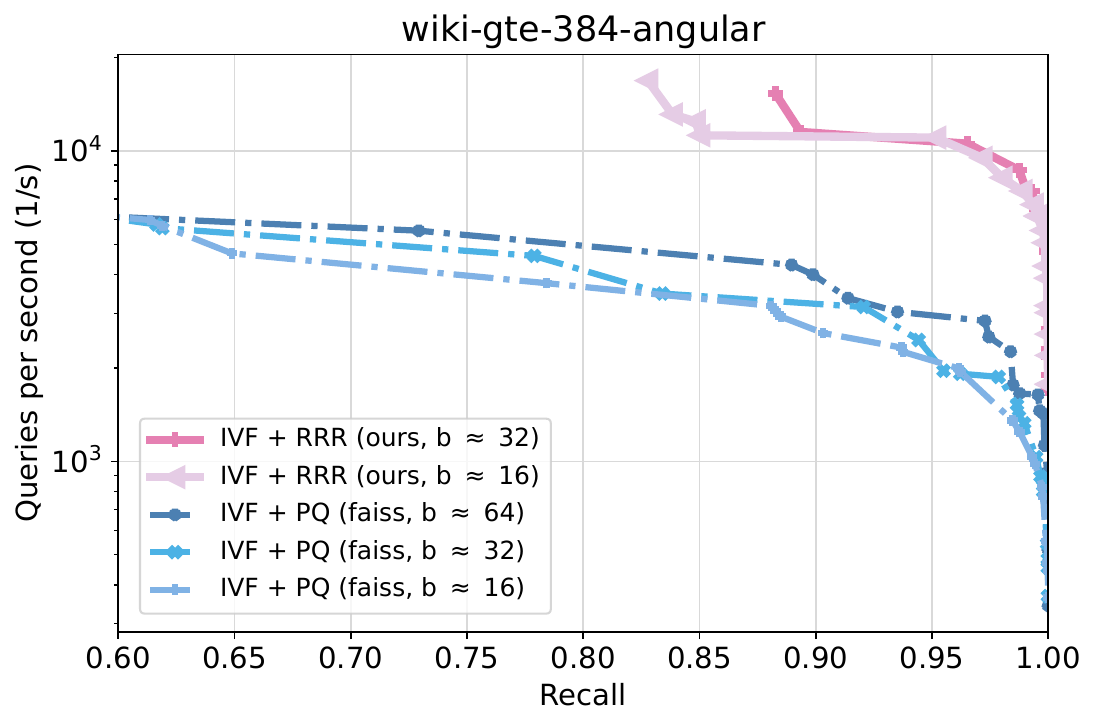}
    }

    \vspace{-0.4cm}

    \subfigure{
        \includegraphics[width=0.49\textwidth]{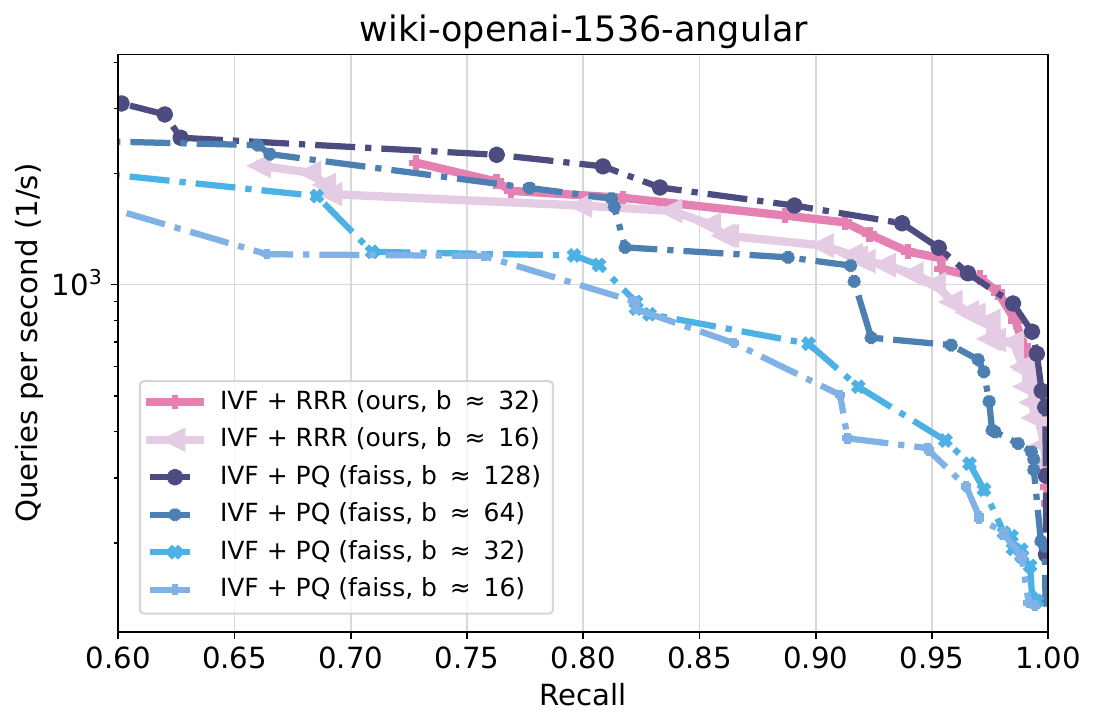}
    }
    \hspace{-0.4cm}
    \subfigure{
        \includegraphics[width=0.49\textwidth]{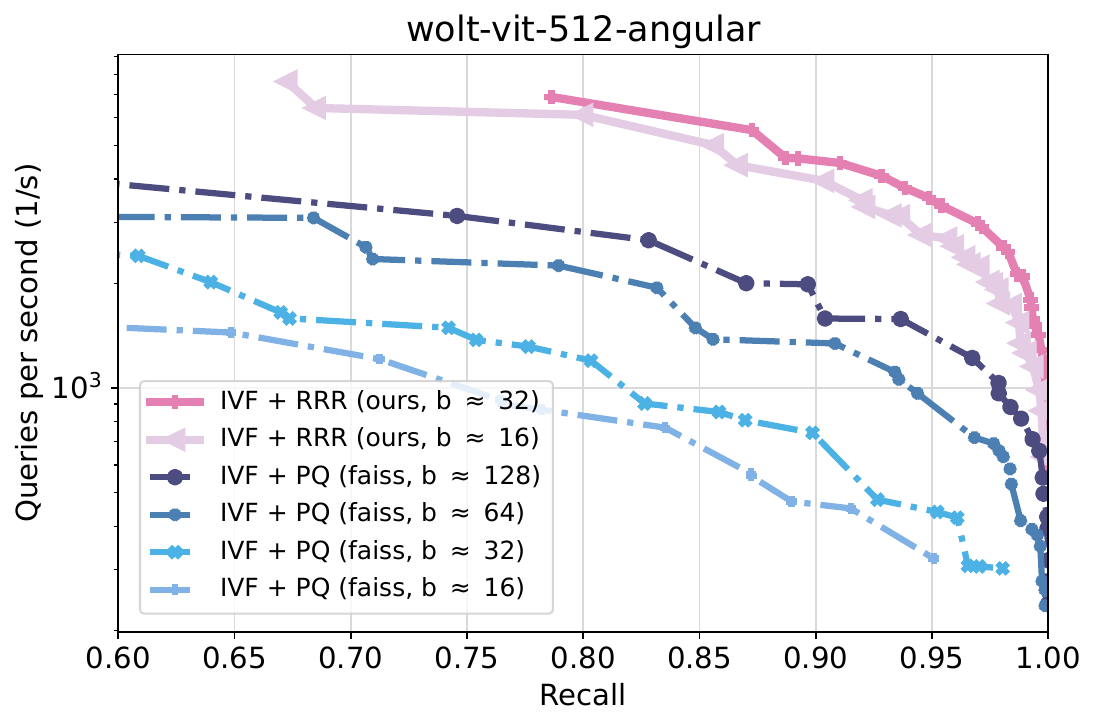}
    }
\end{figure}

\clearpage
\subsection{Memory usage (no re-ranking)}
\label{sec:full_memory_usage_norerank}

In this section, we present the evaluation of reduced-rank regression in comparison to product quantization for different memory consumptions when no final re-ranking step is used. In this scenario, the original data set does not have to be kept in memory.

\begin{figure}[htbp]
    \centering
    \subfigure{
        \includegraphics[width=0.49\textwidth]{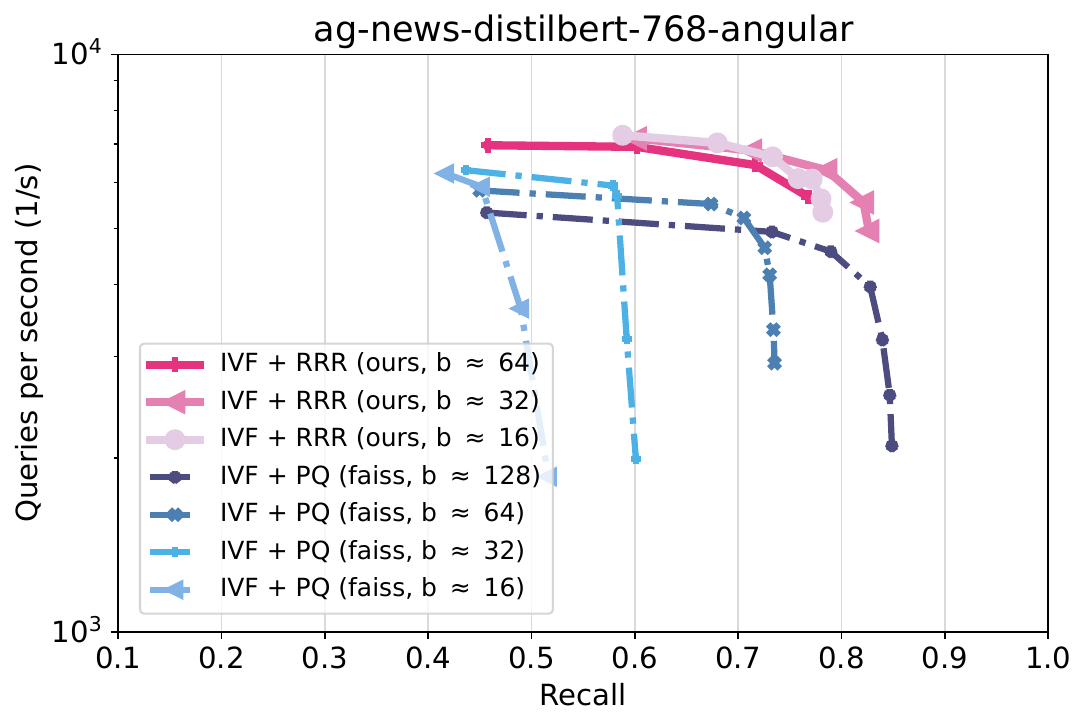}
    }
    \hspace{-0.4cm}
    \subfigure{
        \includegraphics[width=0.49\textwidth]{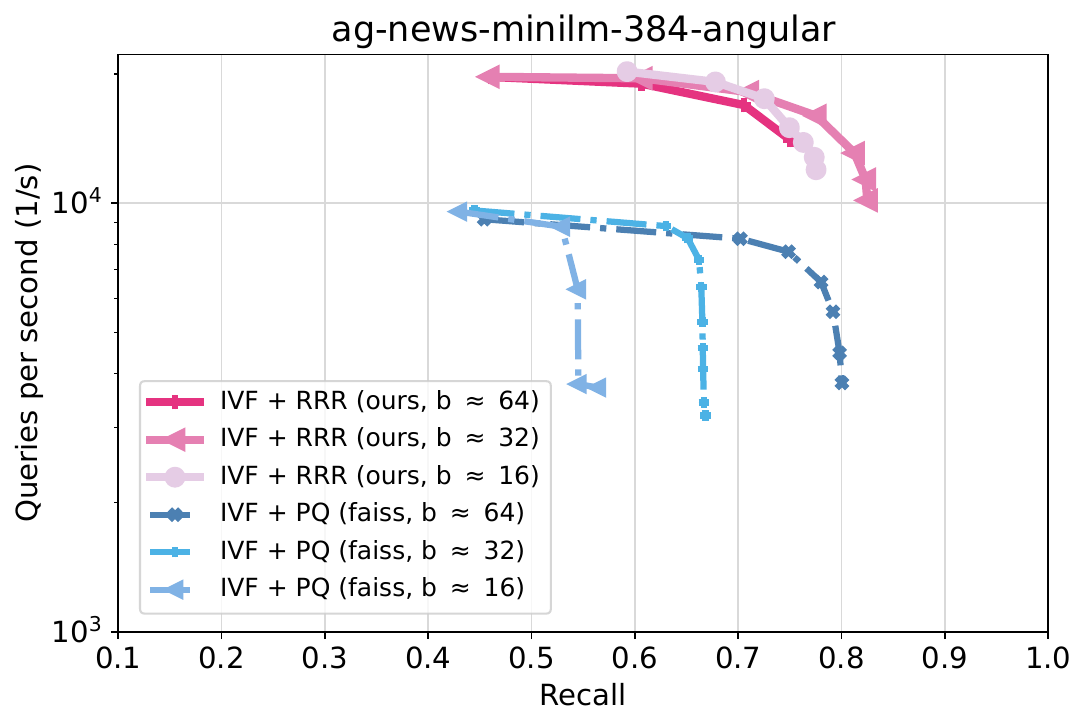}
    }

    \vspace{-0.4cm}

    \subfigure{
        \includegraphics[width=0.49\textwidth]{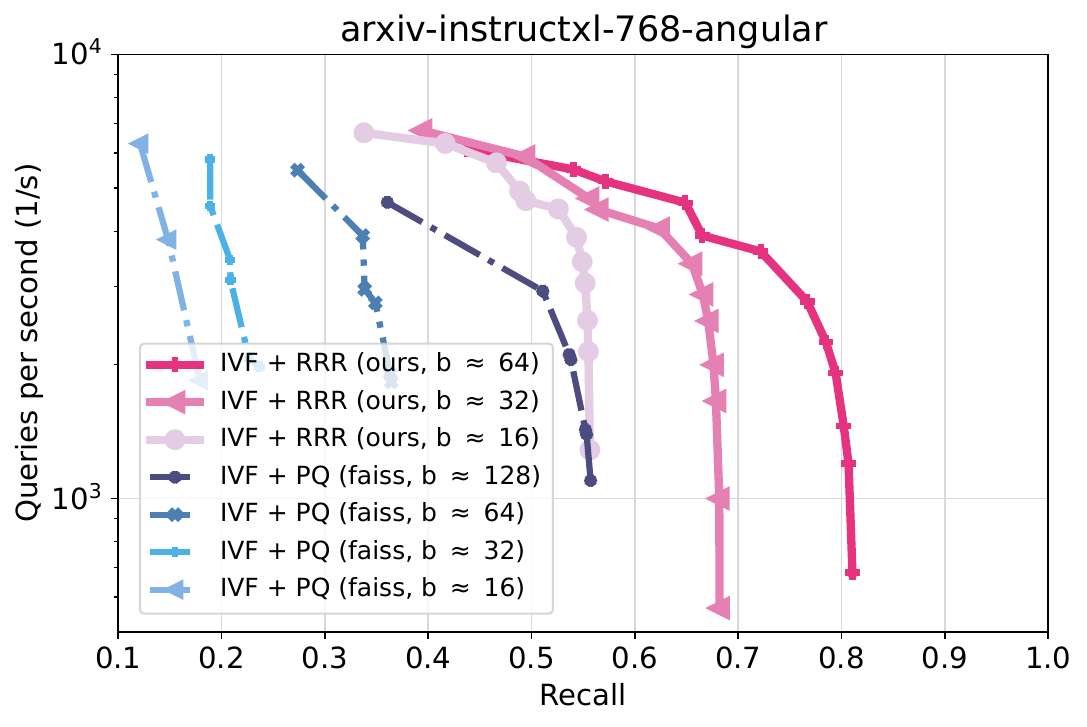}
    }
    \hspace{-0.4cm}
    \subfigure{
        \includegraphics[width=0.49\textwidth]{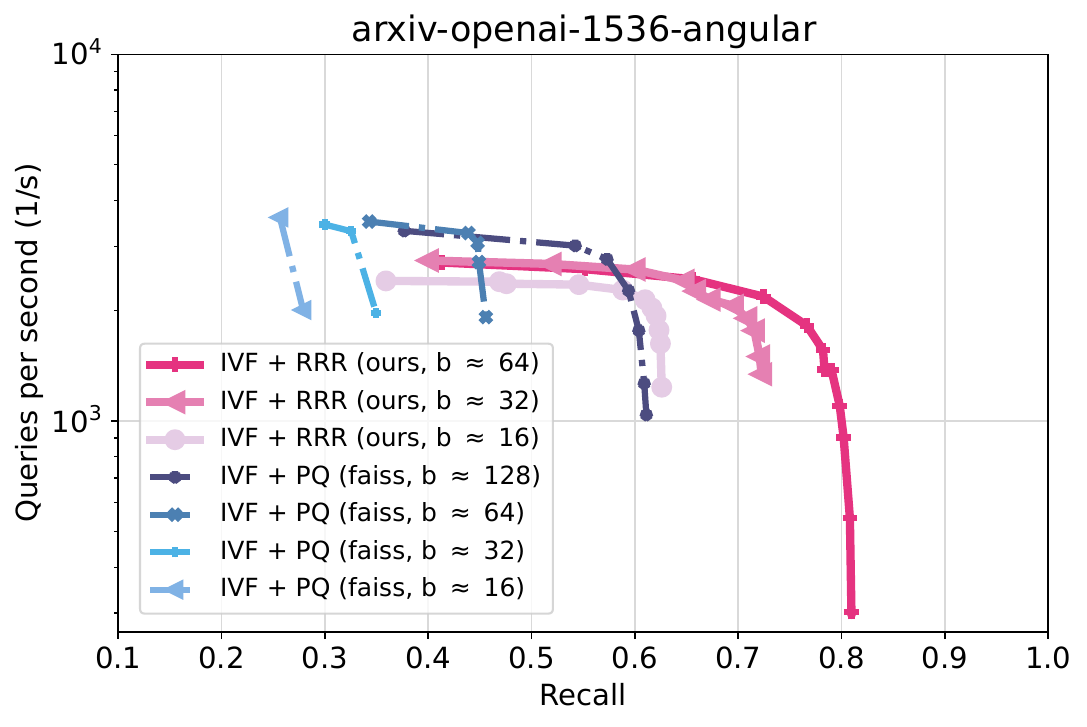}
    }

    \vspace{-0.4cm}

    \subfigure{
        \includegraphics[width=0.49\textwidth]{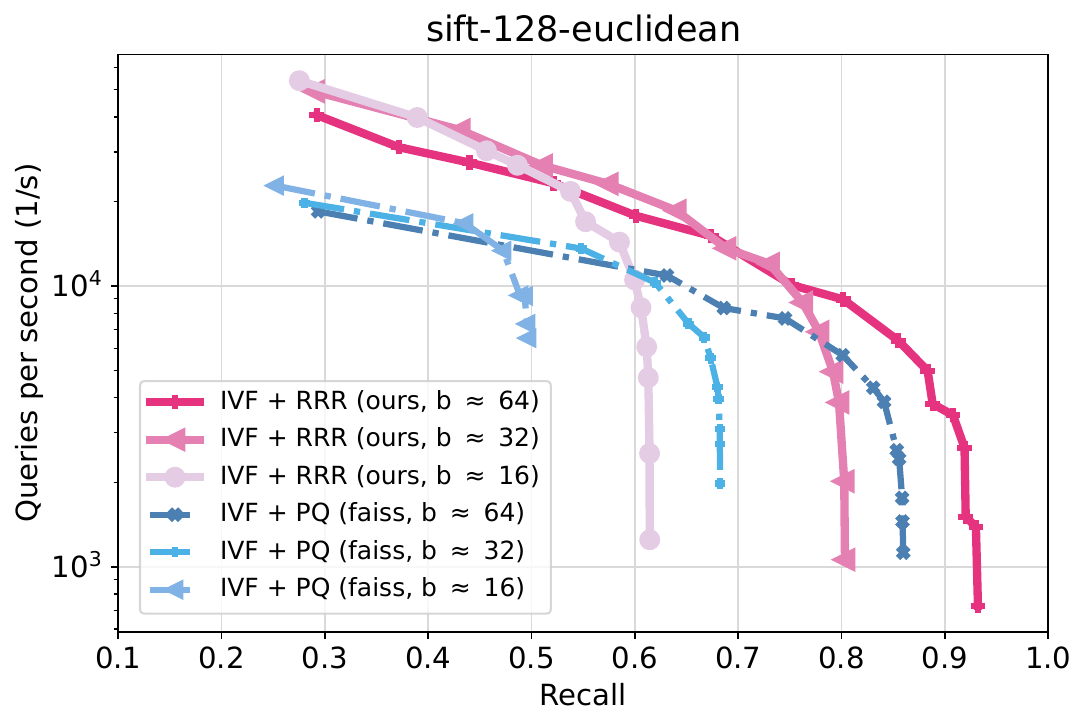}
    }
    \hspace{-0.4cm}
    \subfigure{
        \includegraphics[width=0.49\textwidth]{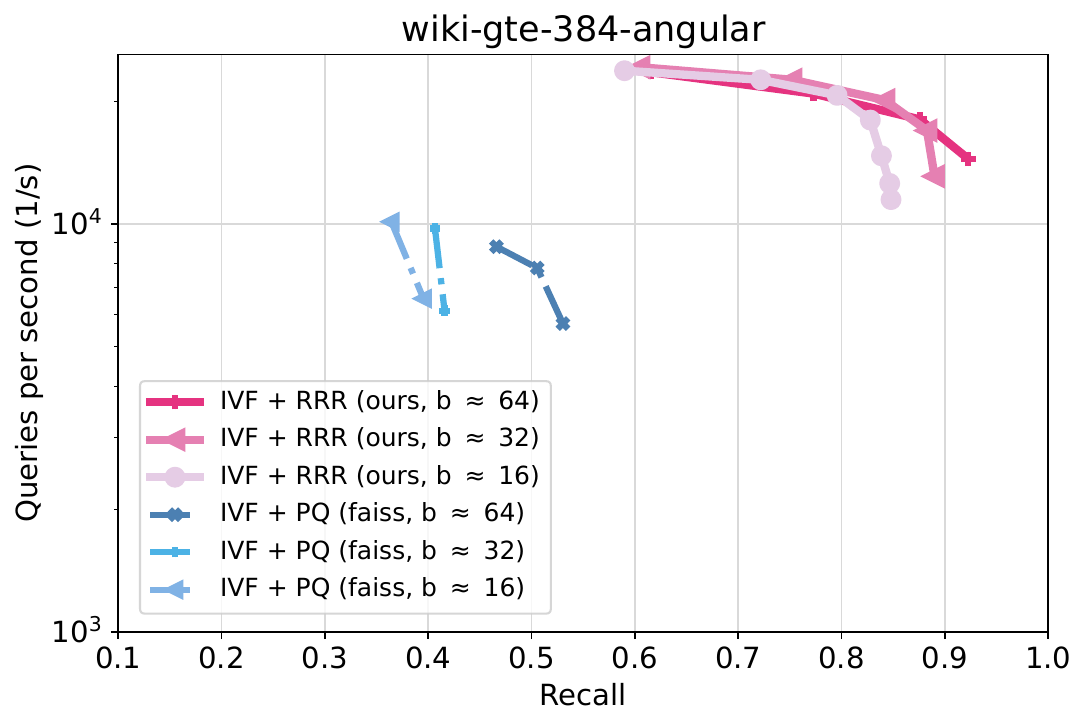}
    }

    \vspace{-0.4cm}

    \subfigure{
        \includegraphics[width=0.49\textwidth]{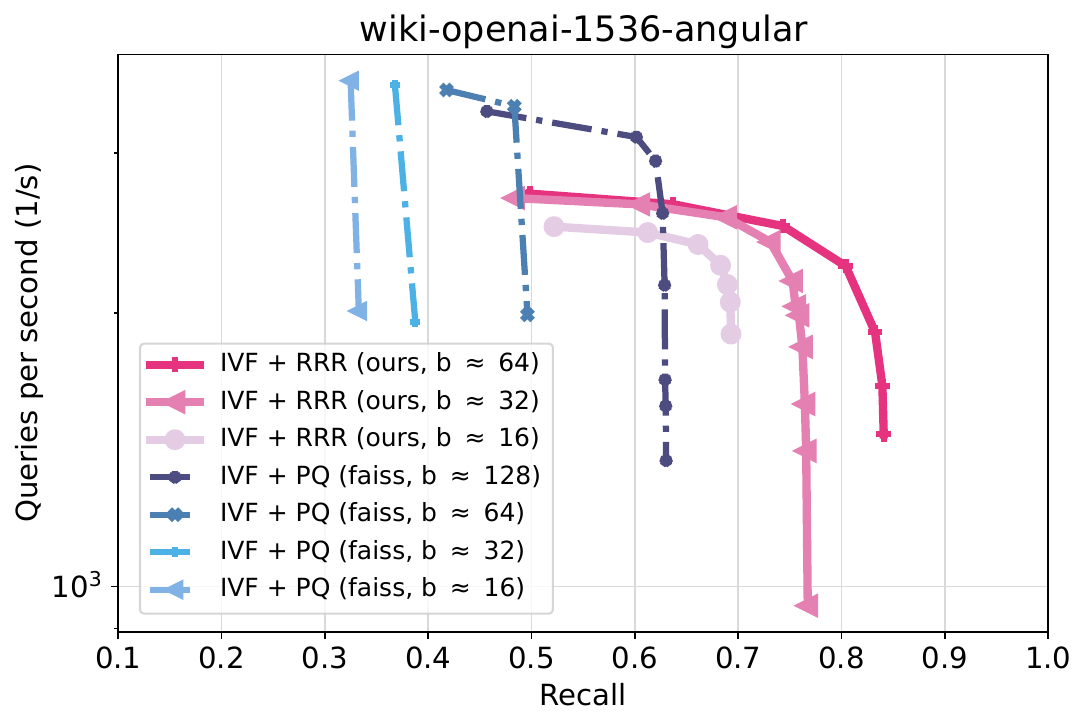}
    }
    \hspace{-0.4cm}
    \subfigure{
        \includegraphics[width=0.49\textwidth]{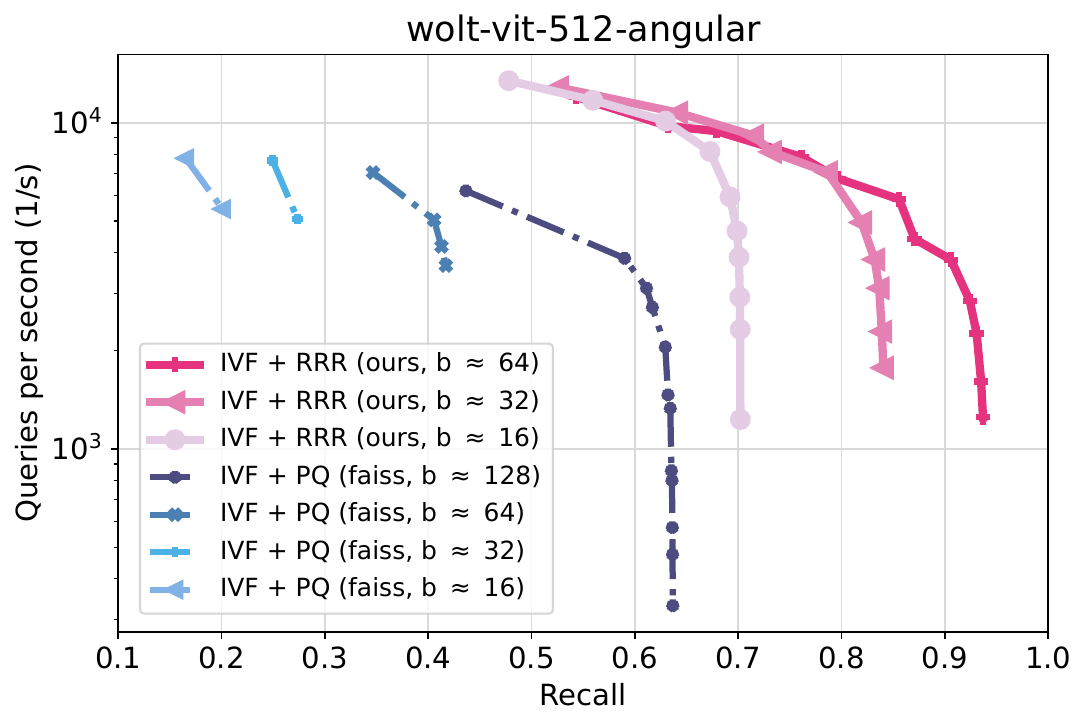}
    }
\end{figure}

\clearpage
\section{LoRANN experiments}

\subsection{Ablation study}
\label{sec:components}

In this section, we study the effect of the components (reduced-rank regression, dimensionality reduction, 8-bit quantization) of \texttt{LoRANN} on its performance. For discussion, refer to Section~\ref{sec:ablation}.

\begin{figure}[htbp]
    \centering
    \subfigure{
        \includegraphics[width=0.49\textwidth]{fig/ablation/ag-news-distilbert-768-angular.pdf}
    }
    \hspace{-0.4cm}
    \subfigure{
        \includegraphics[width=0.49\textwidth]{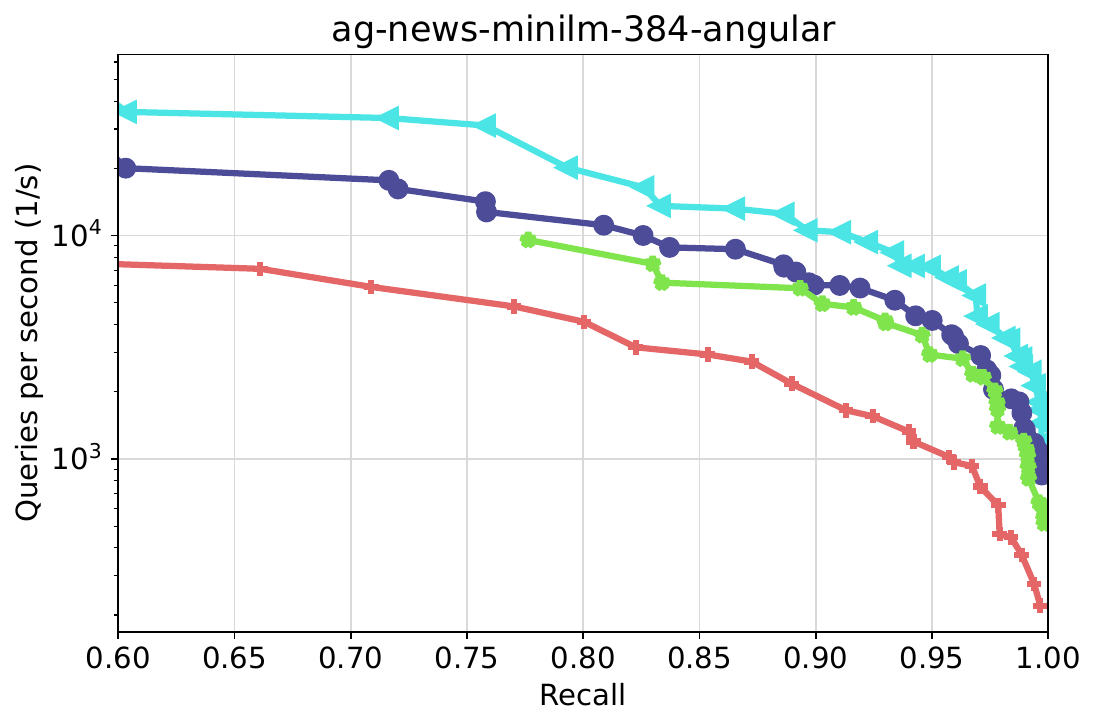}
    }

    \vspace{-0.4cm}

    \subfigure{
        \includegraphics[width=0.49\textwidth]{fig/ablation/ann-t2i-200-angular.pdf}
    }
    \hspace{-0.4cm}
    \subfigure{
        \includegraphics[width=0.49\textwidth]{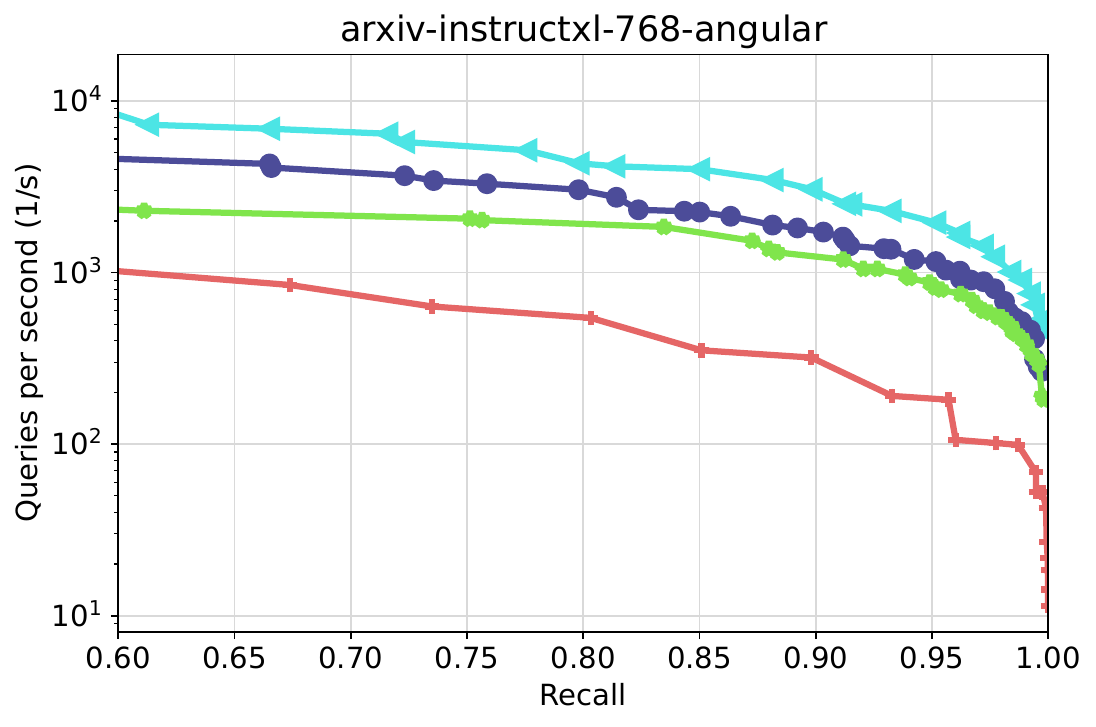}
    }

    \vspace{-0.4cm}

    \subfigure{
        \includegraphics[width=0.49\textwidth]{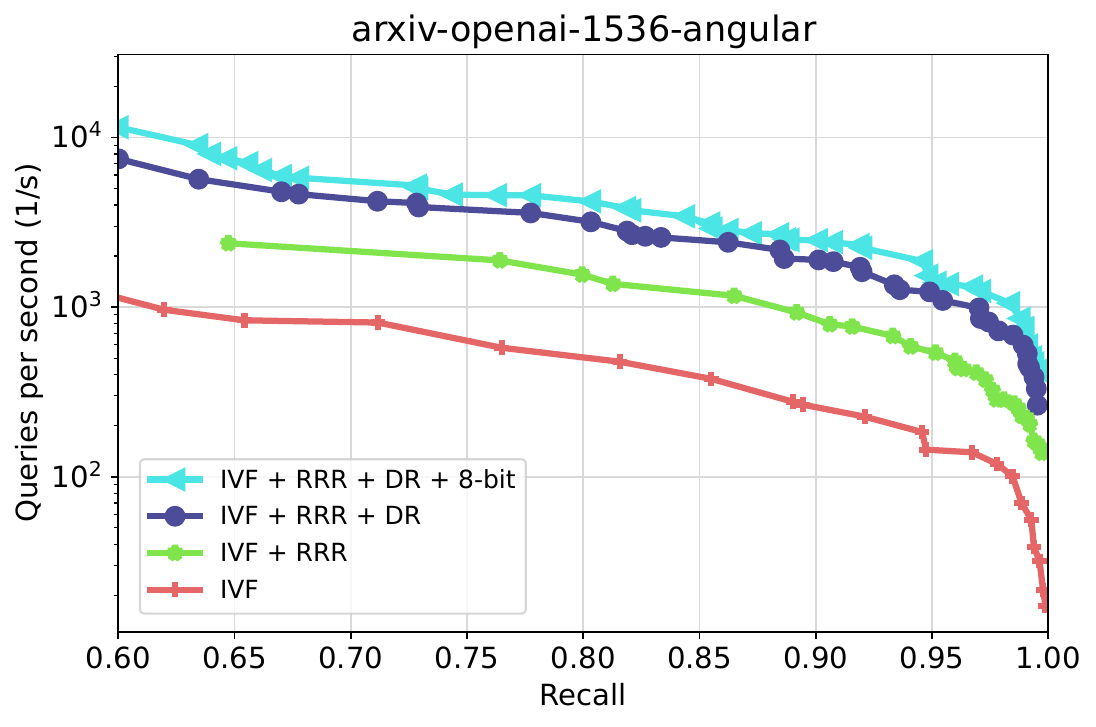}
    }
    \hspace{-0.4cm}
    \subfigure{
        \includegraphics[width=0.49\textwidth]{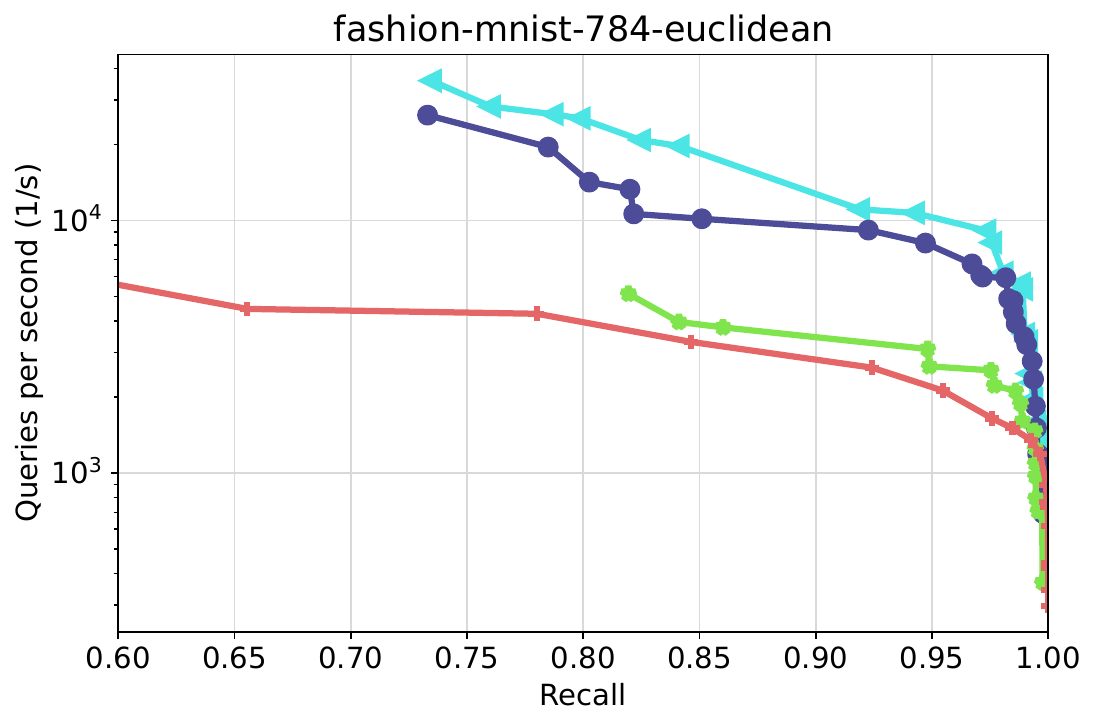}
    }

    \vspace{-0.4cm}

    \subfigure{
        \includegraphics[width=0.49\textwidth]{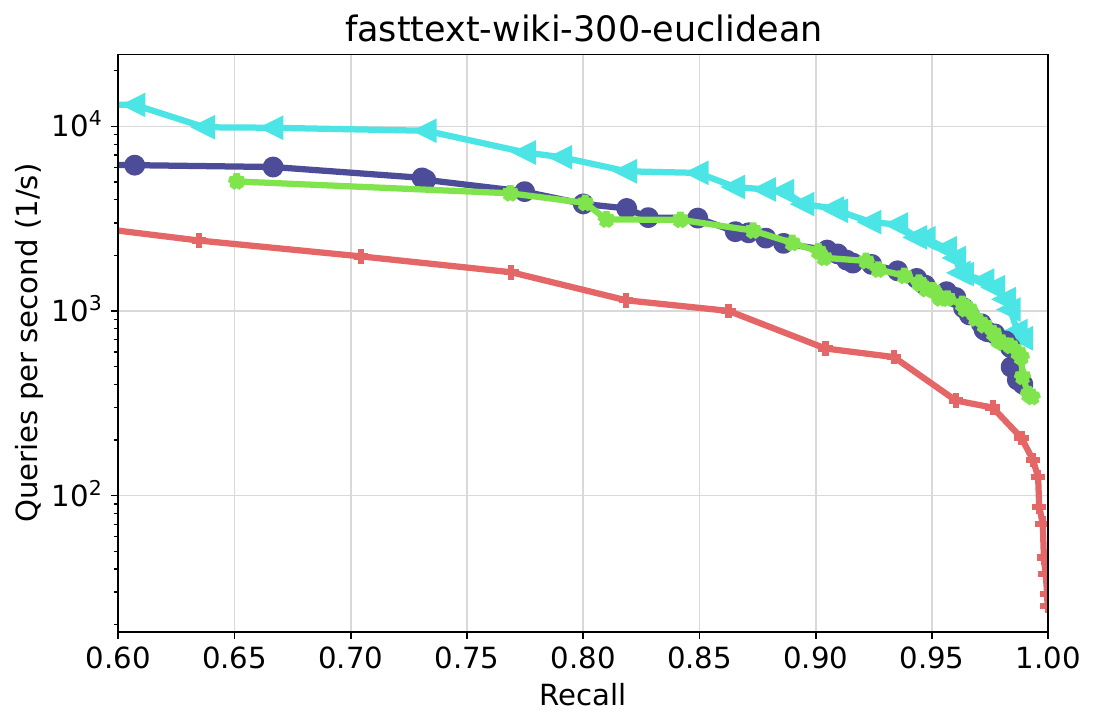}
    }
    \hspace{-0.4cm}
    \subfigure{
        \includegraphics[width=0.49\textwidth]{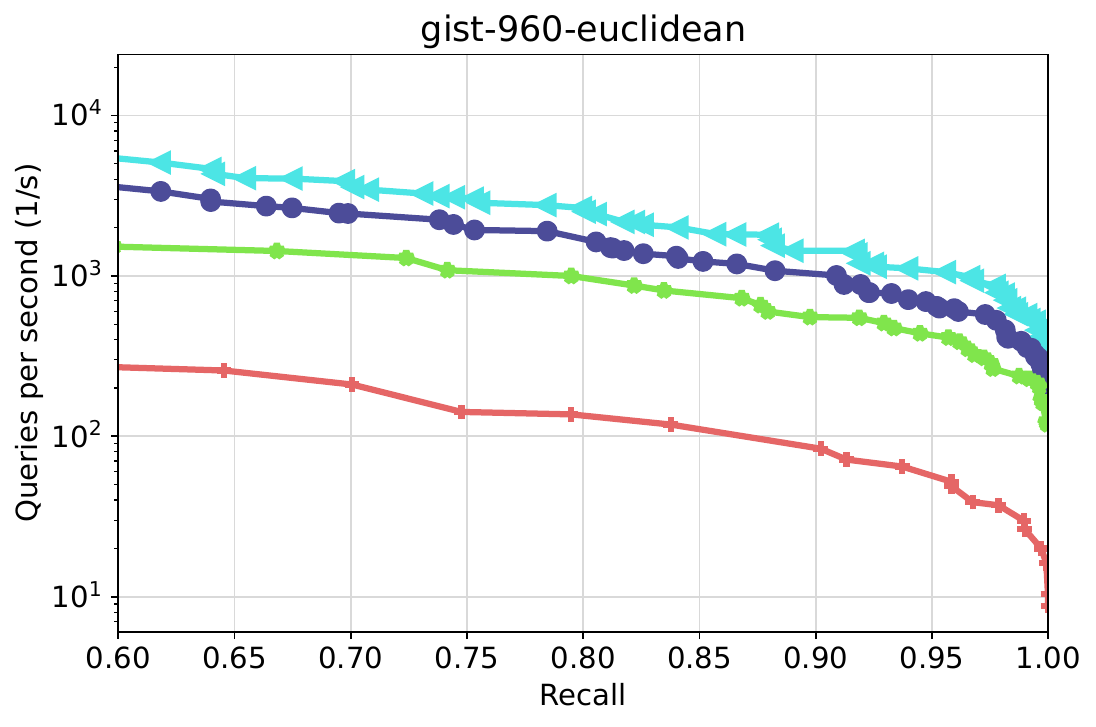}
    }
\end{figure}

\begin{figure}[htbp]
    \centering
    \subfigure{
        \includegraphics[width=0.48\textwidth]{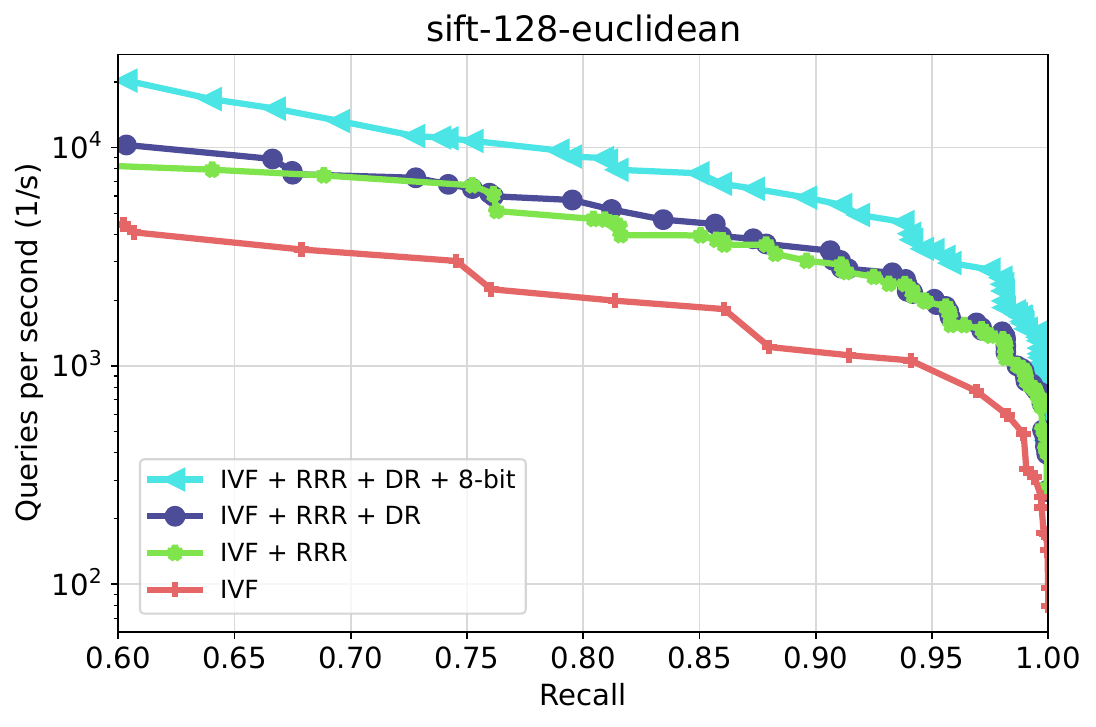}
    }
    \hspace{-0.4cm}
    \subfigure{
        \includegraphics[width=0.48\textwidth]{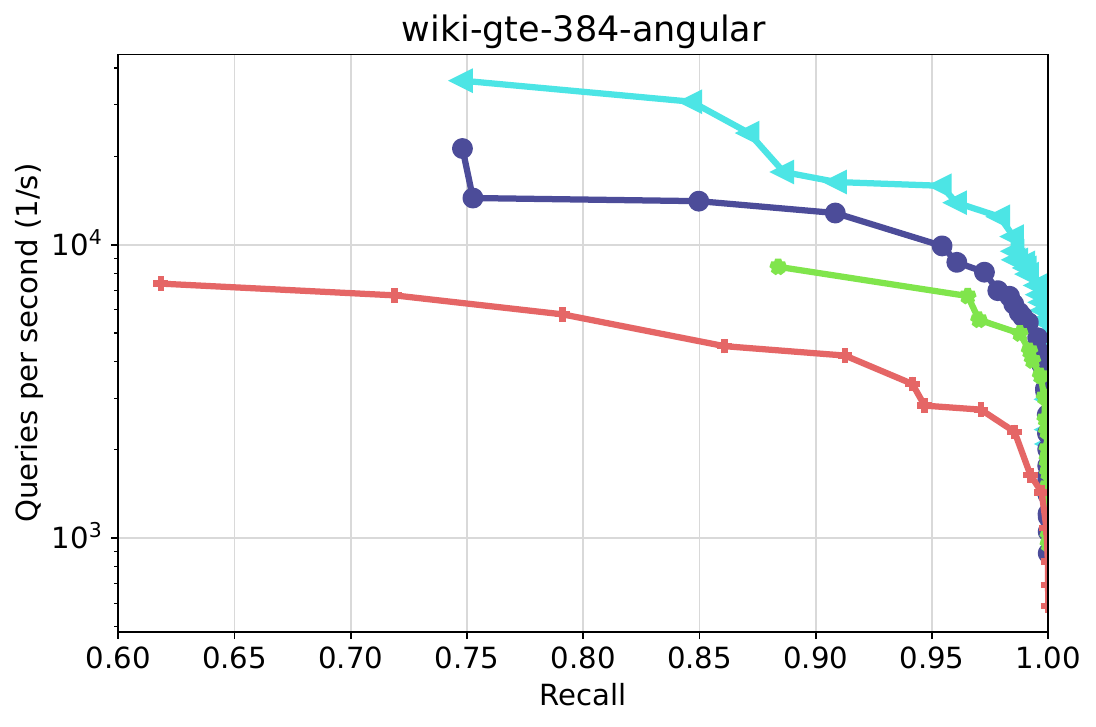}
    }

    \vspace{-0.5cm}

    \subfigure{
        \includegraphics[width=0.48\textwidth]{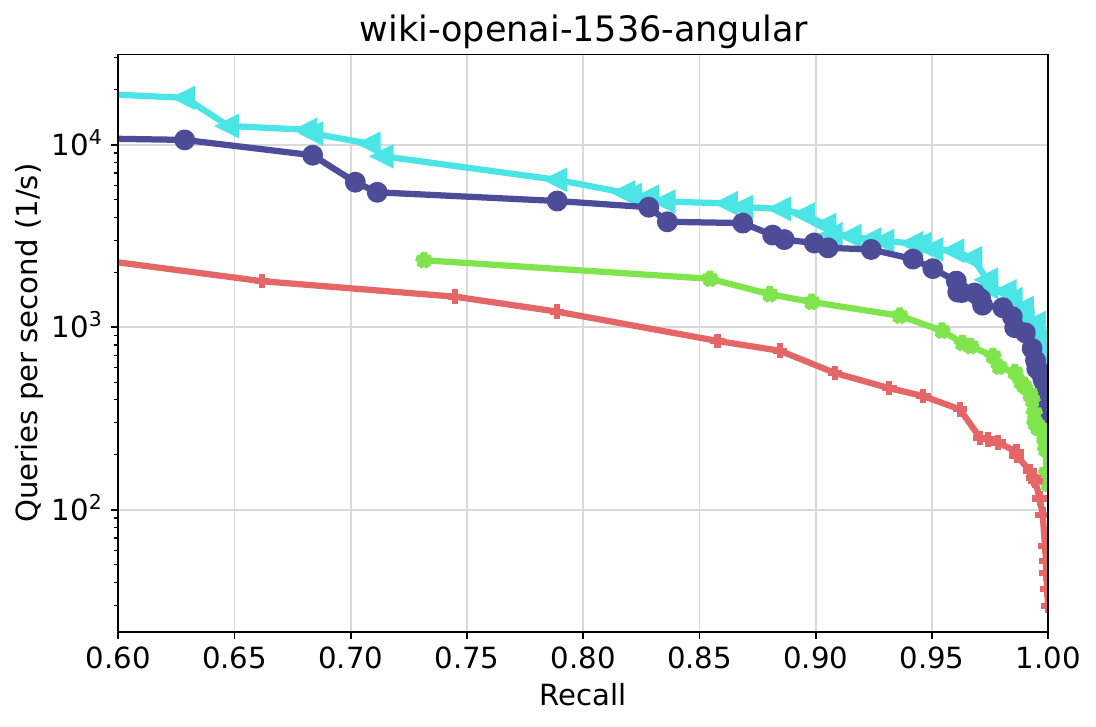}
    }
    \hspace{-0.4cm}
    \subfigure{
        \includegraphics[width=0.48\textwidth]{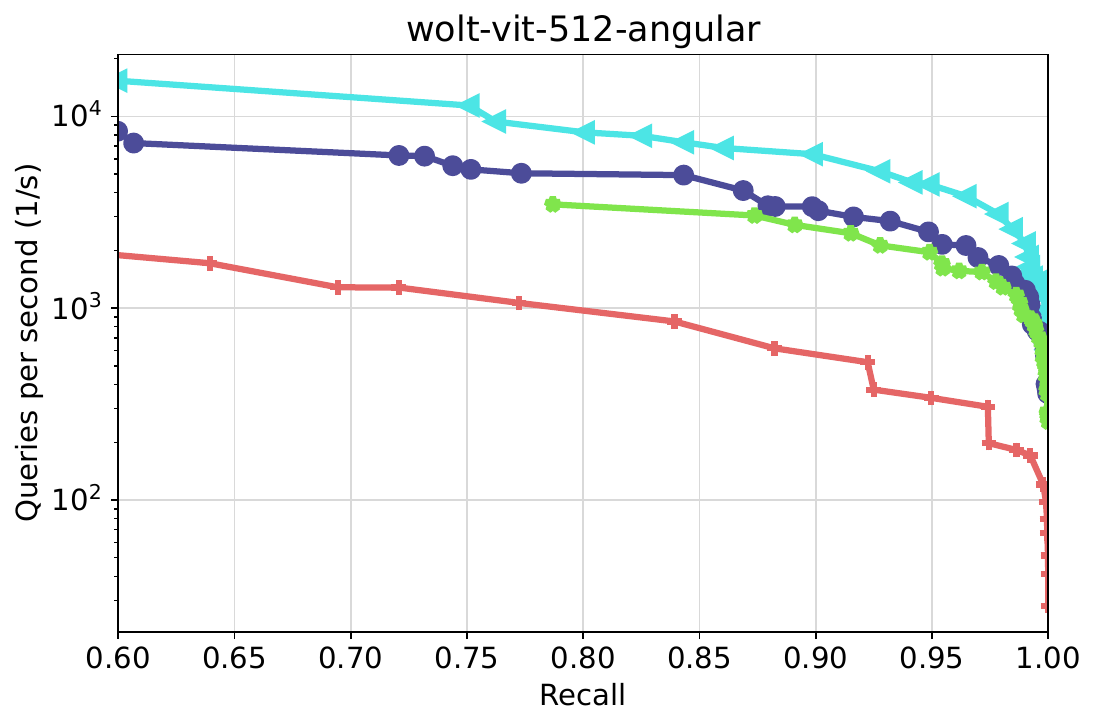}
    }
\end{figure}

\subsection{Effect of the rank}
\label{sec:r-ablation}

In Figure~\ref{fig:r-ablation}, we study the impact of the rank $r$ of the parameter matrices $\bm{\hat{\beta}}_{\text{RRR}}$. We find that \texttt{LoRANN} is robust with respect to the choice of $r$: increasing $r$ from 32 to 64 has little effect (when the final re-ranking step is used), while $r = 16$ performs only slightly worse for high-dimensional data. In our experiments, we use $r = 32$, but $r = 16$ can be used to decrease the memory consumption, and $r = 64$ can be used to achieve higher recall if the final re-ranking step is not used.

\begin{figure}[htbp]
    \centering
    \subfigure{
        \includegraphics[width=0.47\textwidth]{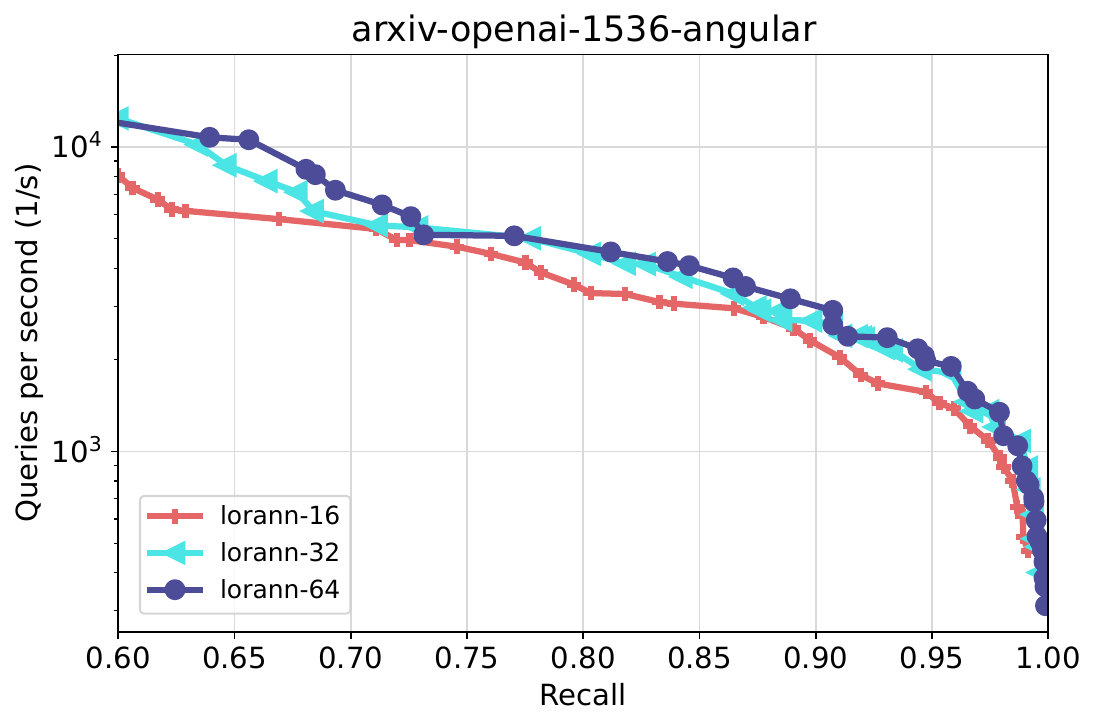}
    }
    \hspace{-0.4cm}
    \subfigure{
        \includegraphics[width=0.47\textwidth]{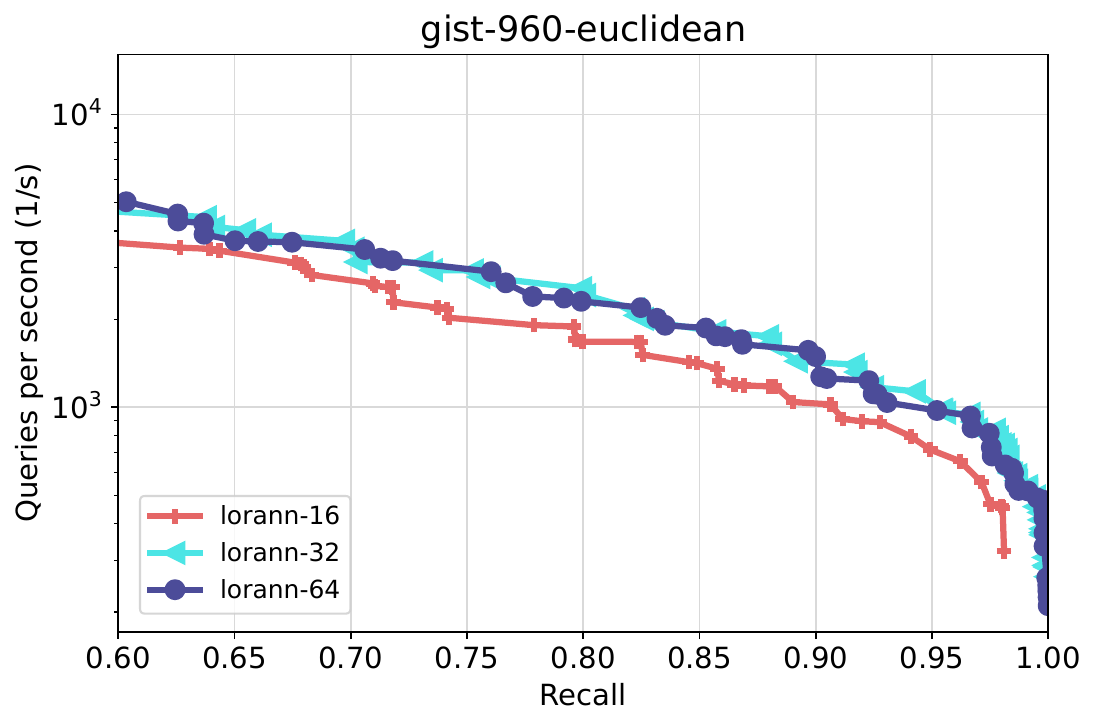}
    }

    \vspace{-0.5cm}

    \subfigure{
        \includegraphics[width=0.47\textwidth]{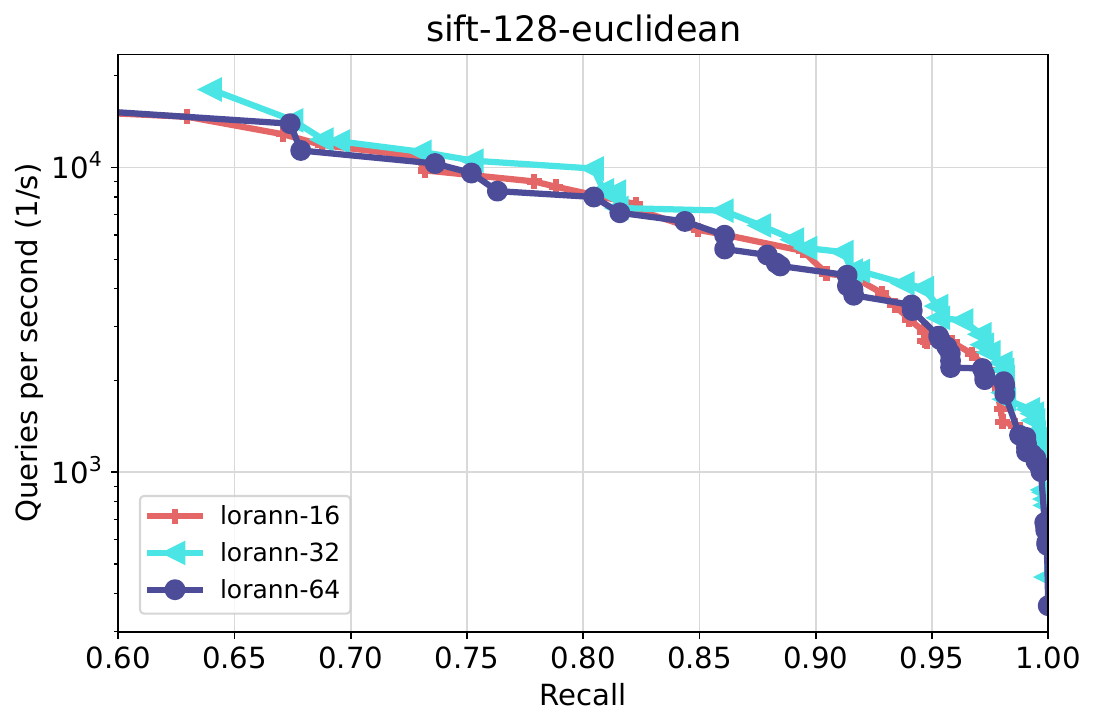}
    }
    \hspace{-0.4cm}
    \subfigure{
        \includegraphics[width=0.47\textwidth]{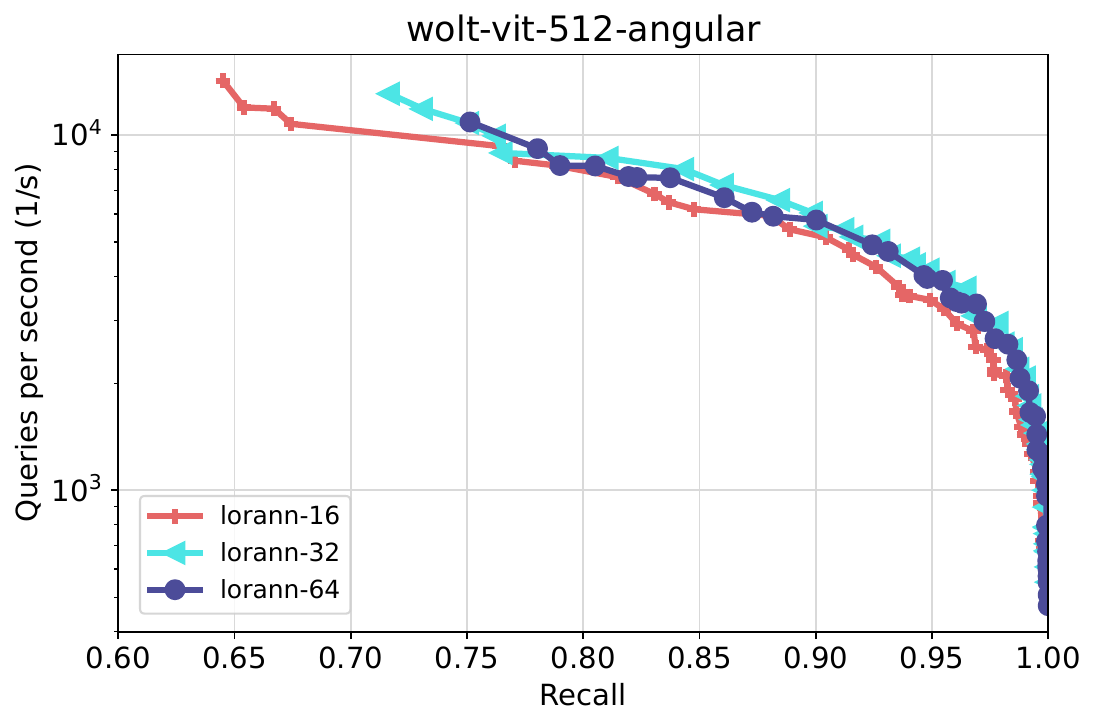}
    }
    \caption{Effect of the rank $r$ of the parameter matrices $\bm{\hat{\beta}}_{\text{RRR}}$ in \texttt{LoRANN}.}
    \label{fig:r-ablation}
\end{figure}

\clearpage
\subsection{CPU Evaluation}
\label{sec:full-cpu}

In this section, we present the complete evaluation of \texttt{LoRANN} in the CPU setting in comparison to other ANN algorithms. For discussion, refer to Section~\ref{sec:experiments-cpu}.

\begin{figure}[htbp]
    \centering
    \subfigure{
        \includegraphics[width=0.49\textwidth]{fig/cpu/ag-news-distilbert-768-angular.pdf}
    }
    \hspace{-0.4cm}
    \subfigure{
        \includegraphics[width=0.49\textwidth]{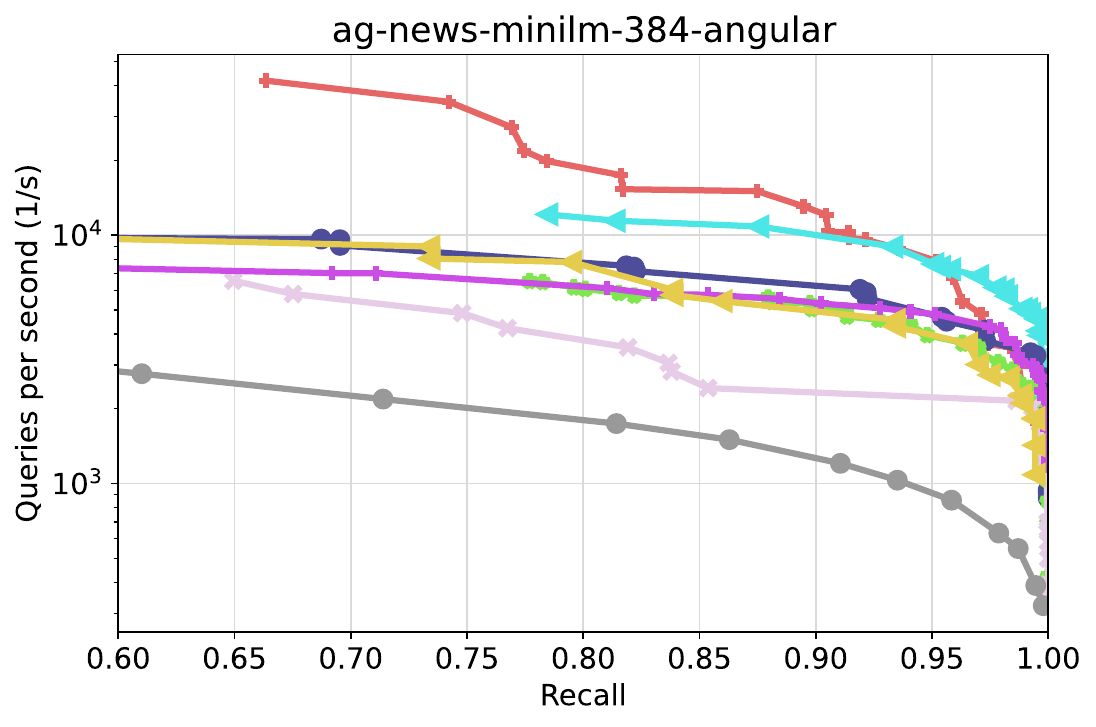}
    }

    \vspace{-0.4cm}

    \subfigure{
        \includegraphics[width=0.49\textwidth]{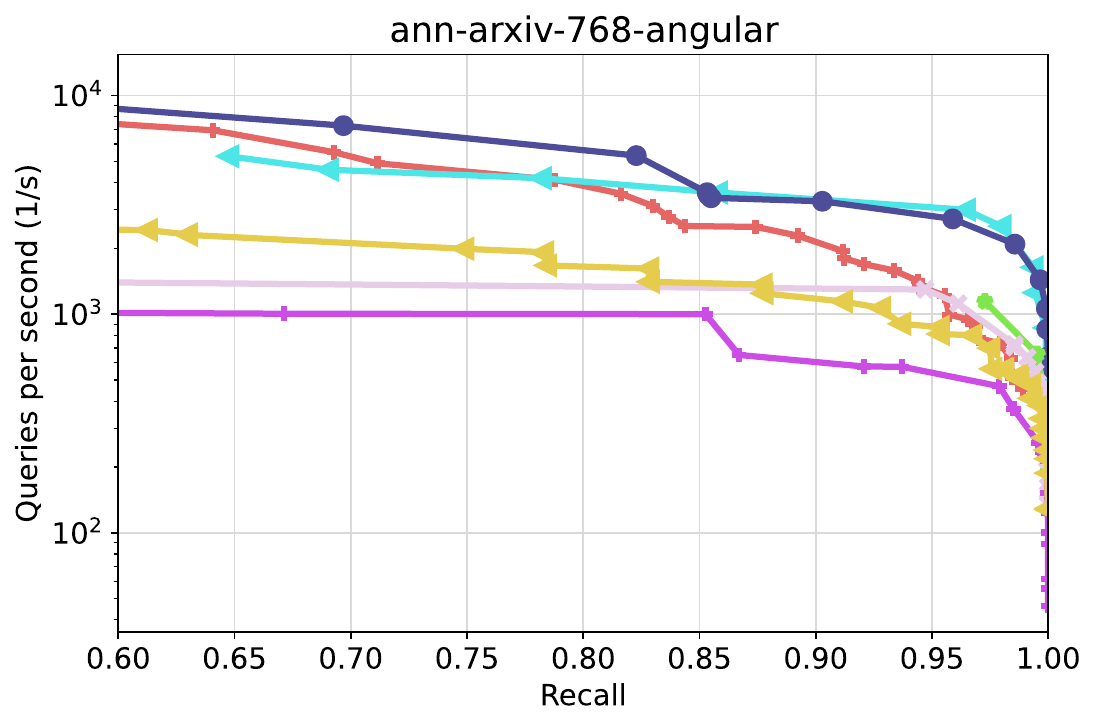}
    }
    \hspace{-0.4cm}
    \subfigure{
        \includegraphics[width=0.49\textwidth]{fig/cpu/ann-t2i-200-angular.pdf}
    }

    \vspace{-0.4cm}

    \subfigure{
        \includegraphics[width=0.49\textwidth]{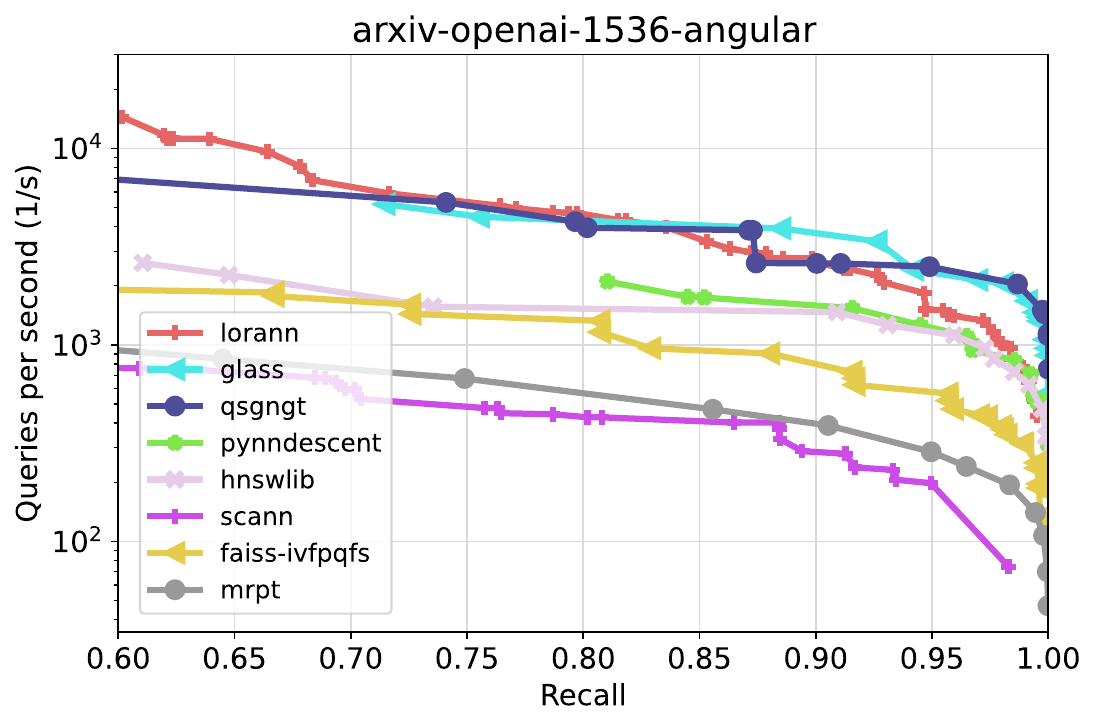}
    }
    \hspace{-0.4cm}
    \subfigure{
        \includegraphics[width=0.49\textwidth]{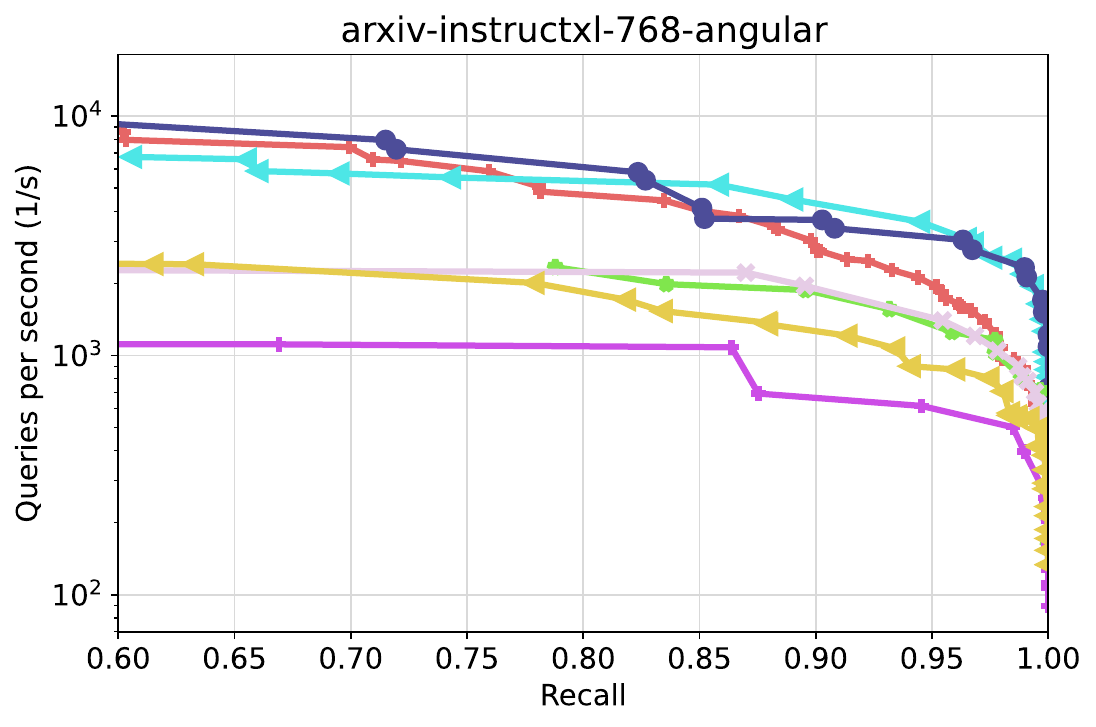}
    }

    \vspace{-0.4cm}

    \subfigure{
        \includegraphics[width=0.49\textwidth]{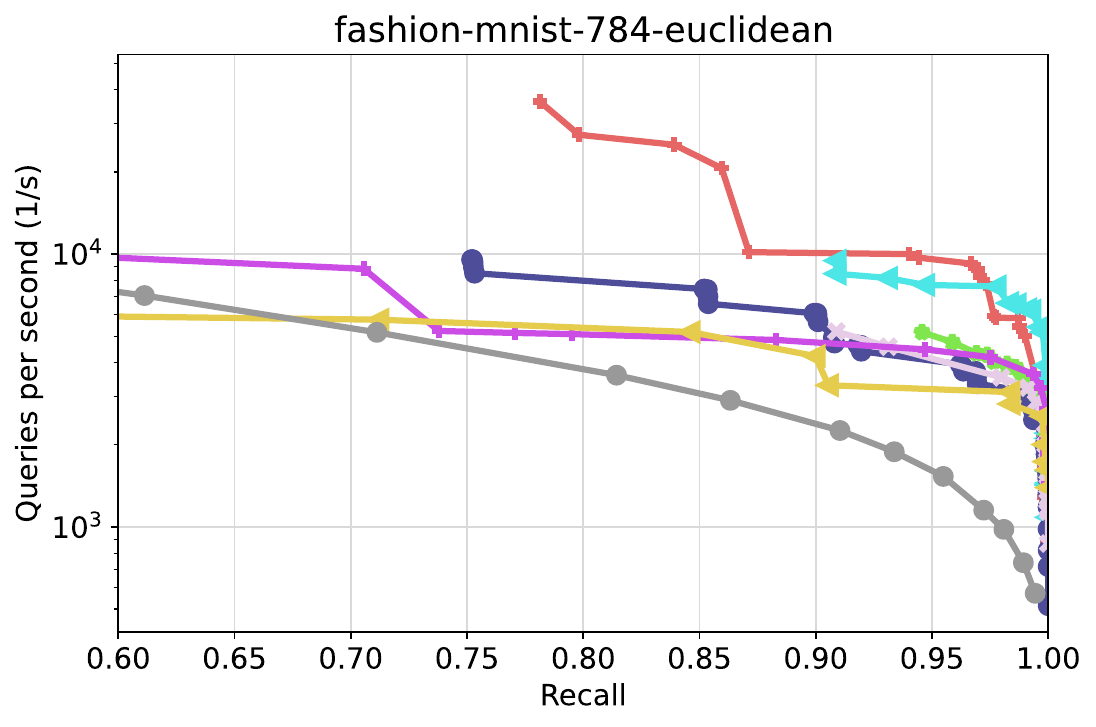}
    }
    \hspace{-0.4cm}
    \subfigure{
        \includegraphics[width=0.49\textwidth]{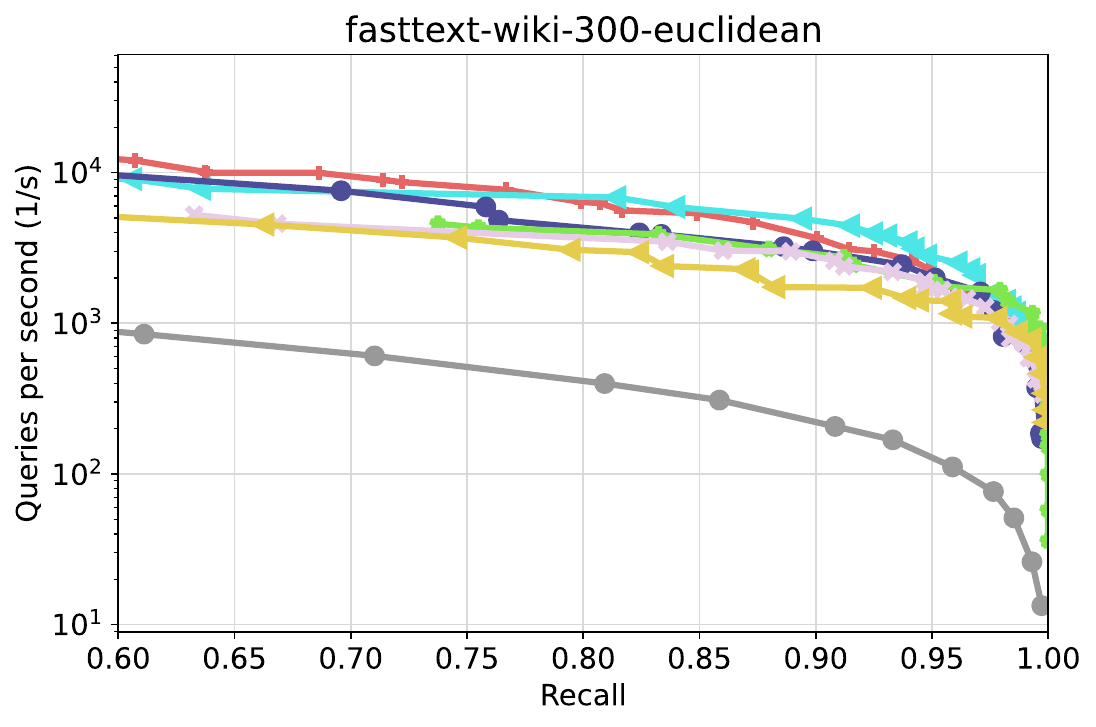}
    }

\end{figure}

\begin{figure}[htbp]
    \centering
    \subfigure{
        \includegraphics[width=0.49\textwidth]{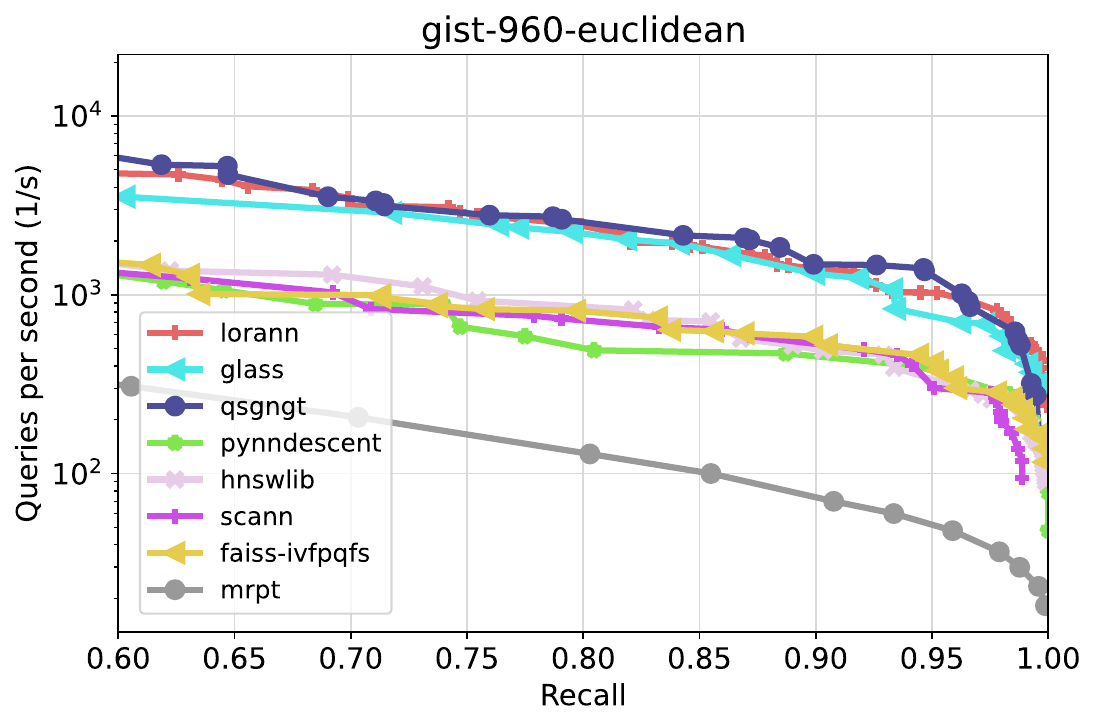}
    }
    \hspace{-0.4cm}
    \subfigure{
        \includegraphics[width=0.49\textwidth]{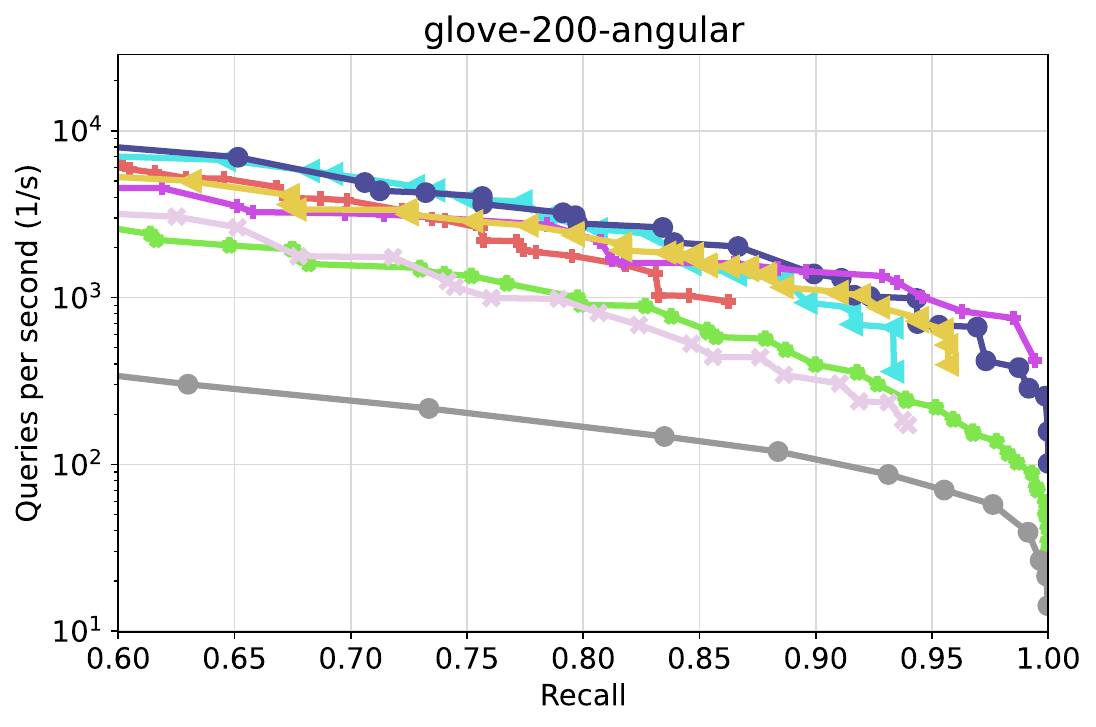}
    }

    \vspace{-0.4cm}
    
    \subfigure{
        \includegraphics[width=0.49\textwidth]{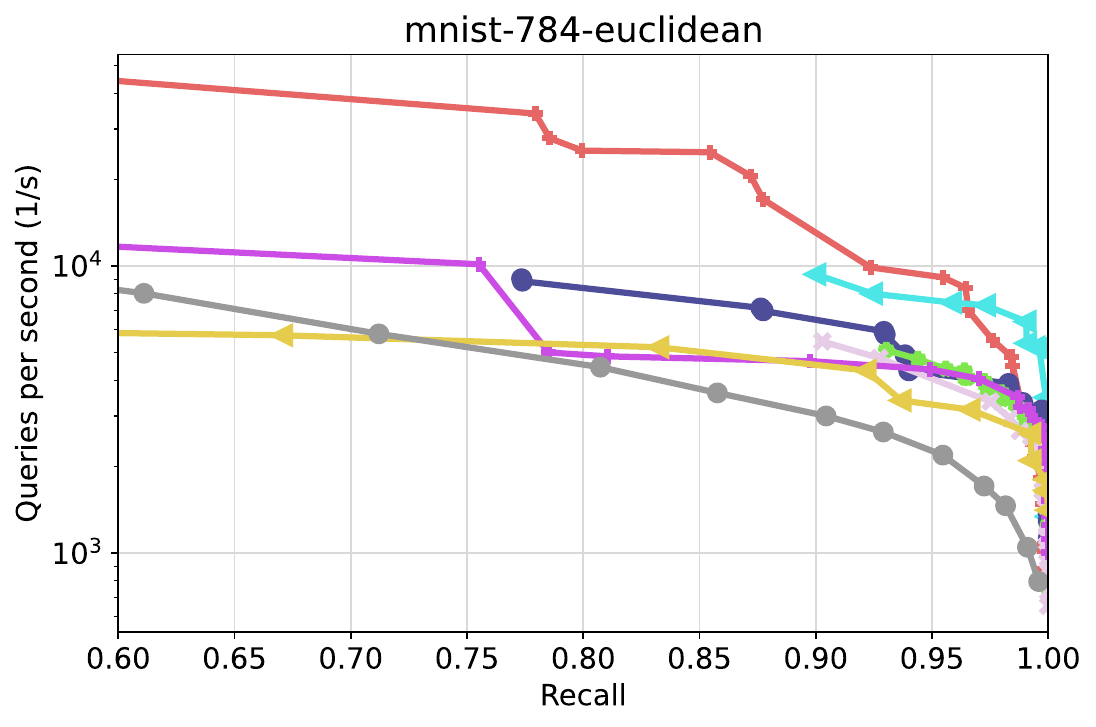}
    }
    \hspace{-0.4cm}
    \subfigure{
        \includegraphics[width=0.49\textwidth]{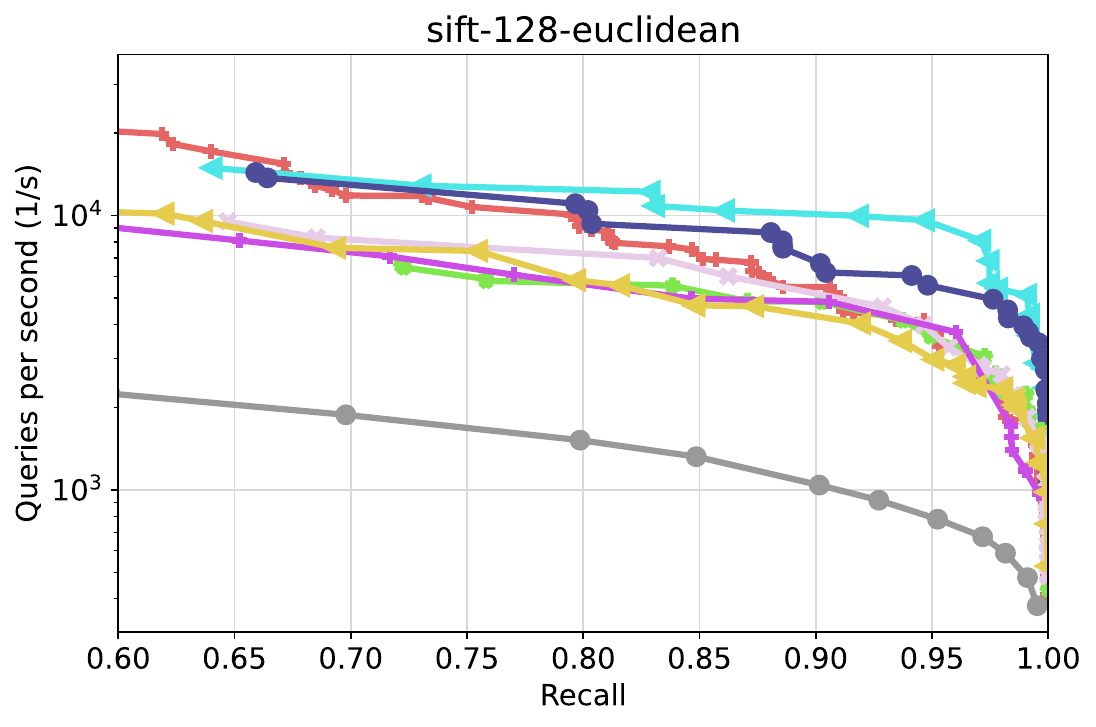}
    }

    \vspace{-0.4cm}

    \subfigure{
        \includegraphics[width=0.49\textwidth]{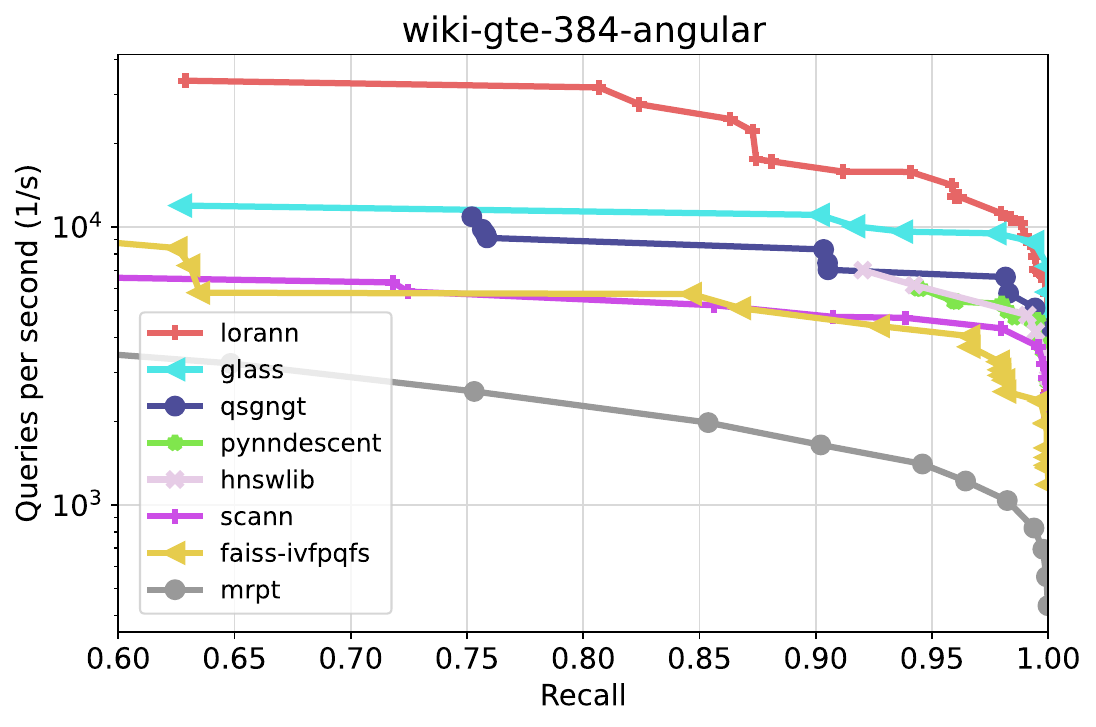}
    }
    \hspace{-0.4cm}
    \subfigure{
        \includegraphics[width=0.49\textwidth]{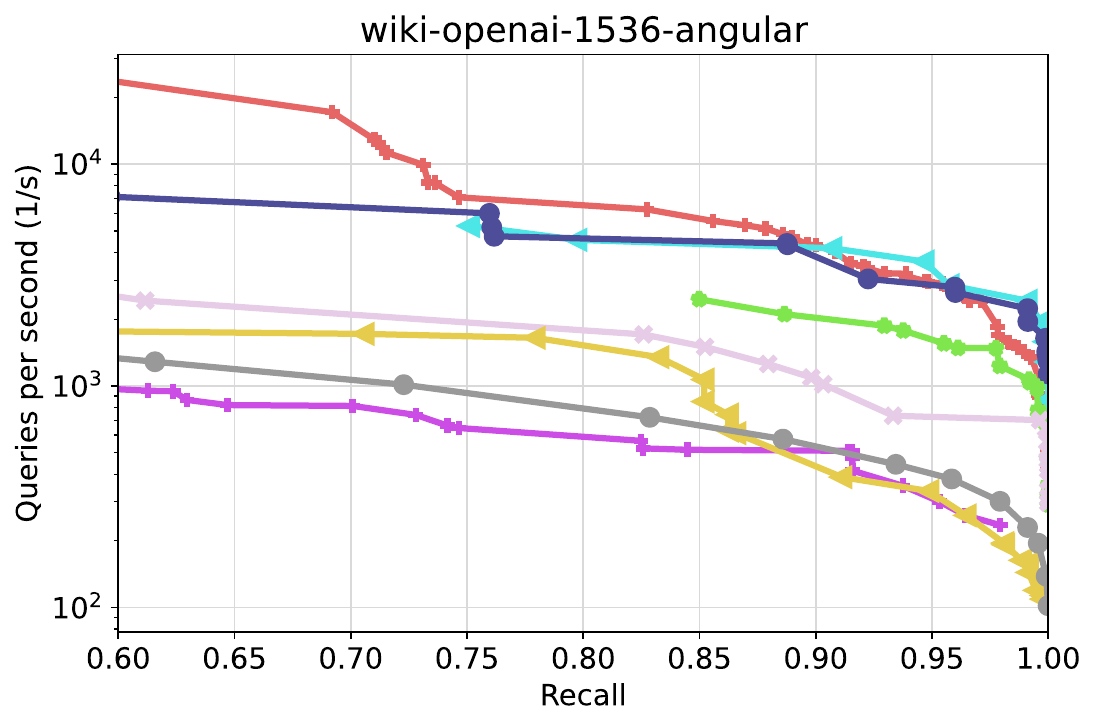}
    }

    \vspace{-0.4cm}

    \subfigure{
        \includegraphics[width=0.49\textwidth]{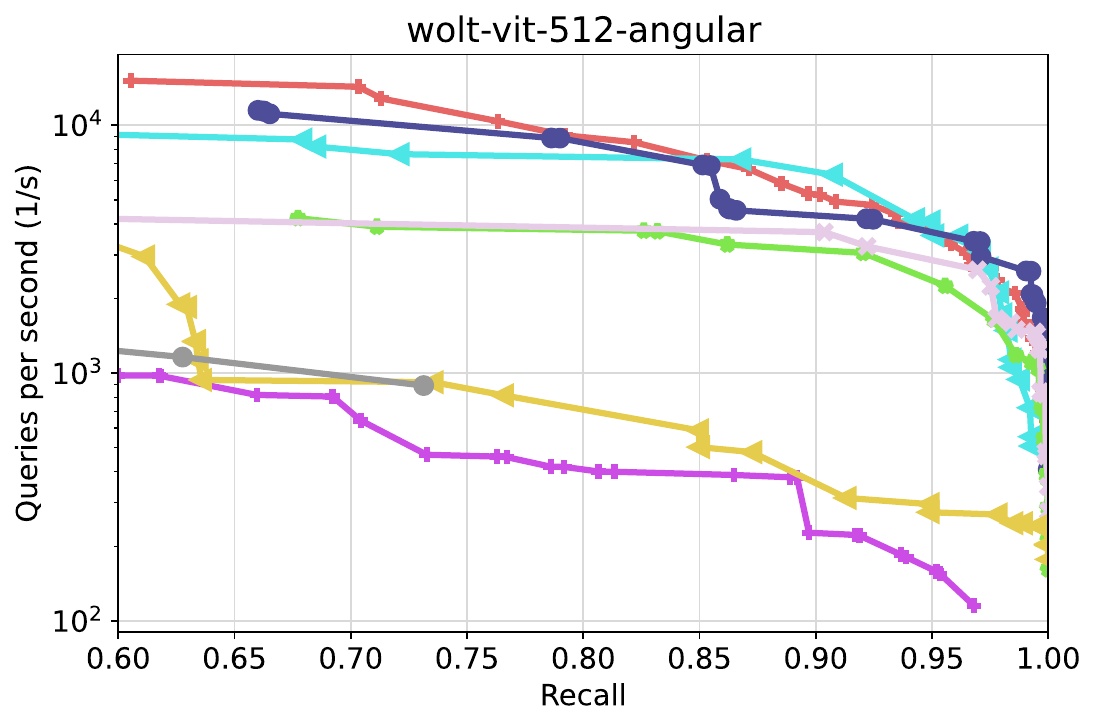}
    }
    \hspace{-0.4cm}
    \subfigure{
        \includegraphics[width=0.49\textwidth]{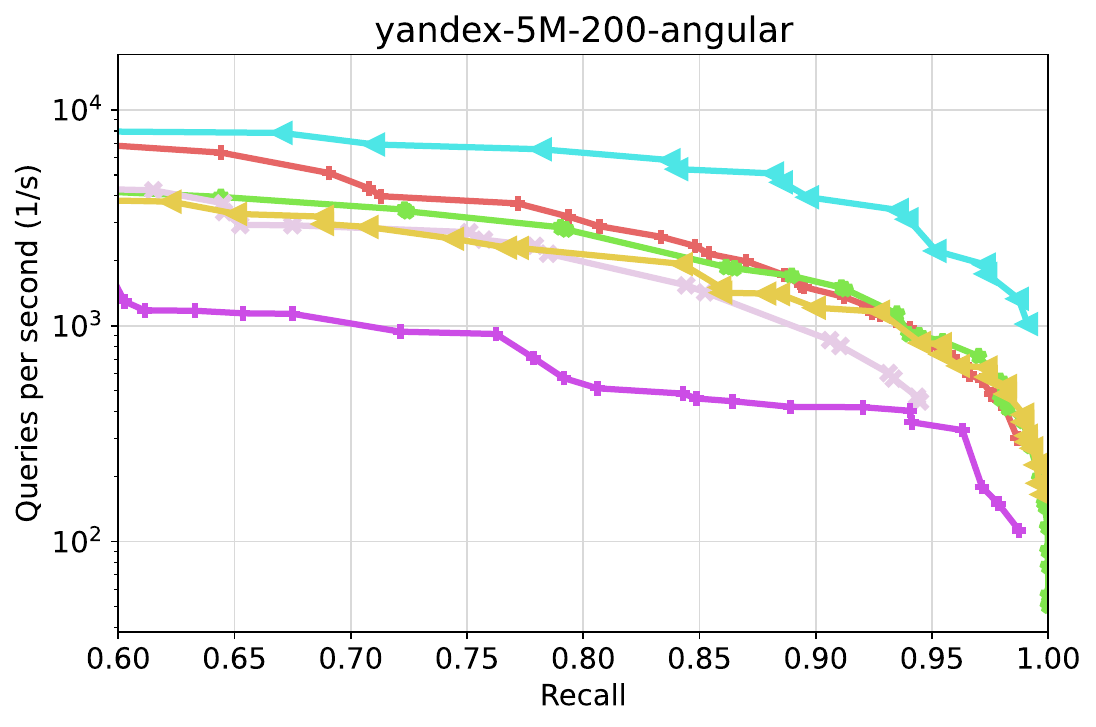}
    }
\end{figure}

\clearpage
\subsection{Index construction time}
\label{sec:construction_time}

For all libraries, we measure the index construction time on a single CPU core (for \texttt{LoRANN}, the index construction parallelizes easily as all local cluster models can be computed independently). Table~\ref{table:construction} shows index construction times for all data sets and compared methods at $90\%$ recall for $k = 100$ at the optimal hyperparameters (a dash means that the algorithm either did not reach the necessary recall or ran out of memory on the data set).

\texttt{LoRANN} has slower index construction times than IVFPQfs, and similar index construction times as ScaNN. In general, \texttt{LoRANN} has faster index construction times than the graph methods:  QSG-NGT has extremely slow index construction times, HNSW has slower index construction times than \texttt{LoRANN} on all data sets except ann-t2i, and GLASS is slower on all data sets except two lower-dimensional data sets (ann-t2i and fasttext-wiki).

We note that our implementation of \texttt{LoRANN} has not been optimized with respect to index construction time, and could be improved with further sampling and more approximate SVD computations.

\begin{table}[ht!]
\begin{center}
\caption{Index construction times (seconds) at $90\%$ recall for optimal hyperparameters.}
\label{table:construction}
\begin{tabular}{l l l l l l l l }
\toprule
data set & IVFPQfs & LoRANN & ScaNN & NNDesc. & QSGNGT & Glass & HNSW \\

\midrule
ag-distilbert & 29 & 26 & 66 & 56 & 1687 & 73 & 104 \\
ag-minilm & 15 & 18 & 34 & 47 & 1763 & 46 & 203 \\
ann-arxiv & 269 & 1442 & 1160 & 3988 & 16521 & 4241 & 7421 \\
ann-t2i & 34 & 916 & 125 & 238 & 3366 & 304 & 842 \\
arxiv-instructxl & 79 & 1189 & 1259 & 1041 & 18781 & 2956 & 6779 \\
arxiv-openai & 77 & 112 & 425 & 208 & 16369 & 405 & 926 \\
fashion-mnist & 15 & 6 & 30 & 32 & 1254 & 22 & 42 \\
fasttext-wiki & 60 & 1012 & -- & 1241 & 3025 & 752 & 1414 \\
gist & 979 & 305 & 686 & 1203 & 8610 & 2258 & 6631 \\
glove & 49 & -- & 155 & 1958 & 14259 & 1728 & 4412 \\
mnist & 15 & 6 & 31 & 32 & 1249 & 22 & 43 \\
sift & 24 & 91 & 32 & 361 & 2785 & 405 & 851 \\
wiki-gte & 24 & 34 & 39 & 59 & 2060 & 61 & 138 \\
wiki-openai & 362 & 67 & 296 & 137 & 5085 & 239 & 506 \\
wolt-vit & 286 & 464 & 563 & 1103 & 8327 & 874 & 1982 \\
yandex-5M & 589 & 2444 & 612 & 5218 & -- & 2884 & 6599 \\
\bottomrule
\end{tabular}
\end{center}
\end{table}

\clearpage
\subsection{GPU evaluation}
\label{sec:nvidia-gpu}

\paragraph{NVIDIA GPU} In this section, we present the full evaluation of \texttt{LoRANN} in the GPU setting in comparison to other GPU ANN methods. For discussion, refer to Section~\ref{sec:experiments-gpu}.

\begin{figure}[htbp]
    \label{fig:gpu-experiments}
    \centering
    \subfigure{
        \includegraphics[width=0.49\textwidth]{fig/gpu/ag-news-distilbert-768-angular-batch.pdf}
    }
    \hspace{-0.4cm}
    \subfigure{
        \includegraphics[width=0.49\textwidth]{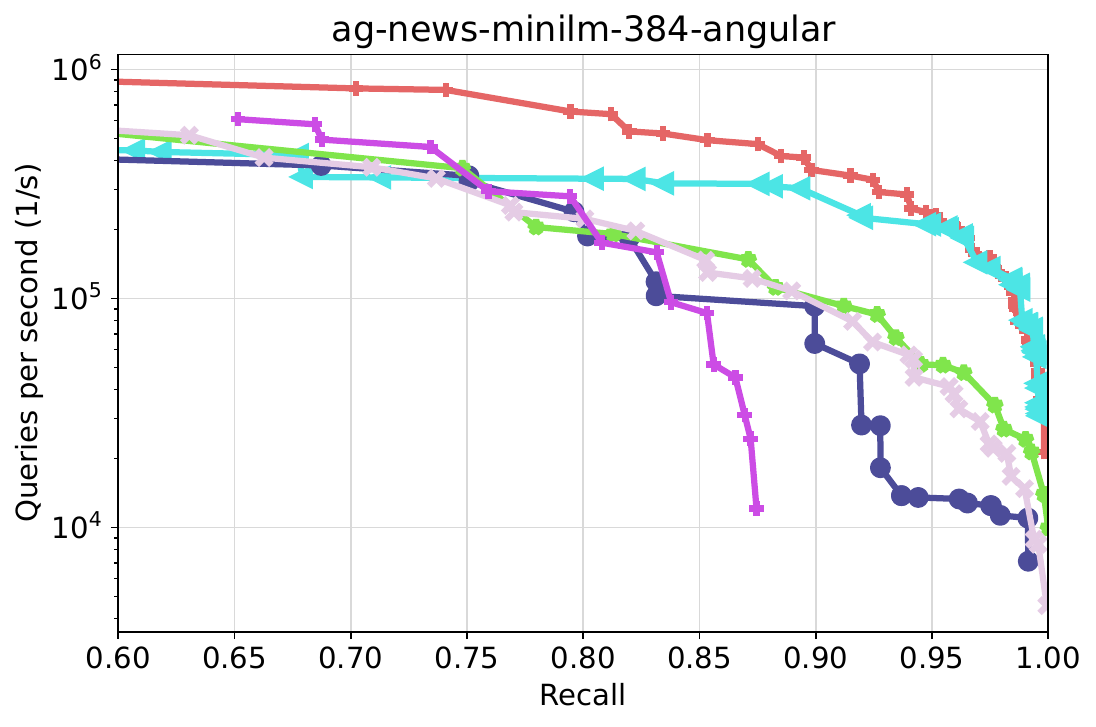}
    }

    \vspace{-0.4cm}

    \subfigure{
        \includegraphics[width=0.49\textwidth]{fig/gpu/ann-t2i-200-angular-batch.pdf}
    }
    \hspace{-0.4cm}
    \subfigure{
        \includegraphics[width=0.49\textwidth]{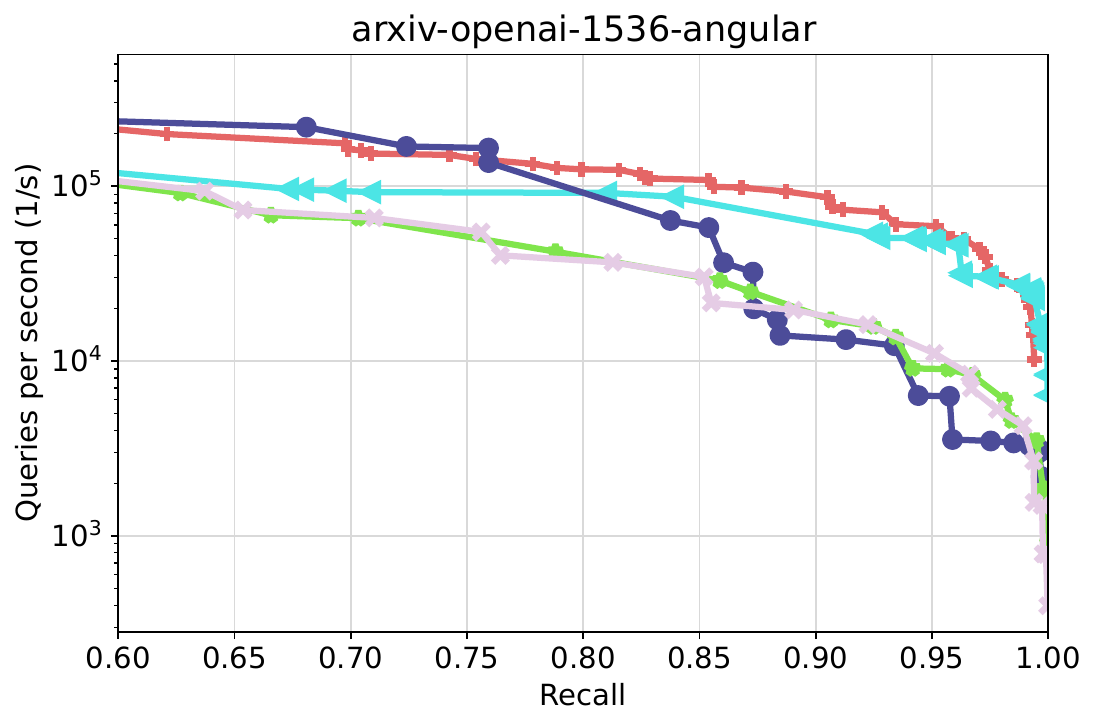}
    }

    \vspace{-0.4cm}

    \subfigure{
        \includegraphics[width=0.49\textwidth]{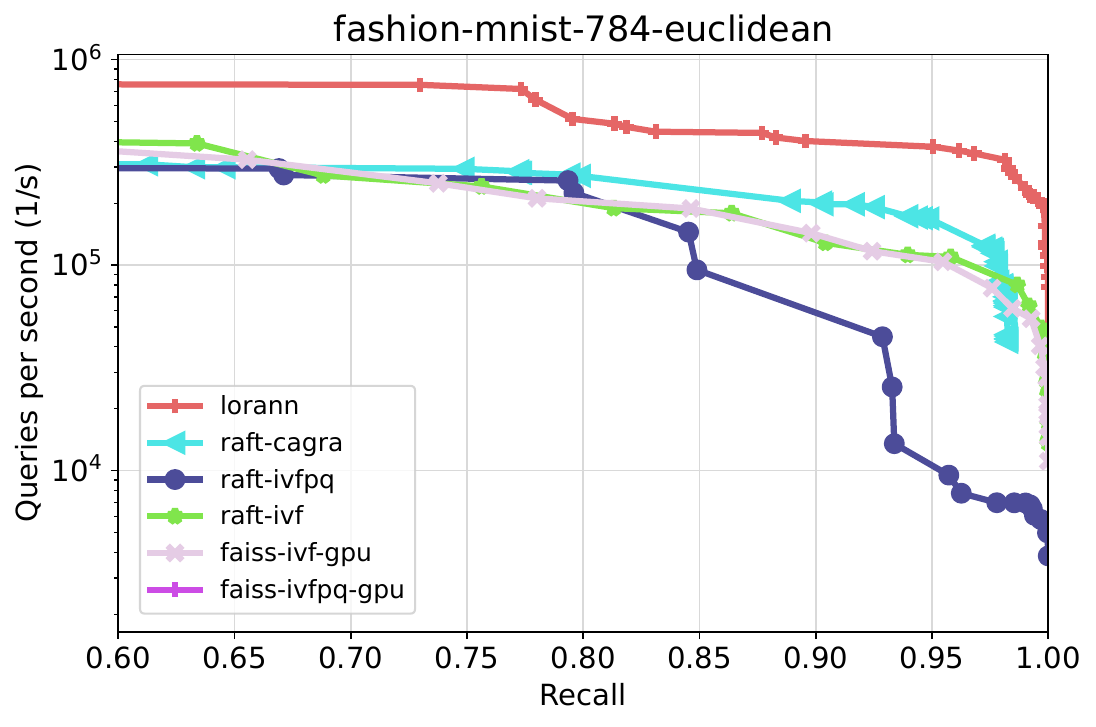}
    }
    \hspace{-0.4cm}
    \subfigure{
        \includegraphics[width=0.49\textwidth]{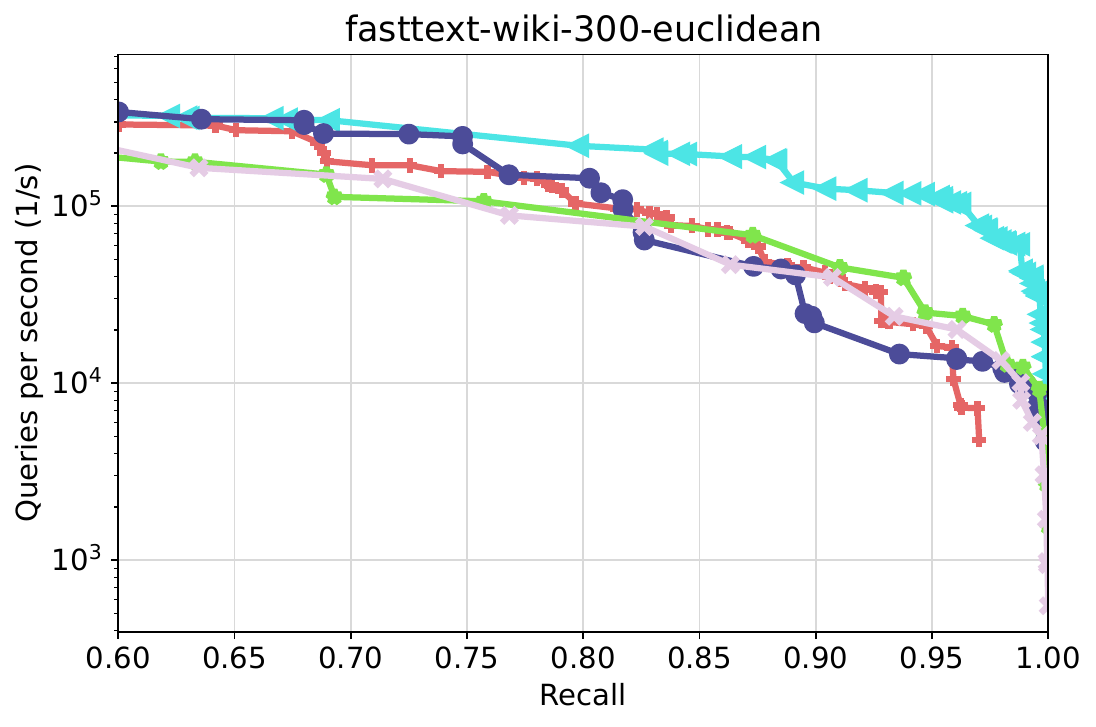}
    }

    \vspace{-0.4cm}

    \subfigure{
        \includegraphics[width=0.49\textwidth]{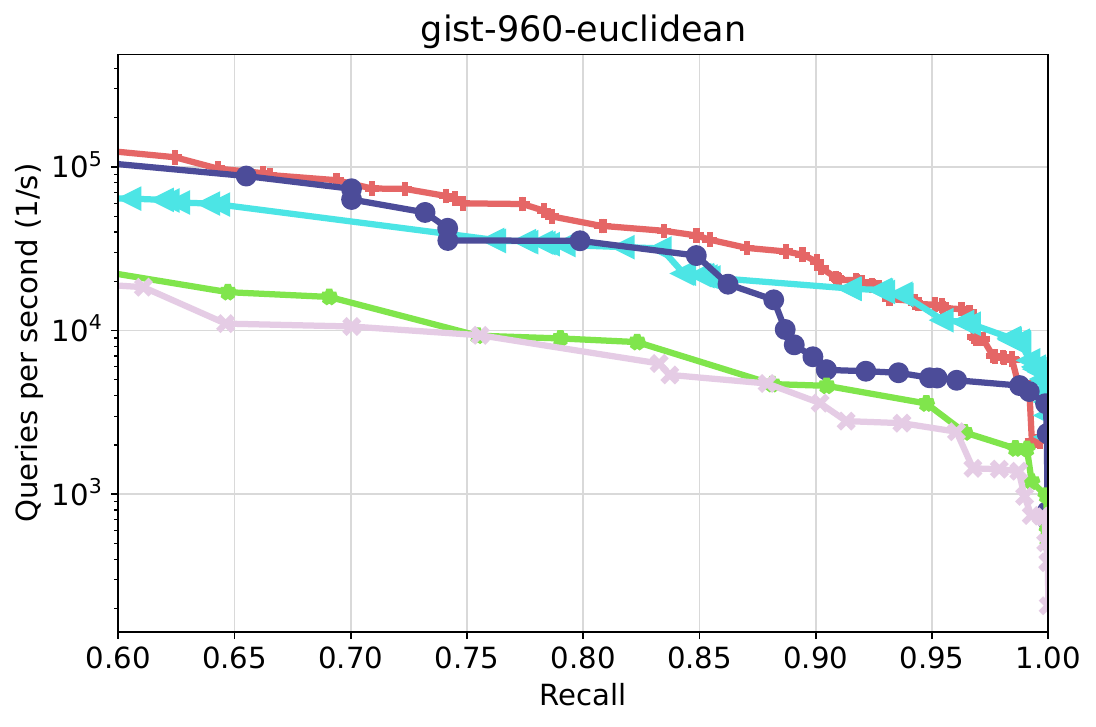}
    }
    \hspace{-0.4cm}
    \subfigure{
        \includegraphics[width=0.49\textwidth]{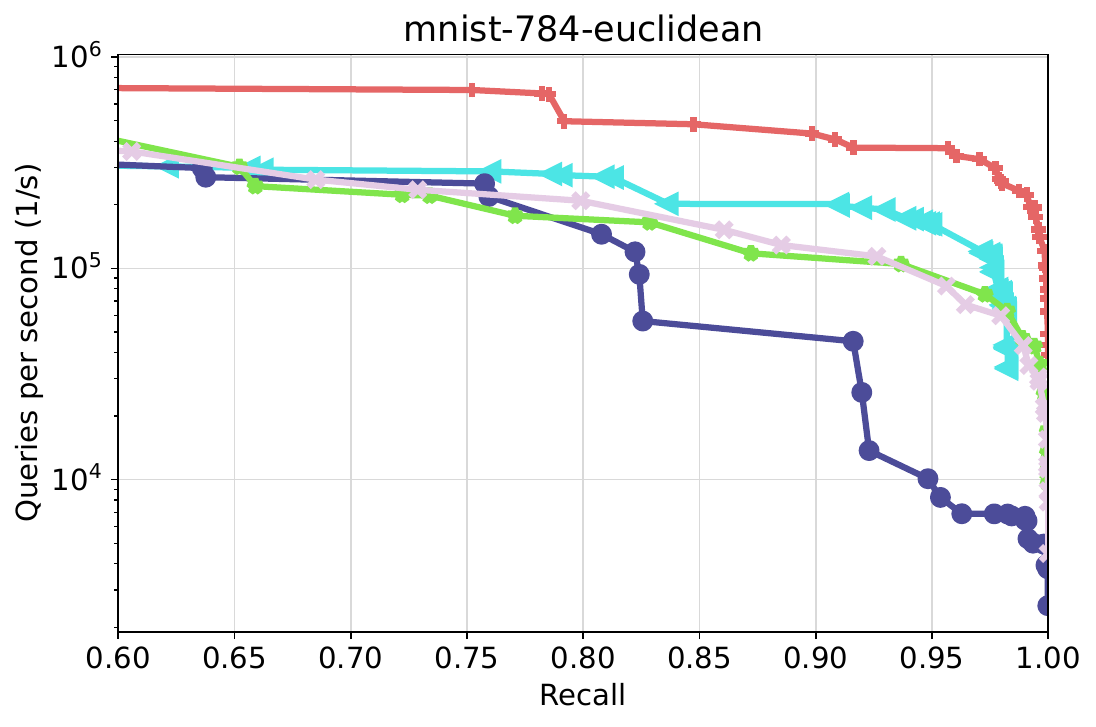}
    }
\end{figure}

\clearpage
\begin{figure}[htbp]
    \centering
    \subfigure{
        \includegraphics[width=0.49\textwidth]{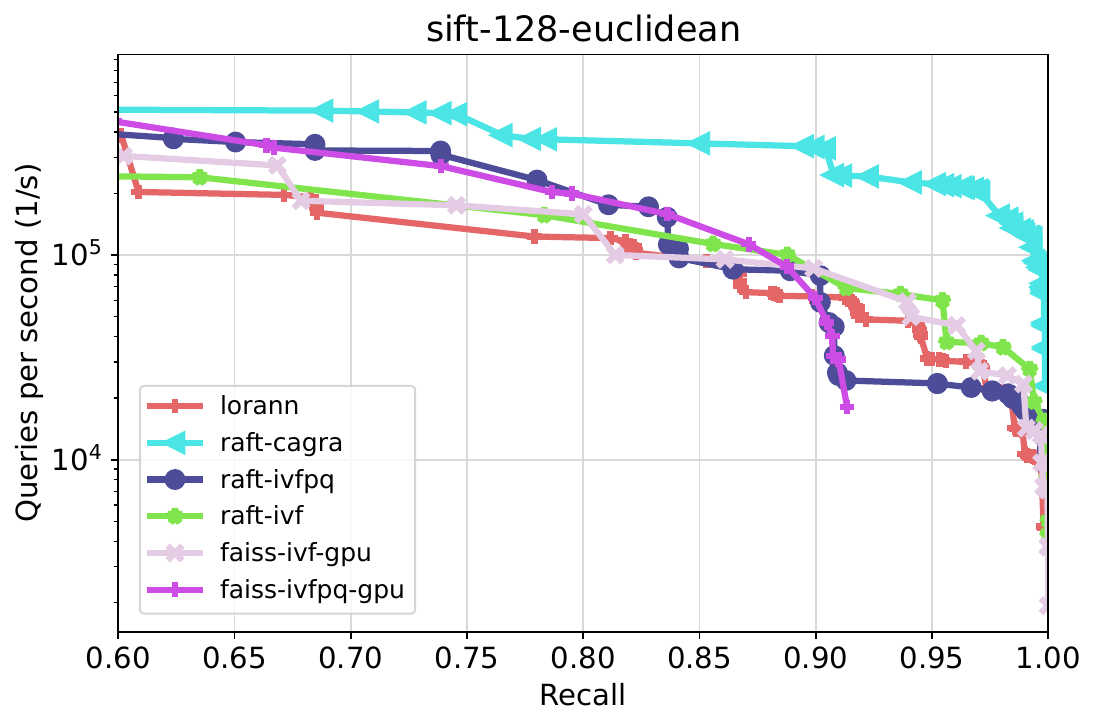}
    }
    \hspace{-0.4cm}
    \subfigure{
        \includegraphics[width=0.49\textwidth]{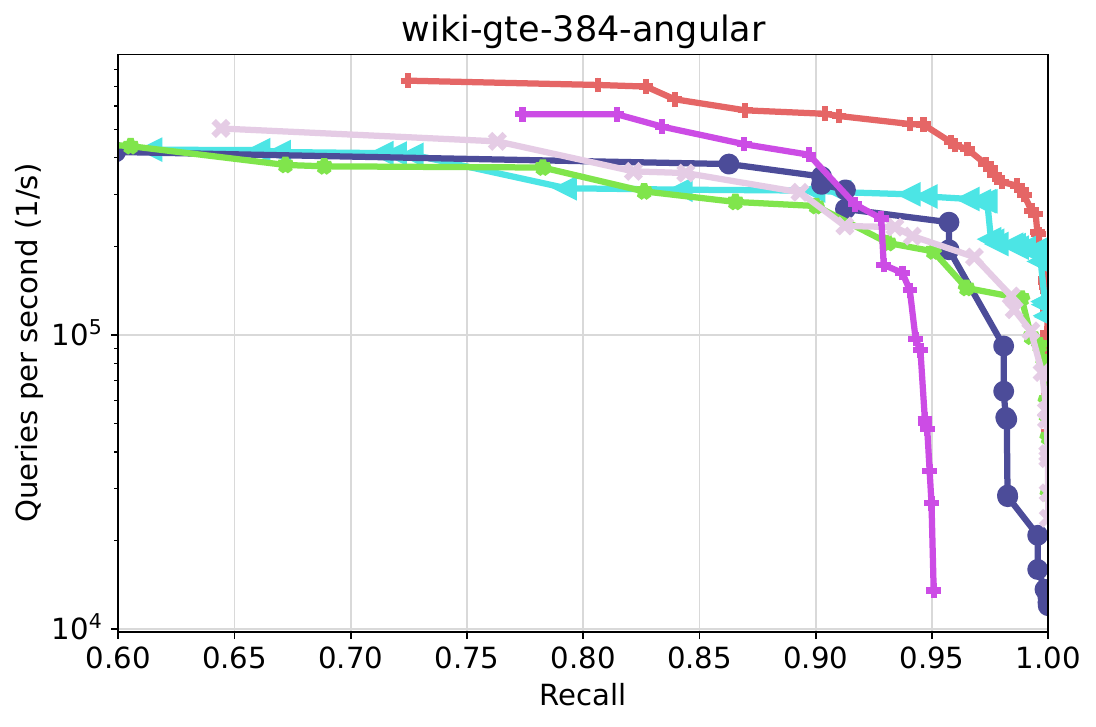}
    }

    \vspace{-0.4cm}

    \subfigure{
        \includegraphics[width=0.49\textwidth]{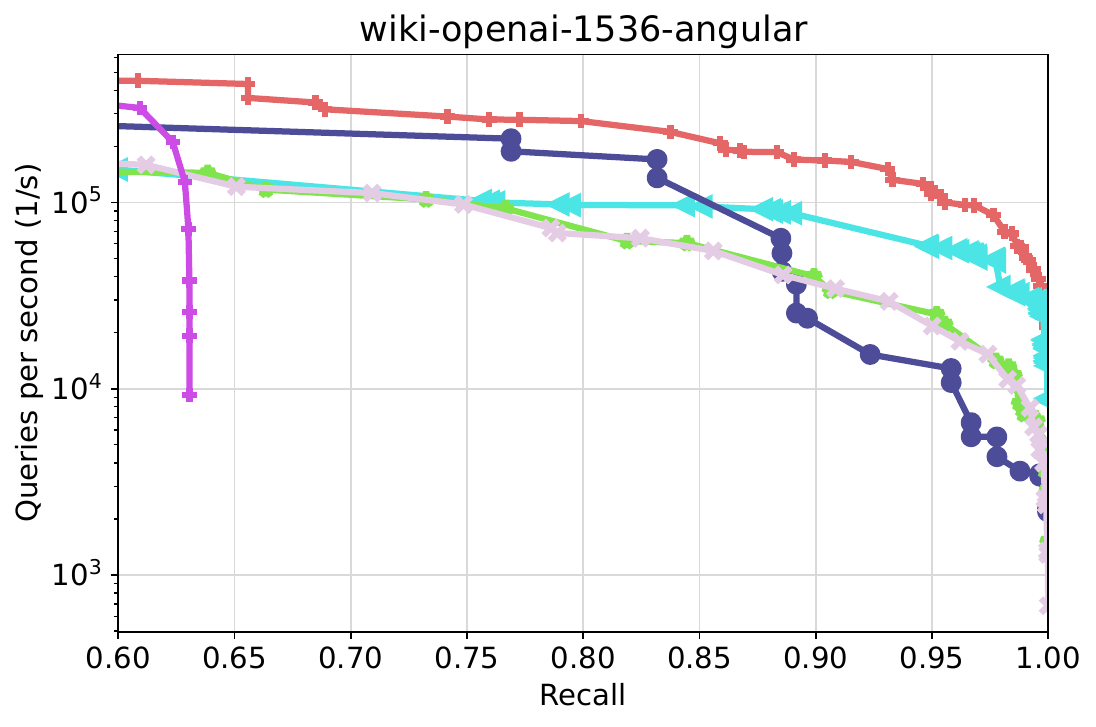}
    }
    \hspace{-0.4cm}
    \subfigure{
        \includegraphics[width=0.49\textwidth]{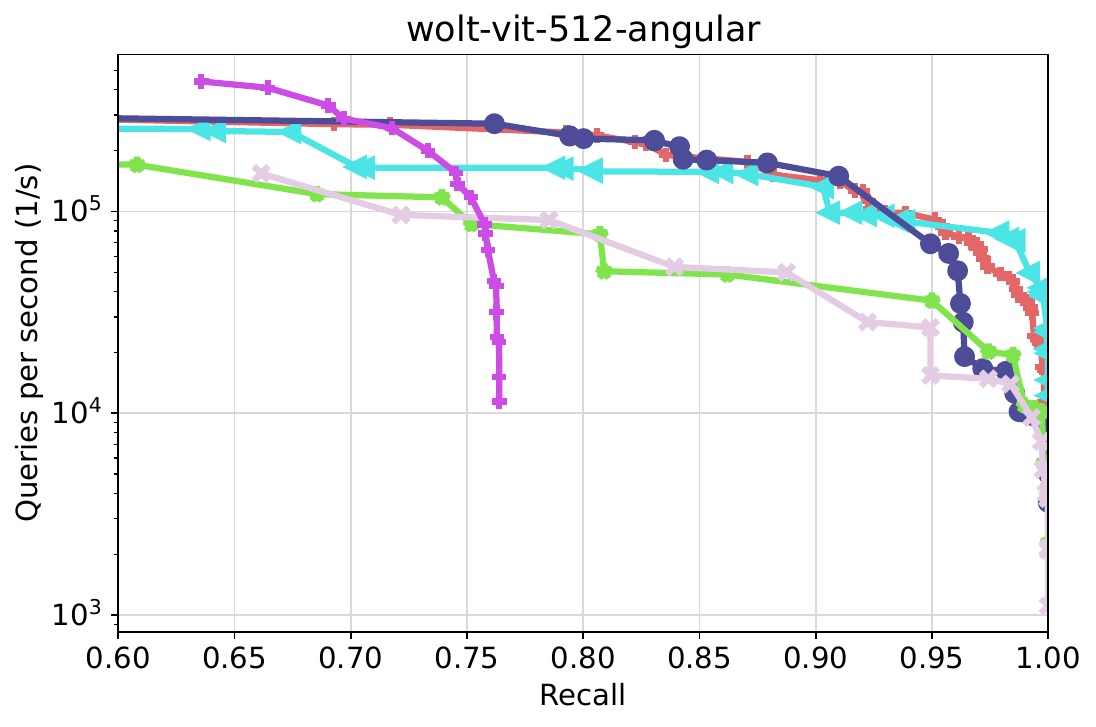}
    }
\end{figure}

\paragraph{Apple silicon}
\label{sec:mac-gpu}

In this section, we use MLX\footnote{\url{https://github.com/ml-explore/mlx}} to implement \texttt{LoRANN} for Apple silicon. We compare MLX versions (both CPU and GPU versions) of \texttt{LoRANN} against MLX versions of IVF and the C++ version of \texttt{LoRANN} (without quantization) on the Apple M2 Pro SoC. The MLX version of \texttt{LoRANN} can take advantage of the M2 Pro GPU and its unified memory architecture to achieve 2--5 times faster query latencies compared to the C++ implementation of \texttt{LoRANN}.

\begin{figure}[htbp]
    \label{fig:mac-results}
    \centering
    \subfigure{
        \includegraphics[width=0.49\textwidth]{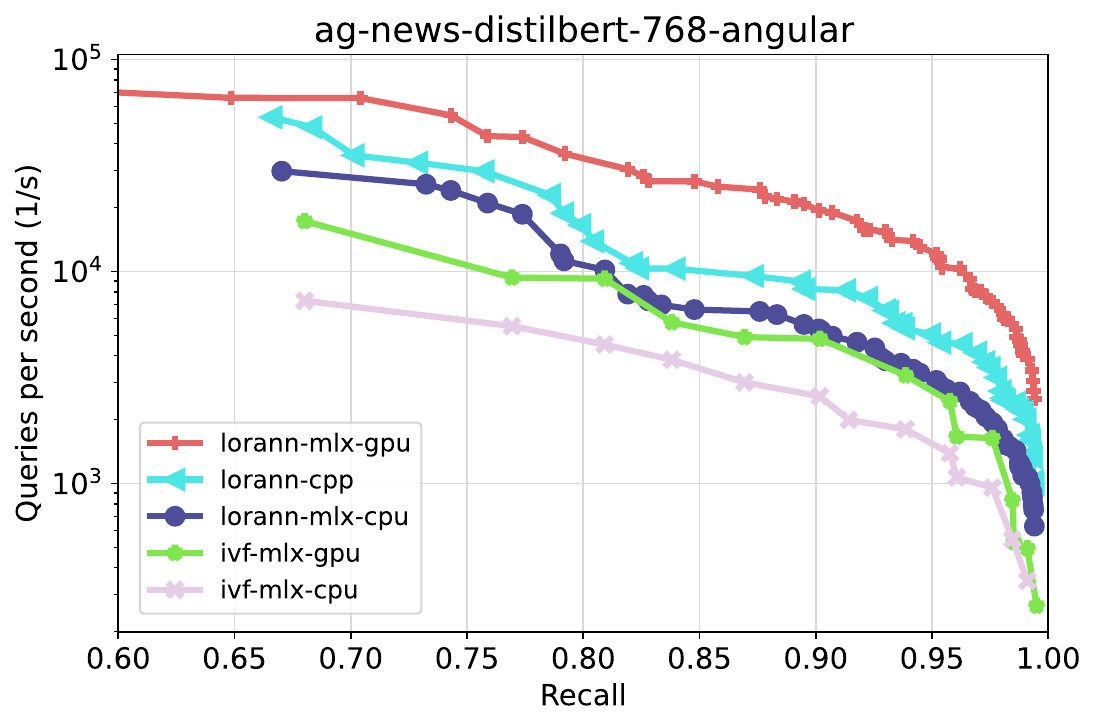}
    }
    \hspace{-0.4cm}
    \subfigure{
        \includegraphics[width=0.49\textwidth]{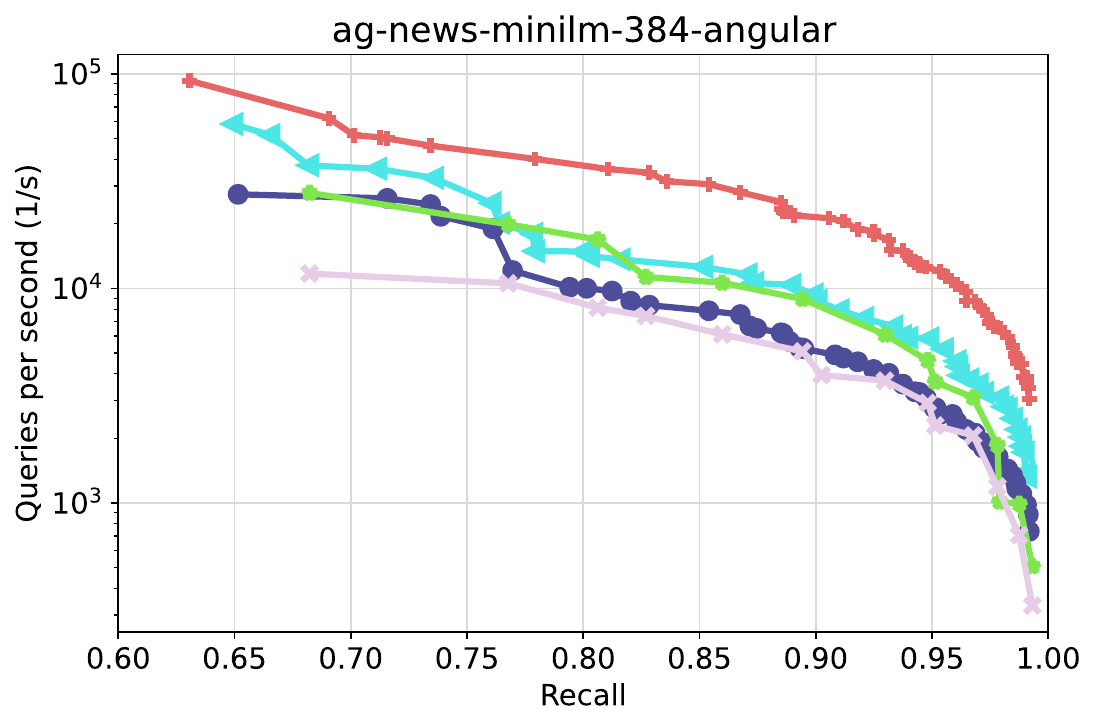}
    }

    \vspace{-0.4cm}

    \subfigure{
        \includegraphics[width=0.49\textwidth]{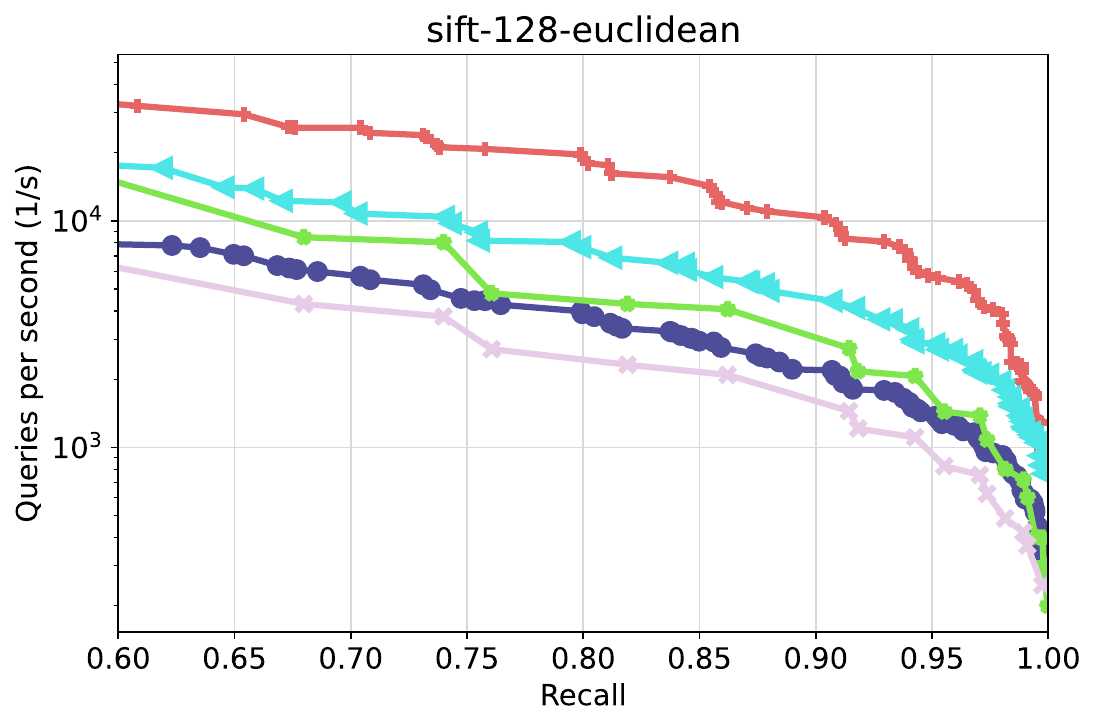}
    }
    \hspace{-0.4cm}
    \subfigure{
        \includegraphics[width=0.49\textwidth]{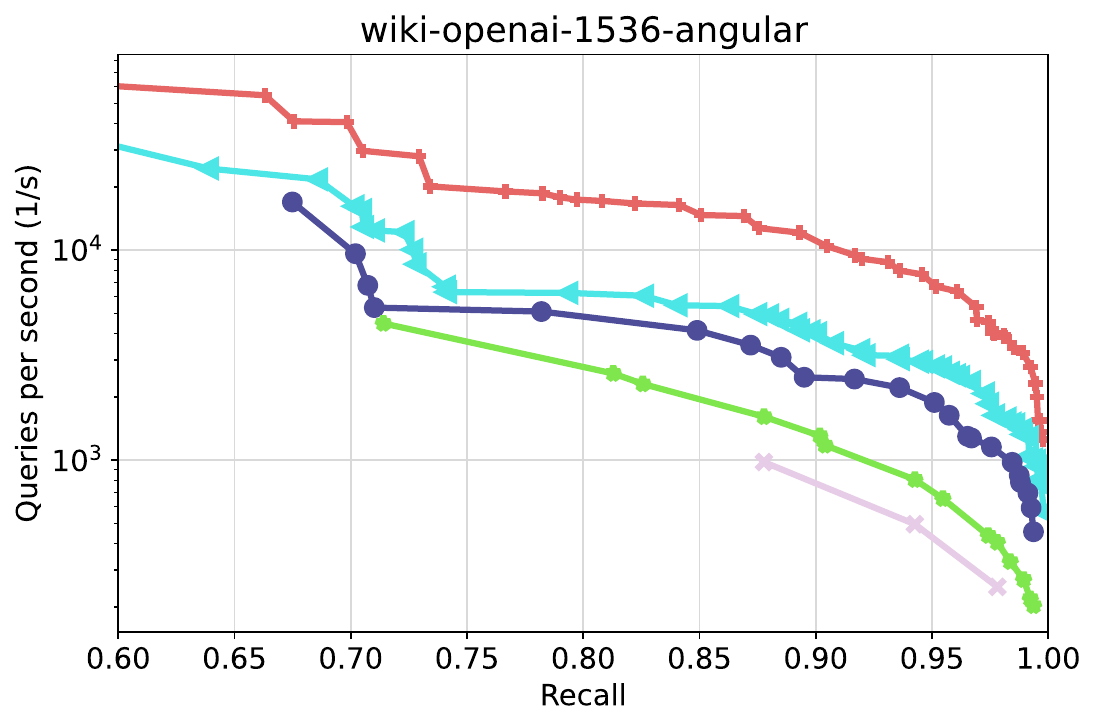}
    }
\end{figure}


\newpage
\section*{NeurIPS Paper Checklist}

\begin{enumerate}

\item {\bf Claims}
    \item[] Question: Do the main claims made in the abstract and introduction accurately reflect the paper's contributions and scope?
    \item[] Answer: \answerYes{} 
    \item[] Justification: We explicitly list the main contributions of the article at the end of Section~\ref{sec:intro} and refer to the relevant sections of the article justifying the claims being made.   
    \item[] Guidelines:
    \begin{itemize}
        \item The answer NA means that the abstract and introduction do not include the claims made in the paper.
        \item The abstract and/or introduction should clearly state the claims made, including the contributions made in the paper and important assumptions and limitations. A No or NA answer to this question will not be perceived well by the reviewers. 
        \item The claims made should match theoretical and experimental results, and reflect how much the results can be expected to generalize to other settings. 
        \item It is fine to include aspirational goals as motivation as long as it is clear that these goals are not attained by the paper. 
    \end{itemize}

\item {\bf Limitations}
    \item[] Question: Does the paper discuss the limitations of the work performed by the authors?
    \item[] Answer: \answerYes{} 
    \item[] Justification: We discuss the limitations of the work in Section~\ref{sec:conclusion}.
    \item[] Guidelines:
    \begin{itemize}
        \item The answer NA means that the paper has no limitation while the answer No means that the paper has limitations, but those are not discussed in the paper. 
        \item The authors are encouraged to create a separate "Limitations" section in their paper.
        \item The paper should point out any strong assumptions and how robust the results are to violations of these assumptions (e.g., independence assumptions, noiseless settings, model well-specification, asymptotic approximations only holding locally). The authors should reflect on how these assumptions might be violated in practice and what the implications would be.
        \item The authors should reflect on the scope of the claims made, e.g., if the approach was only tested on a few datasets or with a few runs. In general, empirical results often depend on implicit assumptions, which should be articulated.
        \item The authors should reflect on the factors that influence the performance of the approach. For example, a facial recognition algorithm may perform poorly when image resolution is low or images are taken in low lighting. Or a speech-to-text system might not be used reliably to provide closed captions for online lectures because it fails to handle technical jargon.
        \item The authors should discuss the computational efficiency of the proposed algorithms and how they scale with dataset size.
        \item If applicable, the authors should discuss possible limitations of their approach to address problems of privacy and fairness.
        \item While the authors might fear that complete honesty about limitations might be used by reviewers as grounds for rejection, a worse outcome might be that reviewers discover limitations that aren't acknowledged in the paper. The authors should use their best judgment and recognize that individual actions in favor of transparency play an important role in developing norms that preserve the integrity of the community. Reviewers will be specifically instructed to not penalize honesty concerning limitations.
    \end{itemize}

\item {\bf Theory Assumptions and Proofs}
    \item[] Question: For each theoretical result, does the paper provide the full set of assumptions and a complete (and correct) proof?
    \item[] Answer: \answerYes{} 
    \item[] Justification: The proof of Theorem~\ref{thrm:SVD} is provided.
    \item[] Guidelines:
    \begin{itemize}
        \item The answer NA means that the paper does not include theoretical results. 
        \item All the theorems, formulas, and proofs in the paper should be numbered and cross-referenced.
        \item All assumptions should be clearly stated or referenced in the statement of any theorems.
        \item The proofs can either appear in the main paper or the supplemental material, but if they appear in the supplemental material, the authors are encouraged to provide a short proof sketch to provide intuition. 
        \item Inversely, any informal proof provided in the core of the paper should be complemented by formal proofs provided in appendix or supplemental material.
        \item Theorems and Lemmas that the proof relies upon should be properly referenced. 
    \end{itemize}

    \item {\bf Experimental Result Reproducibility}
    \item[] Question: Does the paper fully disclose all the information needed to reproduce the main experimental results of the paper to the extent that it affects the main claims and/or conclusions of the paper (regardless of whether the code and data are provided or not)?
    \item[] Answer: \answerYes{} 
    \item[] Justification: We provide code implementing our method as well as code for running the benchmarks, including code to prepare all the data sets and algorithms.
    \item[] Guidelines:
    \begin{itemize}
        \item The answer NA means that the paper does not include experiments.
        \item If the paper includes experiments, a No answer to this question will not be perceived well by the reviewers: Making the paper reproducible is important, regardless of whether the code and data are provided or not.
        \item If the contribution is a dataset and/or model, the authors should describe the steps taken to make their results reproducible or verifiable. 
        \item Depending on the contribution, reproducibility can be accomplished in various ways. For example, if the contribution is a novel architecture, describing the architecture fully might suffice, or if the contribution is a specific model and empirical evaluation, it may be necessary to either make it possible for others to replicate the model with the same dataset, or provide access to the model. In general. releasing code and data is often one good way to accomplish this, but reproducibility can also be provided via detailed instructions for how to replicate the results, access to a hosted model (e.g., in the case of a large language model), releasing of a model checkpoint, or other means that are appropriate to the research performed.
        \item While NeurIPS does not require releasing code, the conference does require all submissions to provide some reasonable avenue for reproducibility, which may depend on the nature of the contribution. For example
        \begin{enumerate}
            \item If the contribution is primarily a new algorithm, the paper should make it clear how to reproduce that algorithm.
            \item If the contribution is primarily a new model architecture, the paper should describe the architecture clearly and fully.
            \item If the contribution is a new model (e.g., a large language model), then there should either be a way to access this model for reproducing the results or a way to reproduce the model (e.g., with an open-source dataset or instructions for how to construct the dataset).
            \item We recognize that reproducibility may be tricky in some cases, in which case authors are welcome to describe the particular way they provide for reproducibility. In the case of closed-source models, it may be that access to the model is limited in some way (e.g., to registered users), but it should be possible for other researchers to have some path to reproducing or verifying the results.
        \end{enumerate}
    \end{itemize}

\item {\bf Open access to data and code}
    \item[] Question: Does the paper provide open access to the data and code, with sufficient instructions to faithfully reproduce the main experimental results, as described in supplemental material?
    \item[] Answer: \answerYes{} 
    \item[] Justification: We provide code implementing our method as well as code for running the benchmarks, including code to prepare all the data sets and algorithms. The provided repository has instructions for running the experiments.
    \item[] Guidelines:
    \begin{itemize}
        \item The answer NA means that paper does not include experiments requiring code.
        \item Please see the NeurIPS code and data submission guidelines (\url{https://nips.cc/public/guides/CodeSubmissionPolicy}) for more details.
        \item While we encourage the release of code and data, we understand that this might not be possible, so “No” is an acceptable answer. Papers cannot be rejected simply for not including code, unless this is central to the contribution (e.g., for a new open-source benchmark).
        \item The instructions should contain the exact command and environment needed to run to reproduce the results. See the NeurIPS code and data submission guidelines (\url{https://nips.cc/public/guides/CodeSubmissionPolicy}) for more details.
        \item The authors should provide instructions on data access and preparation, including how to access the raw data, preprocessed data, intermediate data, and generated data, etc.
        \item The authors should provide scripts to reproduce all experimental results for the new proposed method and baselines. If only a subset of experiments are reproducible, they should state which ones are omitted from the script and why.
        \item At submission time, to preserve anonymity, the authors should release anonymized versions (if applicable).
        \item Providing as much information as possible in supplemental material (appended to the paper) is recommended, but including URLs to data and code is permitted.
    \end{itemize}

\item {\bf Experimental Setting/Details}
    \item[] Question: Does the paper specify all the training and test details (e.g., data splits, hyperparameters, how they were chosen, type of optimizer, etc.) necessary to understand the results?
    \item[] Answer: \answerYes{} 
    \item[] Justification: The experimental set-up is described in Appendix~\ref{sec:experimental setup} and the details for all data sets are given in Appendix~\ref{sec:datasets}. The experimental code including data set preparation and used hyperparameters for all methods is available at \url{https://github.com/ejaasaari/lorann-experiments}.
    \item[] Guidelines:
    \begin{itemize}
        \item The answer NA means that the paper does not include experiments.
        \item The experimental setting should be presented in the core of the paper to a level of detail that is necessary to appreciate the results and make sense of them.
        \item The full details can be provided either with the code, in appendix, or as supplemental material.
    \end{itemize}

\item {\bf Experiment Statistical Significance}
    \item[] Question: Does the paper report error bars suitably and correctly defined or other appropriate information about the statistical significance of the experiments?
    \item[] Answer: \answerNo{} 
    \item[] Justification: It is not customary to report error bars or test the statistical significance in the field of ANN search because it would be computationally expensive. We perform the experiments using the ANN-benchmarks framework \citep{aumuller2020ann} which is a de facto standard in the field. 
    \item[] Guidelines:
    \begin{itemize}
        \item The answer NA means that the paper does not include experiments.
        \item The authors should answer "Yes" if the results are accompanied by error bars, confidence intervals, or statistical significance tests, at least for the experiments that support the main claims of the paper.
        \item The factors of variability that the error bars are capturing should be clearly stated (for example, train/test split, initialization, random drawing of some parameter, or overall run with given experimental conditions).
        \item The method for calculating the error bars should be explained (closed form formula, call to a library function, bootstrap, etc.)
        \item The assumptions made should be given (e.g., Normally distributed errors).
        \item It should be clear whether the error bar is the standard deviation or the standard error of the mean.
        \item It is OK to report 1-sigma error bars, but one should state it. The authors should preferably report a 2-sigma error bar than state that they have a 96\% CI, if the hypothesis of Normality of errors is not verified.
        \item For asymmetric distributions, the authors should be careful not to show in tables or figures symmetric error bars that would yield results that are out of range (e.g. negative error rates).
        \item If error bars are reported in tables or plots, The authors should explain in the text how they were calculated and reference the corresponding figures or tables in the text.
    \end{itemize}

\item {\bf Experiments Compute Resources}
    \item[] Question: For each experiment, does the paper provide sufficient information on the computer resources (type of compute workers, memory, time of execution) needed to reproduce the experiments?
    \item[] Answer: \answerYes{} 
    \item[] Justification: We mention the exact AWS instances used in the experiments in Appendix~\ref{sec:experimental setup}.
    \item[] Guidelines:
    \begin{itemize}
        \item The answer NA means that the paper does not include experiments.
        \item The paper should indicate the type of compute workers CPU or GPU, internal cluster, or cloud provider, including relevant memory and storage.
        \item The paper should provide the amount of compute required for each of the individual experimental runs as well as estimate the total compute. 
        \item The paper should disclose whether the full research project required more compute than the experiments reported in the paper (e.g., preliminary or failed experiments that didn't make it into the paper). 
    \end{itemize}
    
\item {\bf Code Of Ethics}
    \item[] Question: Does the research conducted in the paper conform, in every respect, with the NeurIPS Code of Ethics \url{https://neurips.cc/public/EthicsGuidelines}?
    \item[] Answer: \answerYes{} 
    \item[] Justification: We checked that our work presented in the article conforms with the NeurIPS code of Ethics. In particular, we checked that the data sets used were not deprecated and had a permissive license (see our answer to Question 12), and we disclose the elements of reproducibility (see our answer to Question 4).
    \item[] Guidelines:
    \begin{itemize}
        \item The answer NA means that the authors have not reviewed the NeurIPS Code of Ethics.
        \item If the authors answer No, they should explain the special circumstances that require a deviation from the Code of Ethics.
        \item The authors should make sure to preserve anonymity (e.g., if there is a special consideration due to laws or regulations in their jurisdiction).
    \end{itemize}

\item {\bf Broader Impacts}
    \item[] Question: Does the paper discuss both potential positive societal impacts and negative societal impacts of the work performed?
    \item[] Answer: \answerNA{} 
    \item[] Justification: The article is foundational research as ANN search is a primitive that is used as a component in machine learning pipelines. Thus, our contribution of making ANN search more efficient has no direct societal impact.
    \item[] Guidelines:
    \begin{itemize}
        \item The answer NA means that there is no societal impact of the work performed.
        \item If the authors answer NA or No, they should explain why their work has no societal impact or why the paper does not address societal impact.
        \item Examples of negative societal impacts include potential malicious or unintended uses (e.g., disinformation, generating fake profiles, surveillance), fairness considerations (e.g., deployment of technologies that could make decisions that unfairly impact specific groups), privacy considerations, and security considerations.
        \item The conference expects that many papers will be foundational research and not tied to particular applications, let alone deployments. However, if there is a direct path to any negative applications, the authors should point it out. For example, it is legitimate to point out that an improvement in the quality of generative models could be used to generate deepfakes for disinformation. On the other hand, it is not needed to point out that a generic algorithm for optimizing neural networks could enable people to train models that generate Deepfakes faster.
        \item The authors should consider possible harms that could arise when the technology is being used as intended and functioning correctly, harms that could arise when the technology is being used as intended but gives incorrect results, and harms following from (intentional or unintentional) misuse of the technology.
        \item If there are negative societal impacts, the authors could also discuss possible mitigation strategies (e.g., gated release of models, providing defenses in addition to attacks, mechanisms for monitoring misuse, mechanisms to monitor how a system learns from feedback over time, improving the efficiency and accessibility of ML).
    \end{itemize}
    
\item {\bf Safeguards}
    \item[] Question: Does the paper describe safeguards that have been put in place for responsible release of data or models that have a high risk for misuse (e.g., pretrained language models, image generators, or scraped datasets)?
    \item[] Answer: \answerNA{} 
    \item[] Justification: The article is foundational research, so there are no elements that have a high risk for misuse.
    \item[] Guidelines:
    \begin{itemize}
        \item The answer NA means that the paper poses no such risks.
        \item Released models that have a high risk for misuse or dual-use should be released with necessary safeguards to allow for controlled use of the model, for example by requiring that users adhere to usage guidelines or restrictions to access the model or implementing safety filters. 
        \item Datasets that have been scraped from the Internet could pose safety risks. The authors should describe how they avoided releasing unsafe images.
        \item We recognize that providing effective safeguards is challenging, and many papers do not require this, but we encourage authors to take this into account and make a best faith effort.
    \end{itemize}

\item {\bf Licenses for existing assets}
    \item[] Question: Are the creators or original owners of assets (e.g., code, data, models), used in the paper, properly credited and are the license and terms of use explicitly mentioned and properly respected?
    \item[] Answer: \answerYes{} 
    \item[] Justification: We use data sets with open licenses and our benchmarking code downloads the used data sets from their original sources; we also list all used data sets in Appendix~\ref{sec:datasets} along with their licenses and links. Our benchmarking code is a fork of the ANN-benchmarks project which is licensed under the MIT license.
    \item[] Guidelines:
    \begin{itemize}
        \item The answer NA means that the paper does not use existing assets.
        \item The authors should cite the original paper that produced the code package or dataset.
        \item The authors should state which version of the asset is used and, if possible, include a URL.
        \item The name of the license (e.g., CC-BY 4.0) should be included for each asset.
        \item For scraped data from a particular source (e.g., website), the copyright and terms of service of that source should be provided.
        \item If assets are released, the license, copyright information, and terms of use in the package should be provided. For popular datasets, \url{paperswithcode.com/datasets} has curated licenses for some datasets. Their licensing guide can help determine the license of a dataset.
        \item For existing datasets that are re-packaged, both the original license and the license of the derived asset (if it has changed) should be provided.
        \item If this information is not available online, the authors are encouraged to reach out to the asset's creators.
    \end{itemize}

\item {\bf New Assets}
    \item[] Question: Are new assets introduced in the paper well documented and is the documentation provided alongside the assets?
    \item[] Answer: \answerYes{} 
    \item[] Justification: We provide documentation for both \texttt{LoRANN} and our experiments in their respective repositories.
    \item[] Guidelines:
    \begin{itemize}
        \item The answer NA means that the paper does not release new assets.
        \item Researchers should communicate the details of the dataset/code/model as part of their submissions via structured templates. This includes details about training, license, limitations, etc. 
        \item The paper should discuss whether and how consent was obtained from people whose asset is used.
        \item At submission time, remember to anonymize your assets (if applicable). You can either create an anonymized URL or include an anonymized zip file.
    \end{itemize}

\item {\bf Crowdsourcing and Research with Human Subjects}
    \item[] Question: For crowdsourcing experiments and research with human subjects, does the paper include the full text of instructions given to participants and screenshots, if applicable, as well as details about compensation (if any)? 
    \item[] Answer: \answerNA{} 
    \item[] Justification: The article does not involve crowdsourcing nor research with human subjects.
    \item[] Guidelines:
    \begin{itemize}
        \item The answer NA means that the paper does not involve crowdsourcing nor research with human subjects.
        \item Including this information in the supplemental material is fine, but if the main contribution of the paper involves human subjects, then as much detail as possible should be included in the main paper. 
        \item According to the NeurIPS Code of Ethics, workers involved in data collection, curation, or other labor should be paid at least the minimum wage in the country of the data collector. 
    \end{itemize}

\item {\bf Institutional Review Board (IRB) Approvals or Equivalent for Research with Human Subjects}
    \item[] Question: Does the paper describe potential risks incurred by study participants, whether such risks were disclosed to the subjects, and whether Institutional Review Board (IRB) approvals (or an equivalent approval/review based on the requirements of your country or institution) were obtained?
    \item[] Answer: \answerNA{} 
    \item[] Justification: The article does not involve crowdsourcing or research with human subjects.
    \item[] Guidelines:
    \begin{itemize}
        \item The answer NA means that the paper does not involve crowdsourcing nor research with human subjects.
        \item Depending on the country in which research is conducted, IRB approval (or equivalent) may be required for any human subjects research. If you obtained IRB approval, you should clearly state this in the paper. 
        \item We recognize that the procedures for this may vary significantly between institutions and locations, and we expect authors to adhere to the NeurIPS Code of Ethics and the guidelines for their institution. 
        \item For initial submissions, do not include any information that would break anonymity (if applicable), such as the institution conducting the review.
    \end{itemize}

\end{enumerate}

\end{document}